\documentclass{article}

\usepackage{iclr2026_conference,times}

\usepackage{hyperref}
\usepackage{url}
\usepackage{booktabs}
\usepackage{amsfonts}
\usepackage{xcolor}

\usepackage{graphicx}
\usepackage{amsmath}
\usepackage{amssymb}
\usepackage{amsthm}
\usepackage{algorithm}
\usepackage{algorithmic}
\usepackage{arydshln}
\usepackage{colortbl}
\usepackage{subcaption}
\usepackage{float}
\definecolor{ourbg}{RGB}{232,243,255}
\definecolor{cperfect}{HTML}{1DB954}
\definecolor{cinform}{HTML}{2E8BF0}
\definecolor{cmislead}{HTML}{F04848}
\definecolor{gold}{HTML}{FFD700}
\definecolor{silver}{HTML}{90B0C8}
\definecolor{bronze}{HTML}{E08040}
\usepackage{tcolorbox}
\tcbuselibrary{listings,skins}
\usepackage{listings}
\newtcolorbox{promptbox}[1][]{
  colback=black!3,
  colframe=black!50,
  fonttitle=\bfseries\small,
  title={#1},
  boxrule=0.5pt,
  arc=2pt,
  left=4pt, right=4pt, top=2pt, bottom=2pt,
  before skip=4pt, after skip=4pt,
}
\lstdefinestyle{prompt}{
  basicstyle=\footnotesize\ttfamily\raggedright,
  breaklines=true,
  breakatwhitespace=false,
  columns=flexible,
  keepspaces=true,
  escapeinside={<@}{@>},
  aboveskip=0pt, belowskip=0pt,
  xleftmargin=0pt, xrightmargin=0pt,
  breakindent=0pt,
  postbreak={},
}

\newtheorem{theorem}{Theorem}
\newtheorem{lemma}[theorem]{Lemma}
\newtheorem{proposition}[theorem]{Proposition}
\newtheorem{corollary}[theorem]{Corollary}
\newtheorem{definition}[theorem]{Definition}
\newtheorem{assumption}[theorem]{Assumption}
\newtheorem*{theorem*}{Theorem}
\newtheorem*{proposition*}{Proposition}
\newtheorem*{corollary*}{Corollary}
\theoremstyle{remark}

\newcommand{\R}{\mathbb{R}}
\newcommand{\E}{\mathbb{E}}

\newcommand{\passk}{\text{Pass@}k}
\newcommand{\looauc}{\mathrm{LOO\text{-}AUC}}
\DeclareMathOperator{\AUC}{AUC}
\DeclareMathOperator{\rank}{rank}

\newcommand{\up}[1]{{\textcolor{teal}{\textsubscript{+#1}}}}
\newcommand{\dn}[1]{{\textcolor{red!70!black}{\textsubscript{-#1}}}}
\newcommand{\gc}[1]{\textcolor{black!50}{#1}}
 
\title{ACES: Who Tests the Tests? Leave-One-Out\\AUC Consistency for Code Generation}

\author{%
\textbf{Hui Sun$^{1,2,*}$ \quad Yun-Ji Zhang$^{1,2,*}$ \quad
Zheng Xie$^1$ \quad Ren-Biao Liu$^{1,2}$}\\
\textbf{Yali Du$^{1,2}$ \quad Xin-Ye Li$^{1,2}$ \quad Ming Li$^{1,2,\dagger}$}\\
$^1$National Key Laboratory for Novel Software Technology, Nanjing University, China\\
$^2$School of Artificial Intelligence, Nanjing University, China\\
{\small\texttt{\{sunh, zhangyunji, xiez, liurb, duyl, lixy, lim\}@lamda.nju.edu.cn}}\\
$^*$Equal contribution. \quad $^\dagger$Corresponding author.
}

\iclrfinalcopy

\hypersetup{
  pdftitle={ACES: Who Tests the Tests? Leave-One-Out AUC Consistency for Code Generation},
  pdfauthor={Hui Sun, Yun-Ji Zhang, Zheng Xie, Ren-Biao Liu, Yali Du, Xin-Ye Li, Ming Li},
}

\begin{document}

\maketitle
\pagestyle{plain}
\thispagestyle{plain}

\begin{abstract}
Selecting LLM-generated code candidates using LLM-generated tests is challenging because the tests themselves may be incorrect.
Existing methods treat all tests equally or use ad-hoc heuristics to filter unreliable tests.
Yet determining test correctness requires knowing which codes are correct, creating a \emph{circular dependency}.
Our key insight is that we need not determine test correctness at all: \emph{test votes should rank, not merely count}.
What matters is not how many codes pass a test, but whether it can \emph{distinguish} correct from incorrect code.
We break the circular dependency via leave-one-out evaluation: hold out one test, rank codes by their aggregate scores on all remaining tests, and measure whether the held-out test's pass/fail pattern agrees with this ranking.
We formalize this agreement as the leave-one-out AUC~(LOO-AUC) and prove that the expected LOO-AUC is proportional to each test's ability to separate correct code from incorrect code.
Building on this, we propose \textbf{ACES}~(\textbf{A}UC \textbf{C}onsist\textbf{E}ncy \textbf{S}coring) with two complementary variants: ACES-C provides closed-form weights that provably approximate the oracle in expectation under a mild assumption on average test quality; ACES-O drops this assumption and iteratively optimizes a differentiable LOO-AUC objective.
Both operate solely on the binary pass matrix with negligible overhead, and achieve state-of-the-art Pass@$k$ on multiple code generation benchmarks.
\end{abstract}
\section{Introduction}
\label{sec:intro}

Large language models~(LLMs) have demonstrated strong capabilities in code generation~\citep{chen2021evaluating, guo2024deepseek, pmlr-v267-liu25ah, cast_tse}, yet individual generations are not always correct.
A promising strategy is to scale test-time computation~\citep{snell2025scaling, brown2024large, wu2024inference}: generate many candidate solutions together with test cases, and select the best candidates based on test execution~\citep{li2022alphacode, chen2022codet}.
The central challenge is that \emph{neither the code nor the tests are guaranteed to be correct}: we need reliable tests to judge code quality, and reliable code to judge test quality, but have neither.

Existing methods either treat all tests equally via majority voting or rely on heuristics: CodeT~\citep{chen2022codet} scores consensus sets by size without weighting individual tests, while MBR-exec (Minimum Bayes Risk)~\citep{shi2022natural} and SRank~\citep{to2024functional} require pairwise output comparison beyond the binary pass matrix.
More heavyweight approaches co-evolve code and tests via reinforcement learning~\citep{wang2025cure} or evolutionary search~\citep{11098743}, requiring substantially more computation.
Yet the \emph{circular dependency} between code and test quality remains: to our knowledge, no existing method offers formal guarantees for identifying reliable tests without knowing which codes are correct.

What should such a criterion measure? We observe that code selection is fundamentally a ranking problem: we must order candidates so that correct solutions appear near the top.
Our key insight is that, for this ranking task, a test's value lies not in its correctness but in its ability to \emph{distinguish} correct code from incorrect code: a trivially correct test that all codes pass offers no ranking signal, while a demanding test that separates candidates is valuable even if imperfect.
\emph{Test votes should rank, not merely count.}
Under uniform counting, however, this distinction is lost: easy tests that most codes pass dilute the ranking signal, while tests that favor incorrect codes actively corrupt it.

How can we measure a test's ability to distinguish codes without knowing which codes are correct?
We break the circular dependency via \emph{leave-one-out evaluation among tests themselves}.
Hold out a single test from the generated test set; the remaining tests, aggregated via any reasonable weighting, induce a ranking of the code candidates.
If codes ranked highly by the other tests also tend to pass the held-out test, then the test is informative.
If it \emph{contradicts} the ranking, it is misleading.
If it is uncorrelated, it is uninformative.
This evaluation requires no knowledge of code correctness; it exploits only the internal structure of the pass matrix.

We formalize this as the leave-one-out AUC~(LOO-AUC): the area under the ROC curve with the remaining tests' aggregate scores as predictions and each held-out test's pass/fail column as the label.
A test contributes to ranking to the extent that correct codes are more likely to pass it than incorrect codes; we call this difference the test's \emph{discriminative power}.
This quantity is latent, since code correctness is unknown.
Yet LOO-AUC requires no knowledge of code correctness.
We prove that each test's expected LOO-AUC is proportional to its discriminative power, with a coefficient that depends on the test's pass-rate variance and the ranking quality of the remaining tests~(Theorem~\ref{lem:loo-identity}).
This provides the theoretical basis for principled non-uniform test weighting.

Building on this, we propose \emph{ACES}~(AUC Consistency Scoring) with two complementary variants.
\emph{ACES-C} applies the pass-rate correction in closed form under uniform weighting, provably approximating the oracle-optimal test weights in expectation under a mild assumption on average test quality~(Theorem~\ref{thm:main}).
\emph{ACES-O} instead iteratively optimizes the test weights through a differentiable LOO-AUC objective, without requiring this assumption.
Both operate solely on the binary pass matrix with negligible overhead.
The two are complementary: the average-quality assumption holds for the majority of tasks, and in this regime ACES-C's one-shot correction is near-optimal; for more challenging tasks where it fails, ACES-O's iterative optimization remains effective.

Our main contributions are as follows:
\begin{itemize}
  \item \textbf{Theoretical foundation.} We introduce the LOO-AUC identity~(Theorem~\ref{lem:loo-identity}), linking each test's observable consistency with the ranking to its latent discriminative power. To our knowledge, this is the first provable criterion for distinguishing informative from misleading tests using only the binary pass matrix.
  \item \textbf{Algorithms.} Building on this identity, we propose two lightweight algorithms. ACES-C provides closed-form weights provably approximating the oracle-optimal weights in expectation~(Theorem~\ref{thm:main}); ACES-O iteratively optimizes test weights via a differentiable LOO-AUC objective without requiring the average-quality assumption.
  \item \textbf{Empirical results.} We show that ACES advances the state-of-the-art in Pass@$k$ on multiple benchmarks, with ACES-O leading as a standalone method and ACES-C excelling as a plug-and-play execution-only scorer when combined with other pre-filtering mechanisms.
\end{itemize}
\section{Theoretical Foundations}
\label{sec:prelim}

We formalize code ranking as a weighted voting problem over the pass matrix and develop the theoretical tools that motivate ACES.

\subsection{Problem Setup}
\label{sec:setup}

Given a programming problem, an LLM generates $n$ candidate solutions $C = \{c_1, \ldots, c_n\}$ and $m$ test cases $T = \{t_1, \ldots, t_m\}$.
Executing every candidate on every test yields the \emph{pass matrix}
\begin{equation}\label{eq:pass-matrix}
  B \in \{0,1\}^{n \times m}, \qquad B_{ij} = \mathbf{1}[c_i \text{ passes } t_j].
\end{equation}
Each code has an unknown correctness label $y_i \in \{0,1\}$; we write $n^+ = |\{i : y_i = 1\}|$ and $n^- = n - n^+$.
Test quality is also unknown: some tests may have incorrect expected outputs.
Since test reliability varies, we parameterize code ranking by a weight vector over tests:
$w \in \Delta^m = \{w \in \R^m_+ : \sum_j w_j = 1\}$ induces scores $s_i(w) = \sum_{j=1}^m w_j B_{ij}$.
Codes are ranked by $s_i(w)$ in descending order~(ties broken uniformly at random).
We evaluate with $\passk$, the probability that at least one correct code appears in the top~$k$:
\begin{equation}\label{eq:passk}
  \passk(w) = P\!\left(\min_{i:\, y_i = 1} \rank(c_i) \leq k\right).
\end{equation}
The simplest baseline, \emph{majority voting}, sets $w_{\mathrm{unif}} = (1/m, \ldots, 1/m)$, ranking codes by total passes.

\textbf{Pairwise analysis.}
To understand how $w$ affects ranking, consider a correct code $c^+$ and an incorrect code $c^-$.
Each test $t_j$ casts a \emph{vote} $h_j = B_{c^+,j} - B_{c^-,j} \in \{+1, 0, -1\}$ on their ordering~(Table~\ref{tab:noise}), and the score difference $s_{c^+}(w) - s_{c^-}(w) = \sum_j w_j h_j$ is the weighted sum of these votes: code ranking is a \emph{weighted voting problem}.
Averaging over all such pairs, the probability that $w$ places the correct code above the incorrect one defines the \emph{true AUC}:
\begin{equation}\label{eq:auc-def}
  A(w) \;=\; P(s_{c^+}(w) > s_{c^-}(w)) \;+\; \frac{1}{2}\,P(s_{c^+}(w) = s_{c^-}(w)).
\end{equation}
Since $A(w)$ depends on unknown code labels, it cannot be computed directly and appears only in the analysis.
Under uniform weighting, misleading tests contribute as much as informative ones, actively lowering $A(w)$ and corrupting the ranking.
Constant columns of $B$~(all codes agree) yield only uninformative votes and are removed in preprocessing; we assume this throughout.

\subsection{Discriminative Power and Pass@k Bound}
\label{sec:framework}

Codes and tests are typically sampled independently from the LLM.
Conditioned on its correctness label $y_i$, each code's test outcomes are identically distributed and independent across tests.
This conditional independence is standard when codes and tests are sampled independently from the LLM, and is shared by classical item response theory.

\begin{definition}[Discriminative Power]\label{def:model}
  For each test $t_j$, define the \emph{class-conditional pass rates} and \emph{discriminative power}:
  \begin{equation*}
    \alpha_j = P(B_{ij} = 1 \mid y_i = 1), \qquad \beta_j = P(B_{ij} = 1 \mid y_i = 0), \qquad \delta_j = \alpha_j - \beta_j.
  \end{equation*}
  A test is \emph{informative}~($\delta_j > 0$), \emph{uninformative}~($\delta_j = 0$), or \emph{misleading}~($\delta_j < 0$) according to whether correct codes are more, equally, or less likely to pass it than incorrect ones.
\end{definition}

\begin{figure}[t]
\begin{minipage}[t]{0.52\textwidth}
  \vspace{0pt}
  \captionof{table}{Vote types for a test $t_j$ on a correct-incorrect pair $(c^+, c^-)$.
  Each test casts a vote $h_j = B_{c^+,j} - B_{c^-,j}$ on their ordering;
  the score difference $s_{c^+}\!-\!s_{c^-} = \sum_j w_j h_j$ is the weighted aggregate.}
  \label{tab:noise}
  \vspace{0.3em}
  \centering
  \footnotesize
  \setlength{\tabcolsep}{3pt}
  \begin{tabular}{@{}lccc@{}}
    \toprule
    \textbf{Type} & \textbf{Condition} & $h_j$ & \textbf{Effect} \\
    \midrule
    Informative & $c^+$ pass, $c^-$ fail & $+1$ & Raises $s_{c^+}\!-\!s_{c^-}$ \\
    Uninformative & Both same & $0$ & No effect \\
    Misleading & $c^-$ pass, $c^+$ fail & $-1$ & Lowers $s_{c^+}\!-\!s_{c^-}$ \\
    \bottomrule
  \end{tabular}
\end{minipage}%
\hfill
\begin{minipage}[t]{0.46\textwidth}
  \vspace{0pt}
  \centering
  \includegraphics[width=0.96\linewidth]{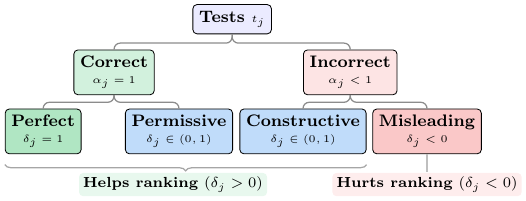}
  \caption{\textbf{For ranking, only $\delta_j$ matters.}
  Taxonomy of tests by correctness ($\alpha_j$) and discriminative power ($\delta_j$).}
  \label{fig:taxonomy}
\end{minipage}
\end{figure}
 
Figure~\ref{fig:taxonomy} refines this classification by test correctness: correct tests~($\alpha_j = 1$) are always informative, while incorrect tests~($\alpha_j < 1$) may be constructive~($\delta_j > 0$) or misleading~($\delta_j < 0$).
Prior correctness-based filters discard constructive tests together with misleading ones, losing valuable ranking signal; for ranking, only $\delta_j$ matters.
Equivalently, $\delta_j = \E[h_j]$ is the expected pairwise vote.
$\passk$ is controlled by a signal-to-noise ratio determined by $w$ and $\{\delta_j\}$:

\begin{theorem}[Pass@k Bound]\label{thm:hoeffding}
  Define the \emph{mean signal} and \emph{signal-to-noise ratio} of weights $w$:
  \begin{equation*}
    M(w) \;:=\; \sum_j w_j\delta_j, \qquad R(w) \;:=\; \frac{M(w)^2}{\sum_j w_j^2}.
  \end{equation*}
  For any $w$ with $M(w) > 0$ and any $k \geq 1$:
  \begin{equation}\label{eq:passk-exp}
    \passk(w) \;\geq\; 1 - \frac{n^-}{k}\,\exp\!\left(-\frac{R(w)}{2}\right),
  \end{equation}
  where $n^- = n - n^+$ is the number of incorrect codes.
  Over non-negative weights, $R(w)$ is maximized by $w^*_j \propto \max(0,\;\delta_j)$.
\end{theorem}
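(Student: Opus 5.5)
The plan has two parts: a pairwise Hoeffding bound lifted to $\passk$ by a union/Markov argument, and then a Cauchy--Schwarz optimization of $R(w)$.

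\textbf{Pairwise bound.} Fix a correct code $c^+$ and an incorrect code $c^-$. By the conditional independence assumption, the votes $h_j = B_{c^+,j} - B_{c^-,j}$ are independent across $j$. Each vote takes values in $[-1,1]$ and has mean $\delta_j$. The weighted sum $D = \sum_j w_j h_j$ therefore has mean $M(w)$, and each summand lies in an interval of length $2w_j$. Hoeffding's inequality gives
\begin{equation*}
P(D \le 0) \le \exp\!\Big(-\tfrac{2M(w)^2}{\sum_j (2w_j)^2}\Big) = \exp\!\big(-R(w)/2\big).
\end{equation*}
This bound also covers ties, whether they are counted as fully or half adverse.

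\textbf{Lifting to $\passk$.} I would fix one correct code $c^+$, say the one with the best rank, or any fixed one; only an upper bound on failure is needed. Let $N$ be the number of incorrect codes whose score is at least $s_{c^+}$. If $N \le k-1$, then $c^+$ lands in the top $k$, even under adversarial tie-breaking. So failure requires $N \ge k$. By Markov's inequality,
\begin{equation*}
P(N \ge k) \le \frac{\E[N]}{k} \le \frac{n^-}{k}\exp\!\big(-R(w)/2\big).
\end{equation*}
Taking complements yields \eqref{eq:passk-exp}. If $n^+ = 0$ the statement is vacuous or excluded, so I assume $n^+ \ge 1$ as implicit. One subtlety is that conditioning on the labels fixes the pair's outcome distribution, so $\E[N]$ is simply the sum of the pairwise probabilities.

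\textbf{Maximizing $R$.} First, any weight placed on tests with $\delta_j \le 0$ can be moved to zero. This does not decrease $M$ (when $M > 0$) and strictly decreases $\sum_j w_j^2$, so an optimizer is supported on $J^+ = \{j : \delta_j > 0\}$. On $J^+$, Cauchy--Schwarz gives
\begin{equation*}
M(w)^2 = \Big(\sum_{j \in J^+} w_j\delta_j\Big)^2 \le \Big(\sum_j w_j^2\Big)\Big(\sum_{j \in J^+} \delta_j^2\Big),
\end{equation*}
with equality iff $w \propto \delta$ on $J^+$. Hence $w^*_j \propto \max(0, \delta_j)$. The scale invariance of $R$ makes the simplex normalization harmless.

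\textbf{Main obstacle.} The delicate step is the independence claim: $c^+$ and $c^-$ are two distinct codes, and across tests $j$ the pairs $(B_{c^+,j}, B_{c^-,j})$ must be independent given the labels. This is what the paper's assumption (independence across tests, conditional on $y_i$) supplies. I would state explicitly that the tests, not the codes, are the source of randomness being averaged.
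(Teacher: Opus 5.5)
Your proposal is correct and follows essentially the same route as the paper: Hoeffding's inequality on the weighted vote sum $\sum_j w_j h_j$, Markov's inequality on the number of incorrect codes scoring at least as high as a correct code, and Cauchy--Schwarz restricted to $\{j:\delta_j>0\}$ for the oracle weights. One small tightening is needed in the lifting step. For an arbitrary \emph{fixed} $c^+$, the condition $N \le k-1$ does not put $c^+$ itself in the top $k$, since other correct codes may outrank it; it only guarantees that \emph{some} correct code is there, which is all you need. If instead you take $c^+$ to be the best-ranked correct code, you must first use $s_{c^*} \ge s_{c^+}$ for a fixed $c^+$, as the paper does, before writing $\E[N]$ as a sum of fixed-pair probabilities.
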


The proof is in Appendix~\ref{app:proof-lemma-passk}.
The bound improves exponentially with $R(w)$, but the oracle-optimal weights~$w^*$ require knowing the unknown discriminative powers~$\delta_j$.

\textbf{Majority voting baseline.}
Writing $\bar{\delta} := (1/m)\sum_{j=1}^m \delta_j$, substituting uniform weights gives $R(w_{\mathrm{unif}}) = m\bar{\delta}^2$ and hence:
\begin{equation}\label{eq:mv-bound}
  \passk(w_{\mathrm{unif}}) \;\geq\; 1 - \frac{n^-}{k}\,\exp\!\left(-\frac{m\bar{\delta}^2}{2}\right).
\end{equation}
Non-uniform weights can improve this bound by increasing $R(w)$ beyond $m\bar{\delta}^2$, but the optimal weights~$w^*$ are unknown in practice.

\subsection{Estimating Test Quality for Ranking: LOO-AUC Identity}
\label{sec:loo-theory}

The discriminative powers $\delta_j$ are unknown, but the pass matrix $B$ provides $m$ surrogate label vectors, one per test column.
An informative test's column correlates with the unknown $y$, while a misleading test's column anti-correlates.
To distinguish the two, one can hold out test $t_j$, rank codes by their aggregate scores on the remaining tests, and measure whether $t_j$'s pass/fail pattern agrees with that ranking.
Formally, the \emph{leave-one-out scores} and \emph{LOO-AUC} are:
\begin{equation}\label{eq:loo-score}
  S^{(-j)}_i = \sum_{j' \neq j} w_{j'}\, B_{ij'}, \qquad
  \looauc_j(w) = \AUC\!\left(S^{(-j)},\; B_{:,j}\right),
\end{equation}
where $S^{(-j)}$ serves as scores and $B_{:,j}$ as binary labels.
Both are computed from the pass matrix $B$ alone; \textbf{no knowledge of $y$ is required}.
$\looauc_j$ measures the \emph{consistency} between test~$t_j$ and the remaining tests: the probability that, among codes distinguished by $t_j$, those passing it are ranked higher by the remaining tests.
A high value ($\looauc_j > 1/2$) indicates agreement with the consensus ranking; a low value ($\looauc_j < 1/2$) indicates disagreement.
The following identity formalizes how this consistency measure relates to discriminative power~$\delta_j$:
\begin{theorem}[LOO-AUC Identity]\label{lem:loo-identity}
  Let $A^{(-j)}(w)$ denote the true AUC~(Eq.~\ref{eq:auc-def}) computed from all tests except~$t_j$.
  For any weights $w$:
  \begin{equation}\label{eq:loo-proportional}
    \E[\looauc_j(w)] - \frac{1}{2} \;=\; c_j(w) \cdot \delta_j,
    \qquad c_j(w) \;:=\; \frac{\pi(1-\pi)}{p_j(1-p_j)}\left(A^{(-j)}(w) - \frac{1}{2}\right),
  \end{equation}
  where $\pi = n^+/n$ is the correct-code fraction and $p_j = P(B_{ij} = 1)$ the marginal pass rate of~$t_j$.
\end{theorem}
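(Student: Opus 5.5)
The plan is to read $\E[\looauc_j(w)]$ at the population level: draw two codes independently, $a$ from the codes passing $t_j$ ($B_{aj}=1$) and $b$ from the codes failing it ($B_{bj}=0$). Then compute $P(S^{(-j)}_a > S^{(-j)}_b) + \tfrac12 P(S^{(-j)}_a = S^{(-j)}_b)$ by conditioning on the latent labels $(y_a, y_b)$. The finite-sample empirical AUC is an average of such pairwise indicators over passing/failing pairs. Taking expectations under i.i.d. sampling of codes, each term has this pairwise probability, apart from the mild issue that the number of passing/failing codes is random. I would state the result for this pairwise (population) AUC and remark that it carries over by exchangeability.

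\textbf{Step 1 (decoupling).} By the conditional-independence assumption of Section~\ref{sec:framework}, given $y_i$ the column $B_{ij}$ is independent of the other entries $\{B_{ij'}\}_{j'\neq j}$, hence of $S^{(-j)}_i$. So once $(y_a,y_b)$ is fixed, the event that $t_j$ is passed by $a$ and failed by $b$ carries no further information about the leave-one-out scores. The pairwise probability for given labels is then:
\begin{itemize}
\item $\tfrac12$ when $y_a=y_b$, by symmetry of i.i.d. scores with ties split;
\item $A^{(-j)}(w)$ when $(y_a,y_b)=(1,0)$;
\item $1-A^{(-j)}(w)$ when $(y_a,y_b)=(0,1)$.
\end{itemize}

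\textbf{Step 2 (mixture weights).} By Bayes, $q_1 := P(y=1\mid B_{\cdot j}=1) = \pi\alpha_j/p_j$ and $q_0 := P(y=1\mid B_{\cdot j}=0) = \pi(1-\alpha_j)/(1-p_j)$, where $p_j = \pi\alpha_j + (1-\pi)\beta_j$. Summing the four label cases gives
\begin{equation*}
\E[\looauc_j] - \tfrac12 = \bigl[q_1(1-q_0) - (1-q_1)q_0\bigr]\bigl(A^{(-j)}(w)-\tfrac12\bigr) = (q_1-q_0)\bigl(A^{(-j)}(w)-\tfrac12\bigr).
\end{equation*}

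\textbf{Step 3 (algebra).} Compute $q_1 - q_0 = \pi\bigl[\alpha_j(1-p_j) - (1-\alpha_j)p_j\bigr]/(p_j(1-p_j)) = \pi(\alpha_j - p_j)/(p_j(1-p_j))$. Since $\alpha_j - p_j = (1-\pi)(\alpha_j-\beta_j) = (1-\pi)\delta_j$, this equals $\pi(1-\pi)\delta_j/(p_j(1-p_j))$, which is exactly $c_j(w)\,\delta_j$ in the statement. Non-constant columns guarantee $0<p_j<1$, so the expression is well defined.

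\textbf{Main obstacle.} The delicate point is Step 1 together with the interpretation of ``$\E$''. I must make sure that conditioning on $B_{aj}=1$ and $B_{bj}=0$ does not bias the distribution of $S^{(-j)}$ beyond what flows through the labels. This is exactly what conditional independence across tests provides, treating $w$ as fixed rather than data-dependent on column $j$. I would also need to check that the within-class tie convention gives exactly $\tfrac12$. I would handle both by working with the pairwise population AUC and noting that the empirical LOO-AUC is its unbiased U-statistic analogue, conditional on both classes of $t_j$ being nonempty.
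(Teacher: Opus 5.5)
Your proposal is correct and follows essentially the same route as the paper. Both condition the passer/failer pair on the latent labels, assign $A^{(-j)}$, $1-A^{(-j)}$, and $\tfrac12$ to the three label cases, and use Bayes' rule for the mixture weights to reach $(q_{10}-q_{01})(A^{(-j)}-\tfrac12)$ with $q_{10}-q_{01}=\pi(1-\pi)\delta_j/(p_j(1-p_j))$.

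Two small differences are worth noting. Your reduction $q_1(1-q_0)-(1-q_1)q_0=q_1-q_0$ is slightly cleaner than the paper's direct expansion. Your explicit decoupling step, namely conditional independence of $B_{ij}$ from $S^{(-j)}_i$ given $y_i$ with $w$ fixed independently of column $j$, spells out an assumption the paper's proof uses only implicitly.
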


The proof is in Appendix~\ref{app:proof-loo-identity}.
The coefficient $c_j(w)$ has two test-dependent factors: the inverse pass-rate variance $1/[p_j(1-p_j)]$ and the leave-one-out ranking quality $A^{(-j)}(w) - 1/2$.
In Section~\ref{sec:method}, we exploit this decomposition to construct test weights that target the discriminative powers~$\delta_j$.
\section{ACES: AUC Consistency Scoring}
\label{sec:method}

We use the LOO-AUC Identity~(Theorem~\ref{lem:loo-identity}) to construct non-uniform test weights that improve Pass@$k$ through two complementary approaches: ACES-C applies a closed-form pass-rate correction, while ACES-O optimizes weights via differentiable LOO-AUC.

\subsection{ACES-C: Closed-Form Weighting}
\label{sec:aces}

The LOO-AUC Identity~(Theorem~\ref{lem:loo-identity}) gives $\E[\looauc_j(w)] - 1/2 = c_j(w) \cdot \delta_j$.
When $c_j(w) > 0$, LOO-AUC excess shares the same sign as~$\delta_j$, so tests scoring above $1/2$ are identified as informative and those below as misleading.
This holds whenever $A^{(-j)}(w) > 1/2$, i.e., the remaining tests rank correct codes higher, the natural regime since the complement would mean the aggregate ranking is no better than random.
To guarantee this for all~$j$ under uniform weights, we introduce:

\begin{assumption}\label{asm:main}
  The average discriminative power is positive, and the test pool is large enough that leaving out any single test has negligible effect on the aggregate ranking. Quantitatively:
  \begin{equation*}
    \bar{\delta} \;:=\; \frac{1}{m}\sum_{j=1}^m \delta_j \;>\; 2\sqrt{\frac{\ln 2}{m}}.
  \end{equation*}
\end{assumption}

This is our only assumption beyond the model in Section~\ref{sec:framework}; the threshold is $O(1/\sqrt{m})$, and unlike the classical \emph{weak learner} condition~\citep{freund2003efficient}, which requires \emph{every} ranker to be better than random, ours requires this only \emph{on average}, permitting arbitrarily many misleading tests.

\begin{proposition}[Structure of $c_j(w_{\mathrm{unif}})$]\label{thm:loo-quality}
  Under Assumption~\ref{asm:main}, $A^{(-j)}(w_{\mathrm{unif}}) > 1/2$ for all~$j$, so $c_j(w_{\mathrm{unif}}) > 0$.
  Moreover, $A^{(-j)}(w_{\mathrm{unif}})$ is approximately constant across~$j$.
\end{proposition}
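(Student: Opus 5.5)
Under uniform weights the leave-one-out score difference for a fixed pair $(c^+,c^-)$ is $\frac{1}{m}\sum_{j'\neq j} h_{j'}$. Under the conditional-independence model of Section~\ref{sec:framework}, the votes $h_{j'}\in\{-1,0,1\}$ are independent across tests, with $\E[h_{j'}]=\delta_{j'}$. The plan is therefore to lower-bound $A^{(-j)}(w_{\mathrm{unif}})$ with a Hoeffding bound on the lower tail of this sum. The argument reuses the concentration step behind Theorem~\ref{thm:hoeffding}, now applied to $m-1$ tests instead of $m$.

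\textbf{Step 1 (mean after leaving one out).} The leave-one-out mean is $\sum_{j'\neq j}\delta_{j'} = m\bar\delta - \delta_j \geq m\bar\delta - 1$, since $|\delta_j|\le 1$. Under Assumption~\ref{asm:main} we have $m\bar\delta > 2\sqrt{m\ln 2}$, so the mean is positive whenever $2\sqrt{m\ln 2}>1$. This holds for every $m\geq 1$, because $2\sqrt{\ln 2}\approx 1.67$.

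\textbf{Step 2 (Hoeffding).} Each vote has range $2$, so
\[
P\Bigl(\textstyle\sum_{j'\neq j}h_{j'}\le 0\Bigr)\le \exp\!\Bigl(-\frac{(m\bar\delta-\delta_j)^2}{2(m-1)}\Bigr).
\]
Ties count $1/2$ and strict wins count $1$, so $A^{(-j)} \ge 1-P(\text{difference}\le 0)$. It then suffices to make the exponential strictly below $1/2$, i.e.\ $(m\bar\delta-\delta_j)^2 > 2(m-1)\ln 2$. The constant $2\sqrt{\ln 2/m}$ in the assumption is set up for exactly this: squaring $m\bar\delta > 2\sqrt{m\ln 2}$ gives $m^2\bar\delta^2 > 4m\ln 2$. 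The factor of $4$, rather than $2$, leaves room to absorb the shift by $\delta_j$: I need $2\sqrt{m\ln 2} - 1 \ge \sqrt{2(m-1)\ln 2}$. Checking this inequality is routine for moderate $m$. For very small $m$ I would either verify it directly or use the sharper bound $|\delta_j|\le 1$ together with $\bar\delta\le 1$. \textbf{This constant bookkeeping is the main obstacle.} If it fails at tiny $m$, my fallback is the refined variance-aware bound. That bound uses $\sum_{j'}(\text{range})^2/4$ with range $2$, which is exactly what produces the $1/2$ factor in Theorem~\ref{thm:hoeffding}. Once $A^{(-j)}>1/2$ is established, $c_j(w_{\mathrm{unif}})>0$ follows immediately from the formula in Theorem~\ref{lem:loo-identity}. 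This uses $\pi(1-\pi)>0$ and $p_j(1-p_j)>0$: constant columns have been removed, and we assume both correct and incorrect codes are present.

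\textbf{Step 3 (approximate constancy).} Compare $A^{(-j)}$ with $A^{(-j')}$, or with the full $A(w_{\mathrm{unif}})$. Removing test $j$ changes the summed vote by $h_j\in\{-1,0,1\}$, so the event ordering the pair changes only when $|\sum_{j''\neq j}h_{j''}|\le 1$. Hence $|A^{(-j)}-A|\le P(|\sum_{j''\ne j}h_{j''}|\le 1)$. For a sum of $m-1$ independent bounded non-degenerate votes, this probability is $O(1/\sqrt{m})$, by a Berry--Esseen or anti-concentration estimate. It is moreover exponentially small once the mean $m\bar\delta$ exceeds $\sqrt m$ by a margin, again by Hoeffding. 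So all $A^{(-j)}$ lie within $O(1/\sqrt m)$ of the common value $A(w_{\mathrm{unif}})$, which is the ``negligible effect of leaving out a single test'' clause of Assumption~\ref{asm:main} made quantitative. I would state this part as an approximation with this explicit error, not as an exact identity.
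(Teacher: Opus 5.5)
Your argument for $A^{(-j)}(w_{\mathrm{unif}})>1/2$ is the paper's argument. You apply Hoeffding to the $m-1$ remaining votes with mean $m\bar\delta-\delta_j$, take the worst case $\delta_j=1$, and then need $(m\bar\delta-1)^2>2(m-1)\ln 2$. The step you flagged as the main obstacle closes for every $m$ in one line. With $u=\sqrt{m\ln 2}$,
\[
(2u-1)^2-2(m-1)\ln 2 \;=\; 2(u-1)^2+(2\ln 2-1) \;>\; 0 .
\]
This is exactly the paper's completing-the-square (its $(a-2)^2/2+(2\ln 2-1)$ with $a=m\bar\delta$). So no small-$m$ analysis or fallback is needed. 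In fact, $\bar\delta\le 1$ makes the assumption unsatisfiable unless $m>4\ln 2$, i.e.\ $m\ge 3$. Your fallback would not have helped anyway: ``range$^2/4$ with range $2$'' is the same Hoeffding bound you already used, not a sharper variance-aware one.

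For approximate constancy, your route genuinely differs from the paper's. The paper uses no perturbation argument. The same Hoeffding bound places every $A^{(-j)}(w_{\mathrm{unif}})$, and also $A(w_{\mathrm{unif}})$, in $(1-\epsilon,1]$ with $\epsilon=\exp\bigl(-(m\bar\delta-1)^2/(2(m-1))\bigr)$, so all pairwise gaps are at most $\epsilon$. That is one line and rigorous within the model. However, it is informative only when $A$ is near $1$: under the bare assumption, $\epsilon$ is only guaranteed to be below $1/2$ (about $1/4$ for large $m$ at the threshold).

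Your stability bound $|A^{(-j)}-A|\le P(|S^{-j}|\le 1)$ is valid. It even holds with a factor $1/2$, since shifting the integer sum $S^{-j}$ by $h_j\in\{-1,0,1\}$ changes the tie-adjusted indicator by at most $1/2$. It also speaks directly to the ``one test has negligible effect'' clause, and with anti-concentration it would give closeness even when $A$ is bounded away from $1$.

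The catch is that the $O(1/\sqrt m)$ rate needs a quantitative non-degeneracy hypothesis, such as $\sum_{j''}\mathrm{Var}(h_{j''})\ge c\,m$. The model does not provide this (a perfect test casts a deterministic vote), so you would have to add it as an assumption. Without it, only your Hoeffding branch is rigorous. That branch reproduces a bound of the paper's type with a slightly worse shift, $\exp\bigl(-(m\bar\delta-2)^2/(2(m-1))\bigr)$. It is valid because $m\bar\delta>2\sqrt{3\ln 2}>2$ once $m\ge 3$.
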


The proof and quantitative bound are in Appendix~\ref{app:proof-loo-quality}.
Under uniform weights, $A^{(-j)}(w_{\mathrm{unif}}) - 1/2$ is approximately constant across $j$~(Proposition~\ref{thm:loo-quality}).
Combined with the constant $\pi(1-\pi)$, the only significant test-dependent factor in $c_j(w_{\mathrm{unif}})$ is $1/[p_j(1-p_j)]$.
Multiplying the LOO-AUC excess by $p_j(1-p_j)$ removes this distortion:
\begin{equation}\label{eq:aces-weight}
  w_j \;=\; \max\!\left(0,\; \looauc_j(w_{\mathrm{unif}}) - \frac{1}{2}\right) \cdot p_j(1-p_j),
\end{equation}
where $p_j = \frac{1}{n}\sum_{i=1}^n B_{ij}$ is the empirical pass rate of test~$j$.
The $\max(0,\cdot)$ ensures non-negative weights, assigning zero weight to tests with $\looauc_j \leq 1/2$ and filtering out those identified as misleading.
Since only relative magnitudes affect the induced ranking, normalization is unnecessary.
The following theorem shows that the pass-rate-corrected LOO-AUC excess recovers the discriminative power~$\delta_j$ exactly in expectation.

\begin{theorem}[ACES-C Recovers Discriminative Power]\label{thm:main}
  Under Assumption~\ref{asm:main}, define the \emph{quality score} $q_j := (\looauc_j(w_{\mathrm{unif}}) - \tfrac{1}{2})\,p_j(1{-}p_j)$.  Then:
  \begin{equation*}
    \E[q_j] \;=\; \pi(1-\pi)\!\left(A^{(-j)}(w_{\mathrm{unif}}) - \frac{1}{2}\right) \cdot \delta_j.
  \end{equation*}
  In particular, $\E[q_j] > 0$ if and only if $\delta_j > 0$: the expected quality score's sign identifies informative versus misleading tests.
  Since $A^{(-j)}(w_{\mathrm{unif}})$ is nearly constant in~$j$~(Proposition~\ref{thm:loo-quality}), $\E[q_j] \propto \delta_j$.
\end{theorem}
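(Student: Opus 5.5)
The argument is a direct combination of Theorem~\ref{lem:loo-identity} at $w = w_{\mathrm{unif}}$ with Proposition~\ref{thm:loo-quality}.

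\textbf{Step 1: treat the pass-rate factor as fixed.} The key modeling point is that $p_j(1-p_j)$ is the same deterministic quantity in the identity and in $q_j$. In Theorem~\ref{lem:loo-identity}, $p_j = P(B_{ij}=1) = \pi\alpha_j + (1-\pi)\beta_j$ is the marginal pass rate, which is deterministic given the model parameters. When stating $\E[q_j]$, I will therefore treat the factor $p_j(1-p_j)$ in $q_j$ as this population quantity. If the empirical pass rate is substituted, the identity becomes an approximation up to the fluctuation of $\hat p_j$, which I would remark on rather than bound. With that convention, linearity of expectation gives
\[
\E[q_j] = \Bigl(\E[\looauc_j(w_{\mathrm{unif}})] - \tfrac12\Bigr)\,p_j(1-p_j).
\]
Substituting Eq.~\eqref{eq:loo-proportional} with $w = w_{\mathrm{unif}}$, the factor $p_j(1-p_j)$ cancels the denominator of $c_j$. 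This yields exactly $\pi(1-\pi)\bigl(A^{(-j)}(w_{\mathrm{unif}}) - \tfrac12\bigr)\delta_j$.

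\textbf{Step 2: the sign claim.} This requires the prefactor to be strictly positive.
\begin{itemize}
\item $\pi(1-\pi) > 0$ provided $0 < n^+ < n$. If all codes were correct, or all incorrect, $\delta_j$ would not be identifiable. I would state this as the implicit nondegeneracy assumption already needed for $A$ to be defined.
\item $A^{(-j)}(w_{\mathrm{unif}}) - \tfrac12 > 0$ for every $j$ by Proposition~\ref{thm:loo-quality}, under Assumption~\ref{asm:main}.
\item Also $p_j(1-p_j) > 0$, since constant columns are removed. The identity needs this to be well defined.
\end{itemize}
Hence $\E[q_j]$ has the same sign as $\delta_j$, and in particular $\E[q_j] > 0 \iff \delta_j > 0$.

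\textbf{Step 3: the proportionality claim.} This follows from the second part of Proposition~\ref{thm:loo-quality}. Because $A^{(-j)}(w_{\mathrm{unif}})$ is approximately independent of $j$, the prefactor is an approximately common constant $\kappa = \pi(1-\pi)(\bar A - \tfrac12) > 0$, giving $\E[q_j] \approx \kappa\,\delta_j$. I would make "approximately" precise by quoting the quantitative bound from Appendix~\ref{app:proof-loo-quality}. That bound should show $|A^{(-j)} - A^{(-j')}|$ is small, plausibly $O(1/\sqrt m)$, since removing one of $m$ equal-weight votes changes the score difference by at most $1/m$ relative to a spread of order $\sqrt m$.

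\textbf{Main obstacle.} Essentially all the real content lives in the two cited results, so the main difficulty is bookkeeping rather than new mathematics. The delicate points are:
\begin{itemize}
\item being honest that the cancellation in Step 1 is exact only with the population $p_j$;
\item making the "$\propto$" in Step 3 precise via the quantitative stability bound from Proposition~\ref{thm:loo-quality}, rather than leaving it informal.
\end{itemize}
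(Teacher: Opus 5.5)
Your proposal is correct and follows the paper's own proof. You apply Theorem~\ref{lem:loo-identity} at $w_{\mathrm{unif}}$, cancel $p_j(1-p_j)$ (treating it, as the paper implicitly does, as the population marginal), and use Proposition~\ref{thm:loo-quality} twice: $A^{(-j)}(w_{\mathrm{unif}}) > 1/2$ for the sign claim, and near-constancy in $j$ for proportionality.

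One small correction to the bound you would quote. It is not an $O(1/\sqrt{m})$ perturbation estimate. It is $\bigl|A^{(-j)} - A^{(-k)}\bigr| \le \exp\bigl(-(m\bar{\delta}-1)^2/(2(m-1))\bigr)$, which the paper obtains by showing that every $A^{(-j)}(w_{\mathrm{unif}})$ lies within that distance of $1$.
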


\begin{proof}
  By Theorem~\ref{lem:loo-identity}, $\E[\looauc_j] - 1/2 = c_j \delta_j$ with $c_j = \frac{\pi(1-\pi)}{p_j(1-p_j)}(A^{(-j)} - 1/2) > 0$ under Assumption~\ref{asm:main}.
  Multiplying by $p_j(1{-}p_j)$ cancels the $1/[p_j(1{-}p_j)]$ factor in~$c_j$, yielding $\E[q_j] = \pi(1{-}\pi)(A^{(-j)} - 1/2)\,\delta_j$.
  Positivity when $\delta_j > 0$ follows from $A^{(-j)} > 1/2$~(Proposition~\ref{thm:loo-quality}).
\end{proof}

The ACES-C weight~(Eq.~\ref{eq:aces-weight}) is $w_j = \max(0,\, q_j)$: by the sign property above, the clipping assigns positive weight precisely to tests that are informative in expectation~($\delta_j > 0$), targeting the oracle $w^*_j \propto \max(0,\, \delta_j)$.
Quantitatively, the ACES-C weights achieve near-oracle signal-to-noise ratio: $R \geq \rho^2 R^*$ where $\rho \to 1$ as $m$ grows~(Corollary~\ref{cor:improvement} in Appendix~\ref{app:improvement}).

\subsection{ACES-O: Optimized Weighting}
\label{sec:aces-o}

ACES-C is an efficient closed-form method with provable guarantees under Assumption~\ref{asm:main}.
When the assumption may fail, ACES-O jointly optimizes the weights and their induced ranking:
\begin{equation}\label{eq:objective}
  \max_{w \in \Delta^m} \;\; J(w) \;=\; \sum_{j=1}^{m} w_j\,\bigl(\looauc_j(w) - \frac{1}{2}\bigr),
\end{equation}
where $\looauc_j(w)$ depends on $w$ through the leave-one-out scores $S^{(-j)}(w)$~(Eq.~\ref{eq:loo-score}).
The simplex constraint leaves the induced ranking unchanged but prevents unbounded weights and ill-defined optimization.
ACES-C evaluates LOO-AUC once under uniform weights, whereas ACES-O reevaluates it under the current weights~$w$.
In ACES-O, tests with LOO-AUC below $1/2$ are softly down-weighted, not permanently excluded; as the leave-one-out ranking improves, their values can rise above $1/2$, recovering initially excluded informative tests.

\textbf{Theoretical motivation.}
The LOO-AUC identity~(Theorem~\ref{lem:loo-identity}) gives
\begin{equation}\label{eq:expected-obj}
  \E[J(w)] \;=\; \sum_{j=1}^{m} w_j\, c_j(w)\, \delta_j.
\end{equation}
When $c_j(w) > 0$, each term has the same sign as~$\delta_j$: informative tests increase $\E[J(w)]$ while misleading tests decrease it.
Maximizing $J(w)$ thus drives the weights toward informative tests and away from misleading ones, without requiring knowledge of~$\delta_j$.
Assumption~\ref{asm:main} guarantees $c_j(w_{\mathrm{unif}}) > 0$ for all~$j$~(Proposition~\ref{thm:loo-quality}), providing a valid starting point.
As the optimization updates~$w$, the leave-one-out ranking improves, raising~$c_j(w)$ for previously misidentified tests and creating a positive feedback loop that can succeed even when the assumption is only weakly satisfied.
Since the AUC is non-differentiable, we optimize a logistic surrogate via gradient ascent; pseudocode, pre-filtering, and hyperparameters are in Appendix~\ref{app:algorithms}.
Appendix~\ref{app:example} illustrates this complementarity with two constructed examples: ACES-C suffices when Assumption~\ref{asm:main} is well met, whereas ACES-O recovers initially missed informative tests when it is only weakly met.
\section{Experiments}
\label{sec:experiments}

We evaluate ACES-C and ACES-O on three code generation benchmarks, focusing on: (1)~comparison with existing reranking methods, (2)~empirical validation of Assumption~\ref{asm:main}, and (3)~analysis of robustness and test quality detection under LOO-AUC-based weighting.

\subsection{Setup}
\label{sec:exp-setup}

\textbf{Benchmarks and generation.}
We evaluate on HumanEval~\citep{chen2021evaluating}~(164 problems), HumanEval$^+$~\citep{liu2023your}~(164 problems, stricter tests), and MBPP~\citep{austin2021program}~(427 problems).
We use candidate solutions and tests from \citet{huang2024enhancing}, generated by GPT-3.5-Turbo, with approximately 200 candidates and 500 tests per problem.
We report Pass@$k$ for $k \in \{1, 2, 5\}$.
For reranking methods, Pass@$k$ follows Eq.~\ref{eq:passk}; for direct inference baselines, Pass@$k$ is the unbiased estimator of \citet{chen2021evaluating}, averaging over random $k$-subsets of candidates.

\textbf{Baselines.}
We compare with post-hoc methods using code$\times$test execution:
Majority Voting~($w_{\mathrm{unif}}$),
CodeT~\citep{chen2022codet}~(consensus-set scoring on the pass matrix),
and MBR-exec~\citep{shi2022natural}~(pairwise output comparison beyond the pass matrix);
and methods requiring additional information:
SC+Spec~\citep{huang2024enhancing}~(execution + specification consistency voting),
Self-collaborate~\citep{dong2024self}~(multi-agent LLM calls),
MPSC~\citep{huang2024enhancing}~(multi-perspective consistency voting; Uniform variant, using only inter-consistency),
and $\mathcal{DS}^3$~\citep{Liu_Xue_Ma_Sun_Li_Li_2026}~(static analysis).
We also include direct inference from GPT-3.5-Turbo, GPT-4~\citep{achiam2023gpt}, DeepSeek-Coder~\citep{guo2024deepseek}, WizardCoder~\citep{luo2024wizardcoder}, and CodeLlama~\citep{roziere2023code}.
All post-hoc methods share GPT-3.5-Turbo candidates and tests.

Implementation details of ACES-C and ACES-O are in Appendix~\ref{app:algorithms}; results with additional generation models and benchmarks are in Appendix~\ref{app:additional-experiments}.

\subsection{Main Results}
\label{sec:main-results}

\begin{table}[t]
  \centering
  \caption{Pass@$k$ (\%) on HumanEval, HumanEval$^+$, and MBPP. Post-hoc methods rerank GPT-3.5-Turbo candidates ($n{\approx}200$, $m{\approx}500$). Subscripts denote the change from GPT-3.5-Turbo direct inference. \textbf{Bold}/\underline{underline}: best/second-best within each group. \colorbox{ourbg}{Shaded}: ours.}
  \label{tab:main-results}
  \vspace{0.5em}
  \small
  \setlength{\tabcolsep}{2.2pt}
  \begin{tabular}{@{}l lll lll lll@{}}
    \toprule
    & \multicolumn{3}{c}{\textbf{HumanEval}} & \multicolumn{3}{c}{\textbf{HumanEval$^+$}} & \multicolumn{3}{c}{\textbf{MBPP}} \\
    \cmidrule(lr){2-4} \cmidrule(lr){5-7} \cmidrule(lr){8-10}
    \textbf{Method} & Pass@1 & Pass@2 & Pass@5 & Pass@1 & Pass@2 & Pass@5 & Pass@1 & Pass@2 & Pass@5 \\

    \midrule
    \rowcolor{black!5}
    \multicolumn{10}{@{}l}{\textit{Direct inference}} \\
    GPT-3.5-Turbo        & 68.38 & 76.24 & 83.15 & 58.75 & 66.58 & 73.96 & 66.80 & 72.34 & 76.60 \\
    GPT-4                & 81.48 & 86.31 & 90.46 & 70.52 & 75.48 & 79.54 & 71.26 & 74.27 & 76.99 \\
    DeepSeek-Coder       & 79.30 & --    & --    & --    & --    & --    & 70.00 & --    & --    \\
    WizardCoder          & 73.20 & --    & --    & --    & --    & --    & 61.20 & --    & --    \\
    CodeLlama            & 62.20 & --    & --    & --    & --    & --    & 62.20 & --    & --    \\

    \midrule
    \rowcolor{black!5}
    \multicolumn{10}{@{}l}{\textit{Post-hoc reranking using only code $\times$ test execution}} \\
    MBR-exec             & 72.96\up{4.6} & 76.47\up{0.2} & 79.00\dn{4.2} & 62.12\up{3.4} & 67.08\up{0.5} & 71.38\dn{2.6} & 70.79\up{4.0} & \underline{73.14}\up{0.8} & \textbf{74.85}\dn{1.8} \\
    CodeT                & 78.05\up{9.7} & 78.05\up{1.8} & 78.30\dn{4.9} & 67.87\up{9.1} & 68.75\up{2.2} & 69.65\dn{4.3} & \underline{71.90}\up{5.1} & 71.95\dn{0.4} & 72.02\dn{4.6} \\
    Majority Voting      & 80.49\up{12.1} & \underline{82.93}\up{6.7} & \underline{83.54}\up{0.4} & 69.51\up{10.8} & \underline{73.17}\up{6.6} & \underline{76.83}\up{2.9} & 68.62\up{1.8} & 70.49\dn{1.9} & 72.83\dn{3.8} \\
    \rowcolor{ourbg}
    \textbf{ACES-C}      & \underline{82.93}\up{14.6} & \underline{82.93}\up{6.7} & 82.93\dn{0.2} & \underline{71.34}\up{12.6} & 71.95\up{5.4} & 74.39\up{0.4} & 71.19\up{4.4} & 71.43\dn{0.9} & 72.37\dn{4.2} \\
    \rowcolor{ourbg}
    \textbf{ACES-O}      & \textbf{84.15}\up{15.8} & \textbf{85.98}\up{9.7} & \textbf{86.59}\up{3.4} & \textbf{74.39}\up{15.6} & \textbf{75.61}\up{9.0} & \textbf{79.88}\up{5.9} & \textbf{72.37}\up{5.6} & \textbf{73.54}\up{1.2} & \underline{73.77}\dn{2.8} \\

    \midrule
    \rowcolor{black!5}
    \multicolumn{10}{@{}l}{\textit{Post-hoc reranking using additional information}} \\
    SC+Spec              & 73.86\up{5.5} & 73.93\dn{2.3} & 74.10\dn{9.1} & 63.50\up{4.8} & 64.70\dn{1.9} & 65.67\dn{8.3} & 71.70\up{4.9} & 71.73\dn{0.6} & 71.82\dn{4.8} \\
    Self-collaborate     & 74.40\up{6.0} & --    & --    & --    & --    & --    & 68.20\up{1.4} & --    & --    \\
    MPSC                 & 74.17\up{5.8} & 77.02\up{0.8} & 78.53\dn{4.6} & 65.05\up{6.3} & 69.76\up{3.2} & 71.72\dn{2.2} & 69.34\up{2.5} & 70.06\dn{2.3} & 71.85\dn{4.8} \\
    $\mathcal{DS}^3$     & 81.71\up{13.3} & 82.32\up{6.1} & 82.32\dn{0.8} & 72.56\up{13.8} & 73.78\up{7.2} & 75.00\up{1.0} & 75.88\up{9.1} & \textbf{77.28}\up{4.9} & 78.45\up{1.9} \\
    \rowcolor{ourbg}
    \textbf{ACES-C + $\mathcal{DS}^3$}  & \textbf{85.37}\up{17.0} & \textbf{85.98}\up{9.7} & \textbf{87.20}\up{4.1} & \textbf{77.44}\up{18.7} & \textbf{78.66}\up{12.1} & \textbf{81.10}\up{7.1} & \underline{76.11}\up{9.3} & \textbf{77.28}\up{4.9} & \textbf{78.69}\up{2.1} \\
    \rowcolor{ourbg}
    \textbf{ACES-O + $\mathcal{DS}^3$}  & \underline{83.54}\up{15.2} & \underline{83.54}\up{7.3} & \underline{85.98}\up{2.8} & \underline{75.00}\up{16.3} & \underline{76.22}\up{9.6} & \underline{79.88}\up{5.9} & \textbf{76.58}\up{9.8} & \textbf{77.28}\up{4.9} & \textbf{78.69}\up{2.1} \\
    \bottomrule
  \end{tabular}
\end{table}
 
Table~\ref{tab:main-results} summarizes the results. Using only the binary pass matrix, ACES achieves the best Pass@$k$ among all execution-based methods on all three benchmarks. When further combined with complementary static-analysis methods, it yields the best overall results across all benchmarks.

\textbf{Execution-only methods.}
Among execution-only methods, ACES-O achieves the best Pass@$k$ on all three benchmarks.
On HumanEval, ACES-O reaches 84.15\% Pass@1 versus 81.71\% for $\mathcal{DS}^3$, which leverages additional static-analysis signals beyond the pass matrix.
On HumanEval$^+$, whose stricter criteria increase the fraction of misleading tests among the same generated test suite, ACES-O reaches 74.39\% Pass@1, again outperforming $\mathcal{DS}^3$~(72.56\%) despite using only the pass matrix.
The gain over Majority Voting widens relative to HumanEval~(+4.88\% vs.\ +3.66\% Pass@1), consistent with our theory: when misleading tests are more prevalent, principled test weighting provides greater benefit~(Section~\ref{sec:analysis}).
On MBPP, ACES-O leads all execution-only methods~(72.37\% Pass@1) but trails $\mathcal{DS}^3$~(75.88\%), whose orthogonal static-analysis signals are complementary.
Overall, ACES surpasses all baselines on HumanEval and HumanEval$^+$ using only the pass matrix; on MBPP, only $\mathcal{DS}^3$~(static analysis) ranks higher.

\textbf{Combination with static analysis.}
$\mathcal{DS}^3$ operates in two stages: heuristic pre-filtering of candidates, followed by static-analysis scoring~(Pylint, AST similarity, cyclomatic complexity).
We reuse its first-stage filtering and combine the second-stage static-analysis scores with ACES scores via a weighted sum~(details in Appendix~\ref{app:algorithms}).
ACES-C + $\mathcal{DS}^3$ improves over $\mathcal{DS}^3$ alone on all three benchmarks~($+3.66$/$+4.88$/$+0.23$ Pass@1 on HumanEval/HumanEval$^+$/MBPP); ACES-O + $\mathcal{DS}^3$ similarly improves, with the largest gain on MBPP~($+0.70$), confirming that the two methods capture orthogonal signals.
Interestingly, ACES-C + $\mathcal{DS}^3$ outperforms ACES-O + $\mathcal{DS}^3$ on HumanEval and HumanEval$^+$~(e.g., 85.37\% vs.\ 83.54\% Pass@1 on HumanEval), because pre-filtering improves candidate quality and makes Assumption~\ref{asm:main} better satisfied, a regime where ACES-C's closed-form correction is already near-optimal~(as illustrated in Appendix~\ref{app:example}).
In practice, ACES-O extracts more signal for standalone reranking, while ACES-C's closed-form weights integrate more effectively with high-quality pre-filtering and require no optimization.

\textbf{Comparison of ACES variants.}
ACES-C and ACES-O are alternative weighting schemes rather than cumulative components. ACES-C improves Pass@1 over Majority Voting on all three benchmarks~(e.g., 82.93\% vs.\ 80.49\% on HumanEval), showing that closed-form LOO-AUC weighting captures meaningful test-quality signal; ACES-O outperforms ACES-C across all reported benchmark--$k$ settings.
Further analyses, including finer Pass@$k$ grids, selection vs.\ weighting, hyperparameter sensitivity, and computational cost, are in Appendix~\ref{app:additional-experiments}.

\subsection{Analysis}
\label{sec:analysis}

\begin{figure}[t]
  \centering
  \begin{subfigure}[t]{0.49\textwidth}
    \centering
    \includegraphics[width=\textwidth]{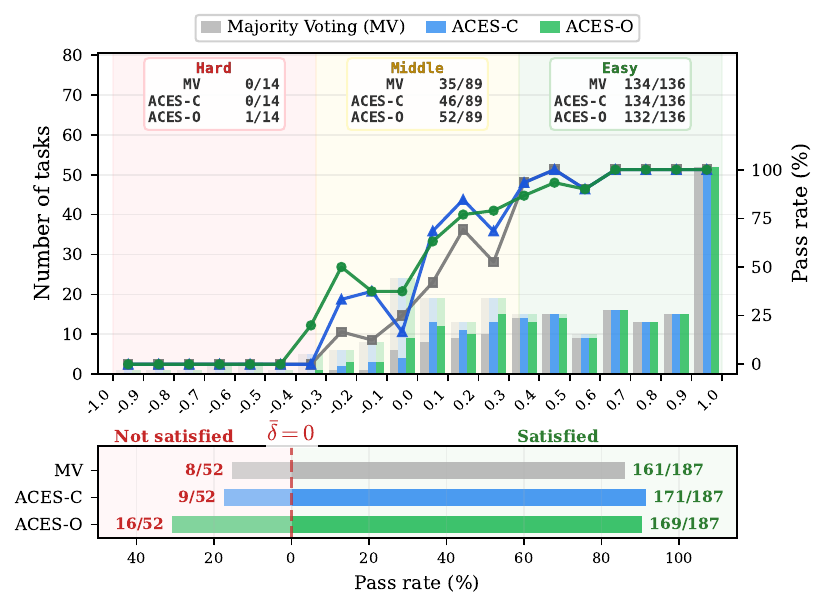}
    \caption{Assumption~\ref{asm:main} satisfaction on MBPP at Pass@1. Bottom: pass rate when $\bar{\delta}>0$ vs.\ $\bar{\delta}\leq 0$.}
    \label{fig:assumption-mbpp}
  \end{subfigure}%
  \hfill
  \begin{subfigure}[t]{0.5\textwidth}
    \centering
    \includegraphics[width=\textwidth]{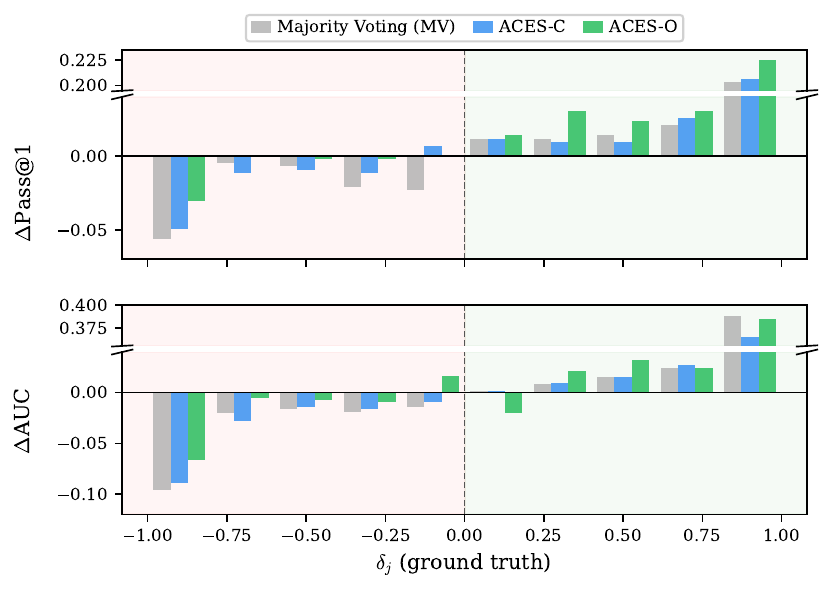}
    \caption{Impact of each $\delta_j$ bin on Pass@1~(top) and AUC~(bottom). Positive: helpful; negative: harmful.}
    \label{fig:binremoval}
  \end{subfigure}

  \caption{\textbf{Empirical analysis on MBPP at Pass@1.}
  \textbf{(a)}~Tasks binned by $\bar{\delta}$; bars show pass/fail counts, lines show pass rate; bottom panel compares pass rates by assumption status.
  \textbf{(b)}~Performance impact of each $\delta_j$ bin upon removing its tests; full results in Appendix~\ref{app:assumption}.}
  \label{fig:analysis}
\end{figure}

We restrict this analysis to non-trivial tasks~($0 < n^+ < n$; details in Table~\ref{tab:assumption-summary}) and non-constant tests~($0 < p_j < 1$), since neither trivial tasks nor constant test columns carry ranking signal.

\textbf{Assumption satisfaction.}
Figure~\ref{fig:assumption-mbpp} empirically validates Assumption~\ref{asm:main} on MBPP, which has the most non-trivial tasks and the lowest satisfaction rate among the three benchmarks.
We bin tasks by $\bar{\delta}$ and use the simplified threshold $\bar{\delta} > 0$ for visual clarity~(the formal threshold $2\sqrt{\ln 2/m}$ is small for $m \approx 500$; see Appendix~\ref{app:assumption} for the formal analysis).
The assumption is well satisfied on all benchmarks; detailed satisfaction rates are in Table~\ref{tab:assumption-summary}.
The performance gains concentrate in the Middle region, where informative and misleading tests coexist: ACES-O passes 52/89 tasks versus 46/89 for ACES-C and 35/89 for Majority Voting.
In the Easy region, all methods achieve near-perfect pass rates~($\geq 97\%$), leaving little room for improvement; in the Hard region, no method passes more than 1 of 14 tasks, suggesting the pass matrix alone is insufficient when average test quality is very low and complementary signals such as static analysis may be needed~(cf.\ Section~\ref{sec:main-results}).
The bottom panel shows that both ACES variants outperform Majority Voting whether or not the assumption holds; when it fails~($\bar{\delta} \leq 0$), ACES-O substantially outperforms ACES-C~(16/52 vs.\ 9/52), showing that iterative optimization is particularly valuable in this regime.

\textbf{Impact of test quality on performance.}
To understand \emph{why} ACES-O outperforms, we measure each $\delta_j$ bin's impact on performance as the drop incurred by removing its tests~(Figure~\ref{fig:binremoval}).
The key finding is an asymmetric dependence on test quality.
In the misleading-test region~($\delta_j < 0$), both ACES variants are more robust: the most misleading bin~($\delta_j \approx {-}0.9$) reduces Pass@1 by $0.056$ for Majority Voting but by only $0.049$ for ACES-C and $0.030$ for ACES-O~($46\%$ less sensitive).
This is consistent with the ACES weights having already down-weighted misleading tests, so that their presence or absence has little effect on the final ranking.
Conversely, informative tests~($\delta_j > 0$) contribute more to ACES-O than to Majority Voting~(e.g., $0.030$ vs.\ $0.012$ at $\delta_j \approx 0.3$), because the optimization has up-weighted these high-quality tests.
This asymmetry shows that LOO-AUC-based weighting effectively separates informative from misleading tests in the pass matrix.

\begin{figure}[t]
  \centering
  \includegraphics[width=\textwidth]{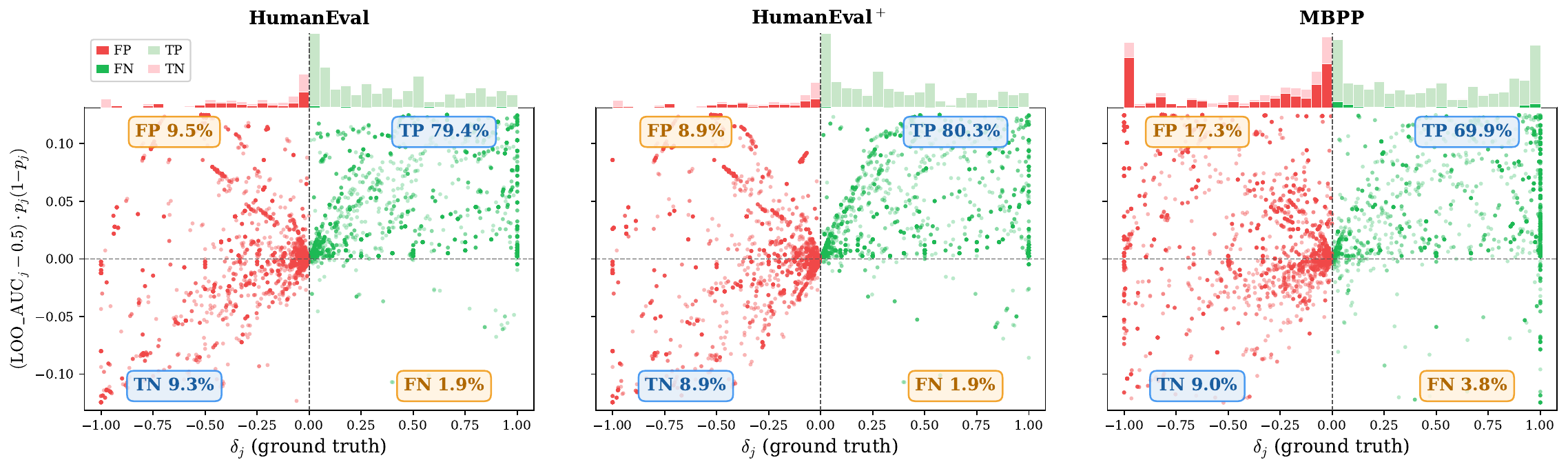}
  \caption{\textbf{Test quality detection.}
  ACES-C weight $(\looauc_j - \tfrac{1}{2})\,p_j(1{-}p_j)$ vs.\ ground-truth discriminative power~$\delta_j$ for all non-trivial tests across three benchmarks.
  Green/red points denote informative/misleading tests~($\delta_j > 0$/$\delta_j < 0$); top marginals give per-bin TP/FP/FN/TN~(saturated: errors; light: correct), and quadrants give TP/FP/FN/TN shares by weight sign~(summing to 100\%).}
  \label{fig:loo-scatter}
\end{figure}

\textbf{Test quality detection.}
Figure~\ref{fig:loo-scatter} compares ACES-C weights with ground-truth~$\delta_j$ across all benchmarks.
Theorem~\ref{lem:loo-identity} predicts that $\E[\looauc_j] - \tfrac{1}{2}$ and $\delta_j$ share a sign; accordingly, weight signs identify at least $94.8\%$ of informative tests.
The top marginals show FP/FN errors cluster near $\delta_j \approx 0$, where weak discriminative power limits their ranking impact.
MBPP exceeds HumanEval in FP fraction~($17.3\%$ vs.\ $9.5\%$) and misleading-test prevalence~(FP${+}$TN: $26.3\%$ vs.\ $18.8\%$); because errors cluster near $\delta_j=0$, this gap lies mainly among borderline tests.
\section{Related Work}
\label{sec:related}

\textbf{Code selection with generated tests.}
Post-hoc selection methods rerank candidates from code LLMs~\citep{lozhkov2024starcoder, hui2024qwen2} using execution-based signals.
CodeT~\citep{chen2022codet} groups candidates into consensus sets by their binary pass profiles and scores each set by the number of passed tests, operating entirely on the pass matrix.
Several methods exploit execution outputs beyond the pass matrix: MBR-exec~\citep{shi2022natural} applies minimum Bayes risk decoding~\citep{eikema2022sampling} via pairwise output comparison, SRank~\citep{to2024functional} clusters candidates by output equivalence, and ALGO~\citep{zhang2023algo} verifies against LLM-synthesized oracle programs.
S$^*$~\citep{li-etal-2025-test} uses LLM-guided tournament-style selection with adaptively generated inputs.
Others incorporate non-execution signals: Coder-Reviewer~\citep{zhang2023coder} uses generation log-probabilities, and LEVER~\citep{ni2023lever} trains a verifier on code and execution features.
MPSC~\citep{huang2024enhancing} adds specification-level consistency voting, and $\mathcal{DS}^3$~\citep{Liu_Xue_Ma_Sun_Li_Li_2026} augments execution with static analysis.
CURE~\citep{wang2025cure} and CoCoEvo~\citep{11098743} jointly improve code and test generators via reinforcement learning or evolutionary search, at the cost of model training; self-debugging~\citep{chen2023teaching} iteratively refines candidates using execution feedback rather than selecting among them.
On the test-quality side, ConVerTest~\citep{convertest2026} applies self-consistency filtering~\citep{wang2022self} before voting, and TestCase-Eval~\citep{yang-etal-2025-llms} evaluates LLM-generated test reliability; recent systematic studies~\citep{li2024doce, sun2024sifting} further examine how execution-based strategies compose.
Across these approaches, no formal criterion has been established for identifying test quality from the pass matrix alone; ACES fills this gap with a provable criterion for distinguishing informative from misleading tests using only binary execution outcomes.

\textbf{Ranking and noisy evaluation.}
Code selection via weighted voting is a bipartite ranking problem~\citep{clemenson2008ranking, agarwal2005generalization, liu2009learning}.
Pairwise surrogate consistency~\citep{gao2015auc} justifies our logistic loss.
Recent extensions address partial~\citep{wang2022optimizing}, weakly supervised~\citep{xie2023wsauc}, and multi-label~\citep{pmlr-v267-lukasik25a} AUC optimization; our setting fits naturally into the last framework, where each test acts as a noisy annotator~\citep{dawid1979maximum}.
Unlike RankBoost~\citep{freund2003efficient}, which requires every ranker to beat random, our Assumption~\ref{asm:main} requires this only \emph{on average}.
The noisy-comparisons literature provides minimax rates~\citep{shah2018simple} and noise tolerance results~\citep{haddad2022noise}, but does not address how to identify reliable comparators.
\citet{nguyen2024noisy} show that multiple annotators are necessary for such identification; our LOO-AUC mechanism addresses this without external supervision.
The verifier paradigm~\citep{cobbe2021training, lightman2023let} and LLM-as-judge evaluation~\citep{zheng2023judging} score candidates via trained or prompted models; ACES requires no additional training.
Our $\delta_j$ mirrors the item discrimination index from classical test theory~\citep{lord2008statistical}, adapted to noisy, machine-generated tests; the key difference is our self-referential LOO-AUC mechanism, which replaces the external criterion classical theory assumes.
\section{Conclusion}
\label{sec:conclusion}

We presented ACES~(AUC Consistency Scoring), built on the insight that \emph{test votes should rank, not merely count}.
By evaluating tests against each other via leave-one-out AUC, we break the circular dependency between code and test quality assessment without any external supervision.
The LOO-AUC identity~(Theorem~\ref{lem:loo-identity}) establishes that each test's observable consistency with the ranking is proportional to its latent discriminative power, providing the first provable criterion for distinguishing informative from misleading tests using only the binary pass matrix.
Building on this, ACES-C provides closed-form weights that provably approximate the oracle in expectation, and ACES-O complements it by iteratively optimizing a differentiable LOO-AUC objective without requiring the average-quality assumption.
Both operate solely on the binary pass matrix with negligible overhead, and achieve state-of-the-art Pass@$k$ among execution-only methods; when combined with complementary static-analysis signals, they yield the best overall results across all benchmarks.

The LOO-AUC framework opens several research directions.
Incorporating correlations among LLM-generated tests could yield tighter bounds and stronger weighting schemes.
More broadly, the principle of assessing evaluator quality via internal consistency extends naturally to other noisy-evaluator settings, including LLM-as-judge ensembles, crowdsourced annotation, and verification with process reward models.

\clearpage
\appendix

\section{Proofs and Theoretical Details}
\label{sec:proofs}

This section provides complete proofs for all theoretical results in the main text: the Pass@$k$ bound~(\ref{app:proof-lemma-passk}), the LOO-AUC identity~(\ref{app:proof-loo-identity}), the structure of $c_j(w_{\mathrm{unif}})$~(\ref{app:proof-loo-quality}), and the oracle approximation guarantee~(\ref{app:improvement}).

\subsection{Proof Preliminaries}
\label{app:proof-prelim}

We summarize the notation and model properties used throughout the proofs.

\paragraph{Notation.}
Throughout the appendix, we use the following notation from the main text:
\begin{itemize}
  \item $\alpha_j = P(B_{ij}=1 \mid y_i = 1)$, $\beta_j = P(B_{ij}=1 \mid y_i = 0)$: class-conditional pass rates of test~$t_j$.
  \item $\delta_j = \alpha_j - \beta_j$: discriminative power of test~$t_j$~(Definition~\ref{def:model}).
  \item $C^+ = \{c_i : y_i = 1\}$, $C^- = \{c_i : y_i = 0\}$: sets of correct and incorrect codes.
  \item $n^+ = |C^+|$, $n^- = |C^-|$, $\pi = n^+/n$: counts and fraction of correct codes.
  \item $p_j = P(B_{ij} = 1) = \pi\alpha_j + (1-\pi)\beta_j$: marginal pass rate of test~$t_j$.
\end{itemize}

\paragraph{Model properties.}
As stated in Section~\ref{sec:framework}, codes are sampled independently from the LLM.
Conditioned on the correctness label~$y_i$, each code's test outcomes are identically distributed and mutually independent across tests, since the class-conditional pass rates $\alpha_j$ and $\beta_j$~(Definition~\ref{def:model}) depend only on the test and the label, not on the specific code.
It follows that the true AUC $A(w)$ is the same for every pair $(c^+ \in C^+,\, c^- \in C^-)$ and that two codes from the same class have identically distributed scores.

\paragraph{Hoeffding's inequality.}
Several proofs below rely on the following classical concentration inequality, which we state here for reference.

\begin{lemma}[Hoeffding's inequality~\citep{hoeffding1963probability}]\label{lem:hoeffding}
  If $Z_1, \ldots, Z_m$ are independent random variables with $Z_j \in [a_j, b_j]$, then for any $t > 0$:
  \begin{equation*}
    P\!\left(\sum_{j=1}^{m} Z_j - \E\!\left[\sum_{j=1}^{m} Z_j\right] \leq -t\right) \;\leq\; \exp\!\left(-\frac{2\,t^2}{\sum_{j=1}^{m}(b_j - a_j)^2}\right).
  \end{equation*}
\end{lemma}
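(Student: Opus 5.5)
We will prove Lemma~\ref{lem:hoeffding} by the standard Chernoff (exponential moment) method, applied to the lower tail. Write $X_j = Z_j - \E[Z_j]$, so that $X_j$ is mean zero and supported in an interval of length $b_j - a_j$. For any $\lambda > 0$, the event $\sum_j X_j \le -t$ coincides with $\exp(-\lambda \sum_j X_j) \ge e^{\lambda t}$. Markov's inequality then gives
\begin{equation*}
  P\!\left(\sum_j X_j \le -t\right) \;\le\; e^{-\lambda t}\,\E\!\left[e^{-\lambda \sum_j X_j}\right] \;=\; e^{-\lambda t}\prod_{j=1}^m \E\!\left[e^{-\lambda X_j}\right],
\end{equation*}
where the factorization uses the independence of the $Z_j$. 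The whole problem thus reduces to bounding a single moment generating function.

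The key step, and the one I expect to be the main obstacle, is \emph{Hoeffding's lemma}: if $X$ has mean zero and $X \in [a,b]$, then $\E[e^{sX}] \le \exp(s^2(b-a)^2/8)$ for every real $s$. We apply it with $s = -\lambda$ and the shifted interval $[a_j - \E Z_j,\, b_j - \E Z_j]$, which still has length $b_j - a_j$. The proof proceeds in two stages.

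\emph{Convexity.} Since $x \mapsto e^{sx}$ is convex, on $[a,b]$ it lies below its chord:
\begin{equation*}
  e^{sx} \;\le\; \frac{b-x}{b-a}\,e^{sa} + \frac{x-a}{b-a}\,e^{sb}.
\end{equation*}
Taking expectations and using $\E X = 0$ yields $\E[e^{sX}] \le (1-\theta)e^{sa} + \theta e^{sb}$ with $\theta = -a/(b-a) \in [0,1]$.

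\emph{Reparametrization.} Set $h = s(b-a)$. The logarithm of the right-hand side becomes
\begin{equation*}
  L(h) = -\theta h + \log\!\left(1 - \theta + \theta e^{h}\right).
\end{equation*}
One checks directly that $L(0) = 0$ and $L'(0) = 0$. Moreover, $L''(h) = u(1-u)$ with $u = \theta e^h/(1-\theta+\theta e^h) \in [0,1]$, so $L''(h) \le 1/4$. Taylor's theorem with Lagrange remainder then gives $L(h) \le h^2/8$, which is exactly Hoeffding's lemma. The delicate part is this computation of $L''$ and the observation that it has the form $u(1-u)$; everything else is routine.

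Substituting the lemma into the Chernoff bound gives, for every $\lambda > 0$,
\begin{equation*}
  P\!\left(\sum_j X_j \le -t\right) \;\le\; \exp\!\left(-\lambda t + \frac{\lambda^2}{8}\sum_j (b_j-a_j)^2\right).
\end{equation*}
The exponent is a quadratic in $\lambda$, minimized at $\lambda = 4t/\sum_j (b_j-a_j)^2$. At that value it equals $-2t^2/\sum_j (b_j-a_j)^2$, which is the stated bound.
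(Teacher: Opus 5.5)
Your proof is correct and is the standard argument, essentially Hoeffding's original one: the Chernoff step, Hoeffding's lemma via the chord bound and $L''(h) = u(1-u) \le 1/4$, and the choice $\lambda = 4t/\sum_j (b_j-a_j)^2$ all check out. The paper gives no proof of its own, only the citation to Hoeffding (1963), so there is nothing to compare against; one detail worth adding is that indices with $b_j = a_j$, where $\theta$ is undefined, have $X_j = 0$ almost surely and contribute a factor of $1$, so you can drop them before applying the lemma.
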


\subsection{Proof of Theorem~\ref{thm:hoeffding}~(Pass@k Bound)}
\label{app:proof-lemma-passk}\label{app:proof-thm-hoeffding}

\begin{theorem*}[\ref{thm:hoeffding}, restated]
  Define the \emph{mean signal} and \emph{signal-to-noise ratio} of weights $w$:
  \begin{equation*}
    M(w) \;:=\; \sum_j w_j\delta_j, \qquad R(w) \;:=\; \frac{M(w)^2}{\sum_j w_j^2}.
  \end{equation*}
  For any $w$ with $M(w) > 0$ and any $k \geq 1$:
  \begin{equation*}
    \passk(w) \;\geq\; 1 - \frac{n^-}{k}\,\exp\!\left(-\frac{R(w)}{2}\right),
  \end{equation*}
  where $n^- = n - n^+$ is the number of incorrect codes.
\end{theorem*}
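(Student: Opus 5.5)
We prove the bound in three steps: a pairwise Hoeffding bound, a counting step that reduces Pass@$k$ failure to many incorrect codes outranking one correct code, and a Markov step. The restated version omits the claim about the maximizer of $R(w)$; we handle it at the end.

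\textbf{Step 1 (pairwise bound).} Fix one correct code $c^+$ and one incorrect code $c^-$. By the model assumptions, conditioned on the labels the votes $h_j = B_{c^+,j}-B_{c^-,j}$ are independent across $j$ and satisfy $\E[h_j]=\alpha_j-\beta_j=\delta_j$. The variables $Z_j = w_j h_j$ lie in $[-w_j,w_j]$, an interval of width $2w_j$. Apply Lemma~\ref{lem:hoeffding} with $t=M(w)>0$:
\begin{equation*}
P\bigl(s_{c^+}(w)-s_{c^-}(w)\le 0\bigr)\le \exp\!\left(-\frac{2M(w)^2}{4\sum_j w_j^2}\right)=\exp\!\left(-\frac{R(w)}{2}\right).
\end{equation*}
The event $\{\le 0\}$ includes ties, so it also covers the case where random tie-breaking puts $c^-$ above $c^+$.

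\textbf{Step 2 (reduction to one correct code).} Fix any correct code $c^+$, which exists since $n^+\ge 1$ is implicit. If Pass@$k$ fails, then $\rank(c^+)>k$, so at least $k$ codes rank above $c^+$. All of these must be incorrect, because otherwise some correct code would sit in the top $k$. Let $N$ be the number of incorrect codes $c^-$ with $s_{c^-}(w)\ge s_{c^+}(w)$. Then
\begin{equation*}
1-\passk(w)\le P(\rank(c^+)>k)\le P(N\ge k).
\end{equation*}

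\textbf{Step 3 (Markov and linearity).} By Markov's inequality, $P(N\ge k)\le \E[N]/k$. By linearity of expectation and Step 1, $\E[N]\le n^-\exp(-R(w)/2)$. Combining gives
\begin{equation*}
\passk(w)\ge 1-\frac{n^-}{k}\exp\!\left(-\frac{R(w)}{2}\right).
\end{equation*}
This step does not require independence across different incorrect codes.

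\textbf{Maximizer of $R(w)$.} $R$ is scale-invariant, so we can maximize $M(w)$ subject to $\|w\|_2=1$ and $w\ge0$. For $w\ge 0$ we have $M(w)\le\sum_j w_j\max(0,\delta_j)$, and by Cauchy–Schwarz this is at most $\|\max(0,\delta)\|_2$. Equality holds at $w\propto\max(0,\delta)$.

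\textbf{Main obstacle.} The delicate point is Step 2, specifically handling ties and the randomness of tie-breaking. Counting weak inequalities $\ge$ for the incorrect codes, and using $\le 0$ in the Hoeffding event, resolves it conservatively.
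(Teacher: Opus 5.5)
Your argument is the paper's own proof, and it is correct once one slip in Step 2 is fixed. It has the same three parts: a pairwise Hoeffding bound with $t=M(w)$, a count of incorrect codes scoring at least as high as a correct code, and Markov plus linearity. The only difference is that you fix an arbitrary correct code $c^+$ from the start. The paper first introduces the top-scoring correct code $c^*$, then immediately bounds its count by that of an arbitrary $c^+$. Your version skips that detour. Because you apply the pairwise bound to each pair separately, you also do not need the paper's exchangeability remark.

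The slip is in the displayed chain $1-\passk(w)\le P(\rank(c^+)>k)\le P(N\ge k)$. Its middle inequality is false in general. Other correct codes may outrank $c^+$, so $\rank(c^+)>k$ can occur with $N=0$. The inclusion you need comes from the failure event itself. If Pass@$k$ fails, the top $k$ codes are all incorrect and each is ranked above $c^+$, hence scores at least $s_{c^+}(w)$. This gives $\{\text{Pass@}k\text{ fails}\}\subseteq\{N\ge k\}$ directly.

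Your sentence ``otherwise some correct code would sit in the top $k$'' already uses exactly this reasoning, so just drop the intermediate term. Read ``all of these'' as the top $k$ codes, not as every code ranked above $c^+$.
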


\begin{proof}
The proof proceeds in two steps: first we reduce $\passk$ to a pairwise error probability, then we bound that probability using Hoeffding's inequality.

\paragraph{Step I: Reducing Pass@$k$ to pairwise comparisons.}
Let $c^* = \arg\max_{c \in C^+} s_c(w)$ be the highest-scoring correct code, and define
\begin{equation*}
  R_{c^*} \;=\; |\{c^- \in C^- : s_{c^-}(w) \geq s_{c^*}(w)\}|,
\end{equation*}
i.e., the number of incorrect codes that score at least as high as the best correct code.
If Pass@$k$ fails~(no correct code in the top $k$), then all $k$ top-scoring codes are incorrect and each scores $\geq s_{c^*}(w)$, so $R_{c^*} \geq k$.
We now bound $\E[R_{c^*}]$.
For any fixed correct code $c^+$~(not necessarily $c^*$) and any incorrect code $c^-$, the event $\{s_{c^-} \geq s_{c^*}\}$ implies $\{s_{c^-} \geq s_{c^+}\}$~(since $c^*$ has the highest score among correct codes).
Therefore:
\begin{equation*}
  P(s_{c^-} \geq s_{c^*}) \;\leq\; P(s_{c^-} \geq s_{c^+}).
\end{equation*}
By exchangeability, the right-hand side is the same for every pair $(c^+, c^-)$.
Summing over all $n^-$ incorrect codes:
\begin{equation*}
  \E[R_{c^*}] \;\leq\; n^- \cdot P(s_{c^-} \geq s_{c^+}).
\end{equation*}
Applying Markov's inequality:
\begin{equation*}
  P(\text{Pass@}k \text{ fails}) \;\leq\; P(R_{c^*} \geq k) \;\leq\; \frac{n^- \cdot P(s_{c^-} \geq s_{c^+})}{k}.
\end{equation*}
It remains to bound the pairwise error probability $P(s_{c^-} \geq s_{c^+})$.

\paragraph{Step II: Bounding pairwise error via Hoeffding's inequality.}
Fix a pair $(c^+, c^-)$ and define the per-test vote $h_j = B_{c^+,j} - B_{c^-,j} \in \{-1,0,+1\}$.
Then the score difference is:
\begin{equation*}
  X \;=\; s_{c^+}(w) - s_{c^-}(w) \;=\; \sum_{j=1}^{m} w_j\, h_j.
\end{equation*}
Since tests are independently sampled, conditioned on $(y_{c^+}, y_{c^-}) = (1, 0)$, the test outcomes for $c^+$ and $c^-$ are independent across tests.
This means $\{h_j\}_{j=1}^m$ are mutually independent random variables with:
\begin{equation*}
  \E[h_j] \;=\; \alpha_j - \beta_j \;=\; \delta_j, \qquad \E[X] \;=\; \sum_j w_j \delta_j \;=\; M(w) \;>\; 0.
\end{equation*}
Each $w_j h_j \in [-w_j, w_j]$.
Applying Lemma~\ref{lem:hoeffding} with $Z_j = w_j h_j$~(so $a_j = -w_j$, $b_j = w_j$, $b_j - a_j = 2w_j$) and $t = M(w)$:
\begin{align*}
  P(s_{c^-} \geq s_{c^+})
  &= P(X \leq 0)
  \;\leq\; \exp\!\left(-\frac{2\,M(w)^2}{\sum_j (2w_j)^2}\right) \\
  &= \exp\!\left(-\frac{M(w)^2}{2\sum_j w_j^2}\right)
  \;=\; \exp\!\left(-\frac{R(w)}{2}\right).
\end{align*}

\paragraph{Combining.}
Substituting the Hoeffding bound into Step~I:
\begin{equation*}
  P(\text{Pass@}k \text{ fails}) \;\leq\; \frac{n^-}{k}\,\exp\!\left(-\frac{R(w)}{2}\right),
\end{equation*}
and therefore:
\begin{equation*}
  \passk(w) \;=\; 1 - P(\text{Pass@}k \text{ fails}) \;\geq\; 1 - \frac{n^-}{k}\,\exp\!\left(-\frac{R(w)}{2}\right).
\end{equation*}

\paragraph{Step III: Oracle-optimal weights.}\label{app:proof-oracle}
Since $R(w) = M(w)^2 / \sum_j w_j^2$ is scale-invariant, we may normalize $\sum_j w_j^2 = 1$, so that $R(w) = (\sum_j w_j \delta_j)^2$.
For $w_j \geq 0$, any test with $\delta_j \leq 0$ contributes non-positively to $\sum_j w_j \delta_j$, so the optimum requires $w^*_j = 0$ for all such tests.
Among the remaining tests~($\delta_j > 0$), by Cauchy--Schwarz:
\begin{equation*}
  \sum_{j:\,\delta_j > 0} w_j \delta_j \;\leq\; \sqrt{\sum_{j:\,\delta_j > 0} w_j^2}\;\cdot\; \sqrt{\sum_{j:\,\delta_j > 0} \delta_j^2},
\end{equation*}
with equality iff $w_j \propto \delta_j$.
Under the normalization $\sum_j w_j^2 = 1$, this gives $w^*_j \propto \max(0,\;\delta_j)$ and $R(w^*) = \sum_{j:\delta_j > 0} \delta_j^2$.
Since $\sum_{j:\delta_j \leq 0} \delta_j \leq 0$, we have $\sum_{j:\delta_j > 0} \delta_j \geq m\bar{\delta}$, and by Cauchy--Schwarz with $g = |\{j : \delta_j > 0\}| \leq m$:
\begin{equation*}
  R^* \;=\; \sum_{j:\,\delta_j > 0} \delta_j^2 \;\geq\; \frac{(m\bar{\delta})^2}{m} \;=\; m\bar{\delta}^2 \;=\; R(w_{\mathrm{unif}}). \qedhere
\end{equation*}
\end{proof}

\subsection{Proof of Theorem~\ref{lem:loo-identity}~(LOO-AUC Identity)}
\label{app:proof-loo-identity}

\begin{theorem*}[\ref{lem:loo-identity}, restated]
  For any weights $w$:
  \begin{equation*}
    \E[\looauc_j(w)] - \frac{1}{2} \;=\; c_j(w) \cdot \delta_j,
    \qquad c_j(w) \;:=\; \frac{\pi(1-\pi)}{p_j(1-p_j)}\left(A^{(-j)}(w) - \frac{1}{2}\right),
  \end{equation*}
  where $\pi = n^+/n$ is the fraction of correct codes and $p_j = P(B_{ij} = 1)$ is the marginal pass rate of test~$j$.
\end{theorem*}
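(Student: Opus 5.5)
The plan is to treat $\looauc_j(w)$ as an average, over all pairs $(a,b)$ with $B_{aj}=1$ and $B_{bj}=0$, of the pairwise indicator
\[
\phi_{ab}=\mathbf{1}[S^{(-j)}_a>S^{(-j)}_b]+\tfrac12\mathbf{1}[S^{(-j)}_a=S^{(-j)}_b].
\]
The expectation then reduces to the expectation of $\phi_{ab}$ for a single generic pair: a code $a$ that passes $t_j$ and a code $b$ that fails it. Concretely, I will condition on the column $B_{:,j}$ and use exchangeability of codes (Appendix~\ref{app:proof-prelim}) to argue that every such pair has the same conditional expectation of $\phi_{ab}$. Averaging over pairs then gives $\E[\looauc_j(w)]=\E[\phi_{ab}\mid B_{aj}=1,B_{bj}=0]$. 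In doing so I treat the labels of $a$ and $b$ as independent draws with $P(y=1)=\pi$, i.e.\ the population version implicit in defining $p_j=P(B_{ij}=1)=\pi\alpha_j+(1-\pi)\beta_j$.

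The key structural step uses conditional independence across tests. Given $(y_a,y_b)$, the scores $S^{(-j)}_a,S^{(-j)}_b$ depend only on columns $j'\neq j$, so they are independent of $(B_{aj},B_{bj})$. Hence
\[
\E[\phi_{ab}\mid B_{aj}=1,B_{bj}=0]=\sum_{u,v\in\{0,1\}} P(y_a=u\mid B_{aj}=1)\,P(y_b=v\mid B_{bj}=0)\,Q(u,v),
\]
where $Q(u,v)$ is the tie-corrected probability that a class-$u$ code outscores a class-$v$ code under $S^{(-j)}$. By definition $Q(1,0)=A^{(-j)}(w)$, and by symmetry $Q(0,1)=1-A^{(-j)}(w)$. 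For same-class pairs the two scores are i.i.d., so $Q(u,u)=\tfrac12$.

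Next I apply Bayes' rule to get the posterior probabilities:
\[
P(y_a=1\mid B_{aj}=1)=\pi\alpha_j/p_j,\qquad P(y_b=1\mid B_{bj}=0)=\pi(1-\alpha_j)/(1-p_j).
\]
Subtracting $\tfrac12$, the same-class terms cancel, leaving
\[
\big(A^{(-j)}-\tfrac12\big)\big[P(y_a=1,y_b=0)-P(y_a=0,y_b=1)\big].
\]
Expanding the bracket gives
\[
\frac{\pi(1-\pi)\,[\alpha_j(1-\beta_j)-\beta_j(1-\alpha_j)]}{p_j(1-p_j)}=\frac{\pi(1-\pi)}{p_j(1-p_j)}\,\delta_j,
\]
which is exactly $c_j(w)\delta_j$.

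The main obstacle is the first step: the empirical LOO-AUC is a ratio whose denominator (the number of pass/fail pairs in column $j$) is random, and with a fixed $n^+$ the labels are sampled without replacement, so $a$ and $b$ are not exactly independent. I would handle this by conditioning on the column, noting that the denominator is then fixed and each pair contributes identically. I would then state the result for the independent-label (large-$n$) model underlying the paper's definitions of $\pi$ and $p_j$, and remark that the finite-population correction is $O(1/n)$.
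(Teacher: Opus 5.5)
Your proposal is correct and follows essentially the same route as the paper: condition on the true labels of a random passer/failer pair, assign the three cases the values $A^{(-j)}(w)$, $1-A^{(-j)}(w)$ and $\tfrac12$, and use Bayes' rule for the posteriors to obtain $q_{10}-q_{01}=\pi(1-\pi)\delta_j/[p_j(1-p_j)]$. You are more careful than the paper in two places it leaves implicit: you invoke conditional independence across tests to justify that conditioning on $B_{aj},B_{bj}$ leaves the leave-one-out comparison equal to $A^{(-j)}(w)$, and you note that the passer and failer labels are exactly independent (as the paper simply asserts) only in the i.i.d.-label model, with an $O(1/n)$ correction when $n^+$ is fixed.
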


\begin{proof}
  \textbf{Setup.}
  Recall the true AUC of the leave-one-out ranking:
  \begin{equation*}
    A^{(-j)}(w) \;=\; P\bigl(S^{(-j)}_{c^+}(w) > S^{(-j)}_{c^-}(w)\bigr) \;+\; \frac{1}{2}\,P\bigl(S^{(-j)}_{c^+}(w) = S^{(-j)}_{c^-}(w)\bigr),
  \end{equation*}
  and $\looauc_j(w) = \AUC(S^{(-j)}, B_{:,j})$, where $S^{(-j)}$ are the leave-one-out scores and $B_{:,j}$ are the pass/fail labels of test~$t_j$.
  By the probabilistic interpretation of AUC, $\looauc_j(w)$ equals the probability that, for a randomly drawn pair of codes~(one from those that pass test~$t_j$ and one from those that fail it), the passer receives a higher leave-one-out score than the failer~(with ties counted as $1/2$).

  Formally, let $c_\mathrm{p}$ and $c_\mathrm{f}$ denote codes drawn uniformly at random from $\{i : B_{ij} = 1\}$ and $\{i : B_{ij} = 0\}$, respectively.
  We emphasize that $c_\mathrm{p}$ and $c_\mathrm{f}$ are defined by the observed pass/fail outcome on test~$t_j$, not by the true labels: unlike $c^+ \in C^+$ and $c^- \in C^-$~(which are truly correct and incorrect), a passer $c_\mathrm{p}$ may be incorrect and a failer $c_\mathrm{f}$ may be correct.
  With this notation:
  \begin{equation*}
    \E[\looauc_j(w)] \;=\; P\bigl(S^{(-j)}_{c_\mathrm{p}} > S^{(-j)}_{c_\mathrm{f}}\bigr) \;+\; \frac{1}{2}\,P\bigl(S^{(-j)}_{c_\mathrm{p}} = S^{(-j)}_{c_\mathrm{f}}\bigr).
  \end{equation*}

  \textbf{Case decomposition.}
  We condition on the true labels $(y_{c_\mathrm{p}}, y_{c_\mathrm{f}})$.
  Let $q_{10}$, $q_{01}$, and $q_{\mathrm{same}}$ denote the probabilities that the pair falls into each case~(these sum to $1$):

  \smallskip
  \begin{center}
  \small
  \begin{tabular}{cccc}
    \toprule
    $y_{c_\mathrm{p}}$ & $y_{c_\mathrm{f}}$ & Prob. & Cond.\ AUC$^*$ \\
    \midrule
    correct & incorrect & $q_{10}$ & $A^{(-j)}(w)$ \\
    incorrect & correct & $q_{01}$ & $1 - A^{(-j)}(w)$ \\
    same class & same class & $q_{\mathrm{same}}$ & $1/2$ \\
    \bottomrule
  \end{tabular}
  \end{center}
  \smallskip

  \noindent $^*$The last column is the conditional AUC of the leave-one-out ranking for each case:
  \begin{equation*}
    P\bigl(S^{(-j)}_{c_\mathrm{p}} > S^{(-j)}_{c_\mathrm{f}} \mid y_{c_\mathrm{p}}, y_{c_\mathrm{f}}\bigr) \;+\; \frac{1}{2}\,P\bigl(S^{(-j)}_{c_\mathrm{p}} = S^{(-j)}_{c_\mathrm{f}} \mid y_{c_\mathrm{p}}, y_{c_\mathrm{f}}\bigr).
  \end{equation*}
  In the first row, $c_\mathrm{p}$ is correct and $c_\mathrm{f}$ is incorrect, so this reduces to $A^{(-j)}(w)$ by definition.
  In the second row the roles are reversed, giving $1 - A^{(-j)}(w)$.
  In the third row, both codes belong to the same class; by exchangeability their leave-one-out scores are identically distributed, so the last column equals $1/2$.
  By the law of total probability, and substituting $q_{\mathrm{same}} = 1 - q_{10} - q_{01}$:
  \begin{align*}
    \E[\looauc_j(w)]
    &= q_{10}\,A^{(-j)} + q_{01}(1 - A^{(-j)}) + (1 - q_{10} - q_{01}) \cdot \frac{1}{2} \\
    &= \frac{1}{2} + q_{10}\!\left(A^{(-j)} - \frac{1}{2}\right) - q_{01}\!\left(A^{(-j)} - \frac{1}{2}\right) \\
    &= \frac{1}{2} + (q_{10} - q_{01})\!\left(A^{(-j)}(w) - \frac{1}{2}\right).
  \end{align*}

  \textbf{Computing $q_{10}$ and $q_{01}$.}
  The quantity $q_{10}$ is the probability that $c_\mathrm{p}$ is correct and $c_\mathrm{f}$ is incorrect.
  Recall that $\pi = n^+/n$ is the fraction of correct codes, $\alpha_j = P(B_{ij}=1 \mid y_i=1)$ and $\beta_j = P(B_{ij}=1 \mid y_i=0)$ are the class-conditional pass rates, and $p_j = \pi\alpha_j + (1-\pi)\beta_j$ is the marginal pass rate.
  By Bayes' rule, the probability that a code passing test~$j$ is correct is:
  \begin{equation*}
    P(y_i = 1 \mid B_{ij} = 1) \;=\; \frac{P(B_{ij} = 1 \mid y_i = 1)\,P(y_i = 1)}{P(B_{ij} = 1)} \;=\; \frac{\pi\alpha_j}{p_j},
  \end{equation*}
  and the probability that a code failing test~$j$ is incorrect is:
  \begin{equation*}
    P(y_i = 0 \mid B_{ij} = 0) \;=\; \frac{P(B_{ij} = 0 \mid y_i = 0)\,P(y_i = 0)}{P(B_{ij} = 0)} \;=\; \frac{(1-\pi)(1-\beta_j)}{1-p_j}.
  \end{equation*}
  Since $c_\mathrm{p}$ and $c_\mathrm{f}$ are drawn independently, $q_{10}$ is the product of these two probabilities:
  \begin{equation*}
    q_{10} \;=\; \frac{\pi\alpha_j}{p_j}\cdot\frac{(1-\pi)(1-\beta_j)}{1-p_j}.
  \end{equation*}
  For $q_{01}$~($c_\mathrm{p}$ is incorrect, $c_\mathrm{f}$ is correct), the probability that a code passing test~$j$ is incorrect is:
  \begin{equation*}
    P(y_i = 0 \mid B_{ij} = 1) \;=\; \frac{P(B_{ij} = 1 \mid y_i = 0)\,P(y_i = 0)}{P(B_{ij} = 1)} \;=\; \frac{(1-\pi)\beta_j}{p_j},
  \end{equation*}
  and the probability that a code failing test~$j$ is correct is:
  \begin{equation*}
    P(y_i = 1 \mid B_{ij} = 0) \;=\; \frac{P(B_{ij} = 0 \mid y_i = 1)\,P(y_i = 1)}{P(B_{ij} = 0)} \;=\; \frac{\pi(1-\alpha_j)}{1-p_j}.
  \end{equation*}
  Therefore:
  \begin{equation*}
    q_{01} \;=\; \frac{(1-\pi)\beta_j}{p_j}\cdot\frac{\pi(1-\alpha_j)}{1-p_j}.
  \end{equation*}

  \textbf{Computing $q_{10} - q_{01}$.}
  Factoring out the common terms:
  \begin{align*}
    q_{10} - q_{01}
    &= \frac{\pi(1-\pi)}{p_j(1-p_j)}\bigl[\alpha_j(1-\beta_j) - \beta_j(1-\alpha_j)\bigr] \\
    &= \frac{\pi(1-\pi)}{p_j(1-p_j)}\bigl[\alpha_j - \beta_j\bigr]
    \;=\; \frac{\pi(1-\pi)\,\delta_j}{p_j(1-p_j)}.
  \end{align*}
  Substituting back into the total probability expression:
  \begin{equation*}
    \E[\looauc_j(w)] - \frac{1}{2} \;=\; (q_{10} - q_{01})\!\left(A^{(-j)}(w) - \frac{1}{2}\right)
    \;=\; \underbrace{\frac{\pi(1-\pi)}{p_j(1-p_j)} \cdot \left(A^{(-j)}(w) - \frac{1}{2}\right)}_{=\;c_j(w)} \cdot\; \delta_j. \qedhere
  \end{equation*}
\end{proof}

\subsection{Proof of Proposition~\ref{thm:loo-quality}~(Structure of \texorpdfstring{$c_j(w_{\mathrm{unif}})$}{c\_j(w\_unif)})}
\label{app:proof-loo-quality}

\begin{proposition*}[\ref{thm:loo-quality}, restated]
  Under Assumption~\ref{asm:main}, $A^{(-j)}(w_{\mathrm{unif}}) > 1/2$ for all~$j$, so $c_j(w_{\mathrm{unif}}) > 0$.
  Moreover, $A^{(-j)}(w_{\mathrm{unif}})$ is approximately constant across~$j$.
\end{proposition*}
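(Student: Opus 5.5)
The plan is to lower-bound $A^{(-j)}(w_{\mathrm{unif}})$ through the same Hoeffding argument used in Step~II of the proof of Theorem~\ref{thm:hoeffding}, now applied to the $m-1$ remaining tests. Fix a pair $(c^+,c^-)$ and write $X^{(-j)} = \sum_{j'\neq j} h_{j'}$; uniform weights are a common scalar and do not affect the ranking. The votes $h_{j'}$ are independent, lie in $[-1,1]$, and have means $\delta_{j'}$. Hence
\[
\E[X^{(-j)}] = m\bar\delta - \delta_j \;\geq\; m\bar\delta - 1 .
\]
Since $A^{(-j)} = 1 - P(X^{(-j)}<0) - \tfrac12 P(X^{(-j)}=0) \geq 1 - P(X^{(-j)}\leq 0)$, it suffices to show $P(X^{(-j)}\leq 0) < 1/2$.

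Lemma~\ref{lem:hoeffding} with $t = m\bar\delta-\delta_j$ gives
\[
P(X^{(-j)}\le 0)\le \exp\!\bigl(-(m\bar\delta-\delta_j)^2/(2(m-1))\bigr).
\]
This is below $1/2$ once $(m\bar\delta-\delta_j)^2 > 2(m-1)\ln 2$. Assumption~\ref{asm:main} yields $m\bar\delta > 2\sqrt{m\ln 2}$, so $m\bar\delta - \delta_j > 2\sqrt{m\ln2}-1$. We then need $(2\sqrt{m\ln 2}-1)^2 \ge 2(m-1)\ln 2$, which reduces to
\[
2m\ln 2 - 4\sqrt{m\ln2} + 1 + 2\ln 2 \ge 0 .
\]
I would verify this as a quadratic in $u=\sqrt{m\ln 2}$: $2u^2-4u+(1+2\ln2)$ has discriminant $16-8(1+2\ln 2)<0$, so it is positive for all $u$. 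This gives $A^{(-j)} \ge 1 - \exp(\cdot) > 1/2$. Then $c_j(w_{\mathrm{unif}}) > 0$ follows directly from its definition in Theorem~\ref{lem:loo-identity}, since $\pi(1-\pi)>0$ and $p_j(1-p_j)>0$ because constant columns were removed and the task is non-trivial. The factor $2$ in the assumption, rather than $\sqrt{2}$, provides exactly the slack that absorbs the removed test's $\delta_j\le 1$.

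For the approximate constancy across $j$, I would compare $A^{(-j)}$ with the full-test AUC $A$. The two sums $X$ and $X^{(-j)}$ differ by the single term $h_j\in\{-1,0,1\}$. A coupling argument therefore gives
\[
|A - A^{(-j)}| \le P\bigl(X^{(-j)}\in\{-1,0,1\}\bigr).
\]
Here $X^{(-j)}$ is a sum of $m-1$ independent bounded variables with variance $\sigma^2=\sum_{j'\ne j}\mathrm{Var}(h_{j'})$. Bounding this point-mass probability is the main obstacle. My first attempt would be a Berry--Esseen or Kolmogorov--Rogozin anticoncentration estimate, giving $O(1/\sigma)$, which is $O(1/\sqrt m)$ when the vote variances are bounded away from zero. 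As a fallback that needs no variance condition, I would bound the point mass by the crude Hoeffding tail
\[
P(X^{(-j)}\le 1) \le \exp\!\bigl(-(m\bar\delta-2)^2/(2(m-1))\bigr),
\]
which is small whenever $\bar\delta\sqrt m$ is large. Either route gives $\max_{j,j'}|A^{(-j)}-A^{(-j')}| = o(1)$ uniformly in $j$. The statement is only qualitative ("approximately constant"), so I would report the explicit bound $2\sup_k P(X^{(-j)}=k)$ and note that it vanishes as $m$ grows.
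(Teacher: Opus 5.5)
Your argument is correct. The first half is the paper's proof. You apply Hoeffding to the $m-1$ remaining votes with mean $m\bar\delta-\delta_j\ge m\bar\delta-1$ and reduce to $(m\bar\delta-1)^2>2(m-1)\ln 2$. You check that inequality via a negative discriminant in $u=\sqrt{m\ln 2}$; the paper substitutes $m<(m\bar\delta)^2/(4\ln 2)$ and completes the square, which is equivalent.

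For approximate constancy the routes genuinely differ. The paper simply reuses the uniform tail bound: every $A^{(-j)}(w_{\mathrm{unif}})$ lies in $[1-\varepsilon,1]$ with $\varepsilon=\exp\bigl(-(m\bar\delta-1)^2/(2(m-1))\bigr)$, so any two differ by at most $\varepsilon$. Your first half already gives exactly this. Your Hoeffding fallback is therefore a detour that ends with a slightly worse constant ($m\bar\delta-2$ in place of $m\bar\delta-1$).

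What your coupling with the full AUC $A$ genuinely buys is the anticoncentration route. Condition on $X^{(-j)}$, which is independent of $h_j$; only the values $-1,0,1$ contribute. This gives $A-A^{(-j)}=\tfrac12\bigl[\delta_j P(X^{(-j)}=0)+P(h_j=1)P(X^{(-j)}=-1)-P(h_j=-1)P(X^{(-j)}=1)\bigr]$, hence $|A-A^{(-j)}|\le\sup_k P(X^{(-j)}=k)$. This is what actually justifies your reported constant $2\sup_k$; your three-point bound as written only yields $6\sup_k$.

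Combined with Kolmogorov--Rogozin, this gives $O(1/\sqrt m)$ closeness even when $m\bar\delta^2$ stays bounded, for instance near the threshold of Assumption~\ref{asm:main}. There the $A^{(-j)}$ need not be close to $1$, and the paper's $\varepsilon$ only tends to $1/4$. Your route also matches the assumption's verbal gloss that leaving out one test has negligible effect. The price is a non-degeneracy condition on the vote distributions that the assumption does not provide.

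Neither the paper's $\varepsilon$ nor your fallback bound is $o(1)$ under the assumption alone; both need $\bar\delta\sqrt m\to\infty$. So ``approximately constant'' is qualitative in both treatments.
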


\begin{proof}
Under uniform weights $w_k = 1/m$, removing test~$j$ gives $w^{(-j)}_k = 1/m$ for $k \neq j$ and $w^{(-j)}_j = 0$.
The mean signal and sum of squared weights are:
\begin{equation*}
  M^{(-j)} \;=\; \frac{1}{m}\sum_{k \neq j} \delta_k \;=\; \frac{m\bar{\delta} - \delta_j}{m}, \qquad \sum_{k} (w^{(-j)}_k)^2 \;=\; \frac{m-1}{m^2}.
\end{equation*}
Therefore:
\begin{equation*}
  R^{(-j)}(w_{\mathrm{unif}}) \;=\; \frac{M^{(-j)\,2}}{\sum_k (w^{(-j)}_k)^2} \;=\; \frac{(m\bar{\delta} - \delta_j)^2}{m{-}1}.
\end{equation*}
Since $|\delta_j| \leq 1$~(each $\delta_j = \alpha_j - \beta_j$ is a difference of probabilities) and Assumption~\ref{asm:main} implies $m\bar{\delta} > 2\sqrt{m\ln 2} > 1$, the minimum of $R^{(-j)}(w_{\mathrm{unif}})$ over $\delta_j \in [-1,1]$ is attained at $\delta_j = 1$:
\begin{equation*}
  R^{(-j)}(w_{\mathrm{unif}}) \;\geq\; \frac{(m\bar{\delta} - 1)^2}{m{-}1}.
\end{equation*}

We claim this lower bound exceeds $2\ln 2$.
Let $a = m\bar{\delta}$.
From Assumption~\ref{asm:main}, $m\bar{\delta}^2 > 4\ln 2$, so $m < a^2/(4\ln 2)$ and $2(m{-}1)\ln 2 = 2m\ln 2 - 2\ln 2 < a^2/2 - 2\ln 2$.
Therefore:
\begin{equation*}
  (a - 1)^2 - 2(m{-}1)\ln 2
  \;>\; (a - 1)^2 - \frac{a^2}{2} + 2\ln 2
  \;=\; \frac{(a-2)^2}{2} + (2\ln 2 - 1)
  \;>\; 0,
\end{equation*}
where the last step uses $2\ln 2 > 1$.
Hence $R^{(-j)}(w_{\mathrm{unif}}) > 2\ln 2$ for every~$j$.

To connect this to the AUC, recall from Section~\ref{app:proof-thm-hoeffding} that for the score difference $X = s_{c^+}(w) - s_{c^-}(w)$, the Hoeffding bound gives $P(X \leq 0) \leq \exp(-R(w)/2)$.
Since the true AUC satisfies
\begin{equation*}
  1 - A(w) \;=\; P(X < 0) + \frac{1}{2}\,P(X = 0) \;\leq\; P(X \leq 0),
\end{equation*}
we obtain $1 - A(w) \leq \exp(-R(w)/2)$ for any $w$ with $M(w) > 0$.
Applying this to the leave-one-out ranking:
\begin{equation*}
  1 - A^{(-j)}(w_{\mathrm{unif}}) \;\leq\; \exp\!\left(-\frac{R^{(-j)}(w_{\mathrm{unif}})}{2}\right) \;<\; \exp(-\ln 2) \;=\; \frac{1}{2}.
\end{equation*}
Therefore $A^{(-j)}(w_{\mathrm{unif}}) > 1/2$ for every~$j$, which gives $c_j(w_{\mathrm{unif}}) > 0$.

\paragraph{Approximate constancy of $A^{(-j)}$.}
The bound above gives
\begin{equation*}
  1 - A^{(-j)}(w_{\mathrm{unif}}) \;\leq\; \exp\!\left(-\frac{(m\bar{\delta} - 1)^2}{2(m{-}1)}\right)
\end{equation*}
for every~$j$.
Since all $A^{(-j)}(w_{\mathrm{unif}}) \in (1/2,\, 1]$, for any $a \geq b > 1/2$ we have $a - b \leq 1 - b = \max(1 - a,\, 1 - b)$, so
\begin{equation*}
  \left|A^{(-j)} - A^{(-k)}\right|
  \;\leq\; \max\!\left(1 - A^{(-j)},\; 1 - A^{(-k)}\right)
  \;\leq\; \exp\!\left(-\frac{(m\bar{\delta} - 1)^2}{2(m{-}1)}\right).
\end{equation*}
The same reasoning applies to $A(w_{\mathrm{unif}})$, since $R(w_{\mathrm{unif}}) = m\bar{\delta}^2 > 4\ln 2$ gives $1 - A(w_{\mathrm{unif}}) \leq \exp(-m\bar{\delta}^2/2)$, which is also exponentially small.
Therefore all values $A^{(-j)}(w_{\mathrm{unif}})$ and $A(w_{\mathrm{unif}})$ are exponentially close to each other, and $A^{(-j)}(w_{\mathrm{unif}}) - 1/2$ is approximately constant across~$j$.
\end{proof}

\subsection{Oracle Approximation Guarantee}
\label{app:improvement}

\begin{corollary}\label{cor:improvement}
  Under Assumption~\ref{asm:main}, let $a_j = A^{(-j)}(w_{\mathrm{unif}}) - 1/2$ and $\rho = a_{\min}/a_{\max}$.
  Define the \emph{population ACES-C weights} $\bar{w}_j := \max(0,\,\E[q_j])$, where $q_j$ is the quality score from Theorem~\ref{thm:main}.  Since $a_j > 0$, this gives $\bar{w}_j = \pi(1{-}\pi)\,a_j\,\max(0,\,\delta_j)$.  Then:
  \begin{equation*}
    R(\bar{w}) \;\geq\; \rho^2\, R^*, \qquad \rho \;\geq\; 1 - 2\exp\!\left(-\frac{(m\bar{\delta} - 1)^2}{2(m{-}1)}\right),
  \end{equation*}
  where $R^* = \sum_{j:\,\delta_j > 0} \delta_j^2$ is the oracle signal-to-noise ratio~(Theorem~\ref{thm:hoeffding}).
  Since $\rho \to 1$ as $m$ grows, the population weights achieve near-oracle performance.
\end{corollary}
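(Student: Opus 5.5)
Restrict everything to the set $G = \{j : \delta_j > 0\}$, since $\bar w_j = \pi(1-\pi)\,a_j\max(0,\delta_j)$ vanishes off $G$ and the positive factor $\pi(1-\pi)$ drops out of $R$ by scale invariance. So it suffices to study $w_j = a_j\delta_j$ on $G$ and bound
\begin{equation*}
  R(\bar w) \;=\; \frac{\bigl(\sum_{j\in G} a_j\delta_j^2\bigr)^2}{\sum_{j\in G} a_j^2\delta_j^2}.
\end{equation*}
First I check that $M(\bar w) > 0$, so $R$ is meaningful. This follows because $G$ is nonempty under Assumption~\ref{asm:main}: we have $\bar\delta > 0$, so some $\delta_j > 0$, and every $a_j > 0$ by Proposition~\ref{thm:loo-quality}.

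For the ratio bound, in the numerator use $a_j \ge a_{\min}$ to get $\sum_G a_j\delta_j^2 \ge a_{\min}R^*$. In the denominator use $a_j \le a_{\max}$ to get $\sum_G a_j^2\delta_j^2 \le a_{\max}^2R^*$. Dividing gives $R(\bar w) \ge a_{\min}^2 R^{*2}/(a_{\max}^2 R^*) = \rho^2 R^*$, which is the first claim.

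For the lower bound on $\rho$, I take the quantitative part of the proof of Proposition~\ref{thm:loo-quality}. There, every $j$ satisfies $1 - A^{(-j)}(w_{\mathrm{unif}}) \le \varepsilon := \exp(-(m\bar\delta-1)^2/(2(m-1)))$. It follows that $a_j \in [\tfrac12 - \varepsilon, \tfrac12]$, so $a_{\max} \le 1/2$ and $a_{\min} \ge 1/2 - \varepsilon$. Hence
\begin{equation*}
  \rho \;\ge\; \frac{1/2-\varepsilon}{1/2} \;=\; 1 - 2\varepsilon,
\end{equation*}
exactly as stated. Positivity of $1/2 - \varepsilon$ comes from the bound $R^{(-j)} > 2\ln 2$ shown there, which gives $\varepsilon < 1/2$.

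Finally, $\rho \to 1$ because $(m\bar\delta-1)^2/(m-1) \to \infty$ whenever $\bar\delta$ stays bounded away from zero, and more generally whenever $m\bar\delta^2 \to \infty$. The main point needing care is the step $a_{\min} \ge 1/2 - \varepsilon$. It relies on the proposition's bound $1 - A^{(-j)} \le \exp(-R^{(-j)}/2)$ with $R^{(-j)}$ replaced by its worst case at $\delta_j = 1$. That substitution is valid because $m\bar\delta > 1$, so the bound holds uniformly in $j$, and that uniformity is what the definition of $\rho$ requires.
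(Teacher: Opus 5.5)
Your proposal is correct and follows the paper's proof essentially step for step. Like the paper, you restrict to $\{j:\delta_j>0\}$, cancel $\pi(1-\pi)$ by scale invariance, bound the numerator via $a_{\min}$ and the denominator via $a_{\max}$, and then combine the uniform bound $1-A^{(-j)}(w_{\mathrm{unif}})\le\varepsilon$ from Proposition~\ref{thm:loo-quality} with $a_{\max}\le 1/2$. Your extra checks that $\{j:\delta_j>0\}$ is nonempty (so $R^*>0$) and that $\varepsilon<1/2$ are details the paper leaves implicit.
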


\begin{proof}
  By definition and Theorem~\ref{thm:main}:
  \begin{equation*}
    \bar{w}_j \;=\; \pi(1{-}\pi)\,a_j \max(0,\;\delta_j),
  \end{equation*}
  where $a_j > 0$ by Proposition~\ref{thm:loo-quality}.
  Only tests with $\delta_j > 0$ receive positive population weight, so:
  \begin{equation*}
    M(\bar{w}) \;=\; \sum_{j:\,\delta_j > 0} a_j\,\delta_j^2 \cdot \pi(1{-}\pi), \qquad
    \sum_j \bar{w}_j^2 \;=\; \sum_{j:\,\delta_j > 0} a_j^2\,\delta_j^2 \cdot \pi^2(1{-}\pi)^2.
  \end{equation*}
  Since $R$ is scale-invariant, $\pi(1{-}\pi)$ cancels:
  \begin{equation*}
    R(\bar{w})
    \;=\; \frac{\bigl(\sum_{j:\delta_j>0} a_j\,\delta_j^2\bigr)^2}{\sum_{j:\delta_j>0} a_j^2\,\delta_j^2}
    \;\geq\; \frac{a_{\min}^2\,\bigl(\sum_{j:\delta_j>0} \delta_j^2\bigr)^2}{a_{\max}^2\,\sum_{j:\delta_j>0} \delta_j^2}
    \;=\; \rho^2\,R^*.
  \end{equation*}

  For the bound on $\rho$: by Appendix~\ref{app:proof-loo-quality},
  \begin{equation*}
    1 - A^{(-j)} \;\leq\; \exp\!\left(-\frac{(m\bar{\delta} - 1)^2}{2(m{-}1)}\right) \quad \text{for all } j.
  \end{equation*}
  Since $a_{\max} \leq 1/2$~(because $A^{(-j)} \leq 1$) and $a_{\min} \geq 1/2 - \exp\!\bigl(-(m\bar{\delta} - 1)^2/[2(m{-}1)]\bigr)$:
  \begin{equation*}
    \rho \;=\; \frac{a_{\min}}{a_{\max}} \;\geq\; \frac{1/2 - \exp\!\bigl(-(m\bar{\delta} - 1)^2/[2(m{-}1)]\bigr)}{1/2} \;=\; 1 - 2\exp\!\left(-\frac{(m\bar{\delta} - 1)^2}{2(m{-}1)}\right). \qedhere
  \end{equation*}
\end{proof}

\clearpage
\section{Algorithm Details}
\label{app:algorithms}

This section provides pseudocode for both ACES variants, implementation details, and the combination procedure with $\mathcal{DS}^3$ referenced in Section~\ref{sec:experiments}.

\subsection{ACES-C}
\label{app:alg-aces-c}

Algorithm~\ref{alg:aces} implements ACES-C~(Section~\ref{sec:aces}).
Under uniform initial weights, the leave-one-out scores reduce to unweighted row sums with one column removed; since the AUC is scale-invariant, normalization is unnecessary.
The algorithm has no tunable parameters: it computes the LOO-AUC for each test, applies the pass-rate correction~(Eq.~\ref{eq:aces-weight}), and returns the weighted scores in a single pass.

\begin{algorithm}[H]
  \caption{ACES-C~(Closed-Form Weighting)}
  \label{alg:aces}
  \begin{algorithmic}[1]
    \REQUIRE Pass matrix $B \in \{0,1\}^{n \times m}$
    \ENSURE Code scores $s_1, \ldots, s_n$
    \FOR{$j = 1$ \TO $m$}
      \STATE Compute $S^{(-j)}_i = \sum_{j' \neq j} B_{ij'}$ for all $i$ \hfill \COMMENT{LOO scores; $\propto$ Eq.~\eqref{eq:loo-score} under $w_{\mathrm{unif}}$}
      \STATE Compute $\looauc_j(w_{\mathrm{unif}}) = \AUC(S^{(-j)}, B_{:,j})$ \hfill \COMMENT{AUC is scale-invariant}
    \ENDFOR
    \STATE $p_j \leftarrow \frac{1}{n}\sum_i B_{ij}$ for all $j$ \hfill \COMMENT{empirical pass rate}
    \STATE Set $w_j \leftarrow \max(0,\; \looauc_j(w_{\mathrm{unif}}) - 1/2) \cdot p_j(1-p_j)$ for all $j$ \hfill \COMMENT{Eq.~\eqref{eq:aces-weight}}
    \RETURN $s_i = \sum_j w_j B_{ij}$ for all $i$
  \end{algorithmic}
\end{algorithm}

\subsection{ACES-O}
\label{app:alg-aces-o}

Algorithm~\ref{alg:aces-o} implements ACES-O~(Section~\ref{sec:aces-o}), which maximizes the objective $J(w) = \sum_j w_j\,(\looauc_j(w) - 1/2)$ via gradient ascent.
Since the exact LOO-AUC involves indicator functions and is therefore non-differentiable, we replace it with a logistic surrogate.
For each test~$t_j$, let $\mathcal{P}_j = \{i : B_{ij} = 1\}$ and $\mathcal{N}_j = \{i : B_{ij} = 0\}$ denote the sets of codes that pass and fail~$t_j$, respectively.
The surrogate LOO-AUC is:
\begin{equation}\label{eq:logistic-auc}
  \widehat{\looauc}_j(w) \;=\; \frac{1}{|\mathcal{P}_j|\,|\mathcal{N}_j|}\sum_{i \in \mathcal{P}_j}\sum_{k \in \mathcal{N}_j} \sigma\!\left(\gamma\bigl(S^{(-j)}_i(w) - S^{(-j)}_k(w)\bigr)\right),
\end{equation}
where $\sigma(x) = 1/(1+e^{-x})$ is the logistic function and $\gamma > 0$ is a sharpness parameter controlling the surrogate tightness.
As $\gamma \to \infty$, $\sigma(\gamma x)$ approaches the step function and the surrogate recovers the exact LOO-AUC.
This surrogate is smooth and statistically consistent~\citep{gao2015auc}, enabling gradient computation via automatic differentiation.
The weights are parameterized as $w = \mathrm{softmax}(\ell)$ with learnable logits $\ell \in \R^m$; this ensures $w \in \Delta^m$ throughout optimization.

\begin{algorithm}[H]
  \caption{ACES-O~(Optimized Weighting)}
  \label{alg:aces-o}
  \begin{algorithmic}[1]
    \REQUIRE Pass matrix $B \in \{0,1\}^{n \times m}$, sharpness $\gamma$, learning rate $\eta$, steps $T$
    \ENSURE Code scores $s_1, \ldots, s_n$
    \STATE Initialize logits $\ell \leftarrow \mathbf{0} \in \R^m$
    \FOR{$t = 1$ \TO $T$}
      \STATE $w \leftarrow \mathrm{softmax}(\ell)$
      \FOR{$j = 1$ \TO $m$}
        \STATE $S^{(-j)}_i(w) \leftarrow \sum_{j' \neq j} w_{j'} B_{ij'}$ for all $i$ \hfill \COMMENT{leave-one-out scores}
        \STATE Compute $\widehat{\looauc}_j(w)$ via logistic surrogate~(Eq.~\ref{eq:logistic-auc})
      \ENDFOR
      \STATE $\ell \leftarrow \ell + \eta\, \nabla_\ell \sum_j w_j\,\widehat{\looauc}_j(w)$ \hfill \COMMENT{gradient ascent~(Adam in practice)}
    \ENDFOR
    \STATE $w \leftarrow \mathrm{softmax}(\ell)$
    \RETURN $s_i = \sum_j w_j B_{ij}$ for all $i$
  \end{algorithmic}
\end{algorithm}

The surrogate objective is non-convex because $\widehat{\looauc}_j(w)$ depends on $w$ through the leave-one-out scores, so gradient ascent converges to a stationary point rather than a global maximum.
Initializing from $w_{\mathrm{unif}}$~(i.e., $\ell = \mathbf{0}$) provides a favorable starting point, since Proposition~\ref{thm:loo-quality} guarantees $A^{(-j)}(w_{\mathrm{unif}}) > 1/2$ for all~$j$ under Assumption~\ref{asm:main}: informative tests are up-weighted in early iterates, improving the ranking and further separating informative from misleading tests in subsequent gradient steps.
Across all benchmarks, we observe stable convergence with no restarts required.

\subsection{Implementation Details}
\label{app:impl-details}

For ACES-O, we adopt a two-stage pipeline: first pre-filter to the top-$K$ candidates by majority vote, then optimize weights on this shortlist.
Pre-filtering removes obviously low-quality candidates and reduces the per-step cost from $O(nm^2)$ to $O(Km^2)$.
ACES-C requires no pre-filtering and runs on all $n$ candidates directly.
ACES-O uses Adam with surrogate sharpness $\gamma{=}10$, learning rate $\eta{=}0.01$, and $T{=}90$ steps on HumanEval~($\eta{=}0.05$, $T{=}90$ on HumanEval$^+$ and MBPP), with top-$K{=}24$ pre-filtering~($K{=}32$ on MBPP).

\subsection{Combination with \texorpdfstring{$\mathcal{DS}^3$}{DS3}}
\label{app:ds3-combination}
As described in Section~\ref{sec:main-results}, we combine ACES with $\mathcal{DS}^3$~\citep{Liu_Xue_Ma_Sun_Li_Li_2026}.
We first summarize $\mathcal{DS}^3$'s scoring mechanism, then describe the combined formulation.

\textbf{$\mathcal{DS}^3$ scoring.}
$\mathcal{DS}^3$ first applies a \emph{pre-filtering} stage that sanitizes candidate code~(AST-based extraction of the longest valid code block followed by dependency-driven dead-code elimination) and validates test inputs~(AST-based argument extraction with signature-conformance checking).
This removes malformed candidates and invalid tests, reducing the sets from $n$ to $n' \leq n$ and from $m$ to $m' \leq m$.
On the filtered set, $\mathcal{DS}^3$ computes two signals.
A \emph{static quality vector} $\mathbf{q} \in [0,1]^{n'}$ scores each candidate by combining a normalized Pylint score~(code quality) with a normalized cyclomatic-complexity score~(lower complexity yields a higher score).
A \emph{pairwise consensus matrix} $P \in [0,1]^{n' \times n'}$ captures behavioral similarity:
\[
  P_{ik} \;=\; \frac{1}{m'}\sum_{j=1}^{m'} \mathbf{1}\!\bigl[\operatorname{exec}(c_i, t_j) = \operatorname{exec}(c_k, t_j)\bigr],
\]
where $\operatorname{exec}(c_i, t_j)$ denotes the full output of candidate $c_i$ on test $t_j$.
The final $\mathcal{DS}^3$ score is $\mathbf{q}_* = P \cdot \mathbf{q}$: each candidate receives a consensus-weighted sum of static quality across all candidates, favoring solutions that are both well-structured and behaviorally consistent.

\textbf{Combined formulation.}
ACES and $\mathcal{DS}^3$ capture complementary signals: ACES computes adaptive test weights from the pass matrix to produce per-candidate scores $s_i(w) = \sum_j w_j B_{ij}$, while $\mathcal{DS}^3$ combines static code quality with pairwise output consensus.
We reuse $\mathcal{DS}^3$'s pre-filtering to obtain a cleaned candidate set of size~$n'$, then combine both signals on the filtered candidates via
\begin{equation}\label{eq:ds3-combination}
  \mathbf{r} \;=\; \bigl(\alpha\, I_{n'} + (1-\alpha)\, P\bigr) \cdot \bigl(\beta\, \mathbf{s}(w) + (1-\beta)\, \mathbf{q}\bigr),
\end{equation}
where $I_{n'}$ is the identity matrix and $\alpha, \beta \in [0,1]$.
The coefficient~$\beta$ blends the per-candidate scoring signal: ACES scores~$\mathbf{s}(w)$ at $\beta = 1$ and static quality~$\mathbf{q}$ at $\beta = 0$.
The coefficient~$\alpha$ controls the aggregation: $\alpha = 1$ scores each candidate independently, while $\alpha = 0$ applies pairwise consensus weighting as in $\mathcal{DS}^3$.
Setting $\alpha = \beta = 0$ recovers the original $\mathcal{DS}^3$ score $P \cdot \mathbf{q}$.
Both $\mathbf{s}(w)$ and $\mathbf{q}$ lie in $[0,1]^{n'}$ by construction, so no additional normalization is required.
All $\mathcal{DS}^3$ components~(pre-filtering, static scoring, pairwise consensus matrix, and all associated hyperparameters) use the official implementation and default settings of \citet{Liu_Xue_Ma_Sun_Li_Li_2026}.

\clearpage
\section{Additional Experimental Results}
\label{app:additional-experiments}

This section provides supplementary experiments: an illustrative walkthrough~(\ref{app:example}), additional models and benchmarks~(\ref{app:additional-models}), extended Pass@$k$ curves~(\ref{app:passk-vs-k}), assumption verification~(\ref{app:assumption}), selection vs.\ weighting analysis~(\ref{app:ablation-selection}), sensitivity analyses~(\ref{app:sensitivity-m}--\ref{app:sensitivity-K}), convergence~(\ref{app:convergence}), hyperparameter sensitivity~(\ref{app:sensitivity-hp}), runtime~(\ref{app:runtime}), vote statistics~(\ref{app:noise-stats}), and generation prompts~(\ref{app:prompts}).

\subsection{Illustrative Example}
\label{app:example}

To illustrate how the strength of Assumption~\ref{asm:main} affects the two ACES variants, we construct two $8 \times 10$ instances, each with three correct codes, five incorrect codes, and ten tests ordered by~$\delta_j$.
The \emph{Easy} case~($\bar{\delta}=0.16$) represents a favorable regime, whereas the \emph{Hard} case~($\bar{\delta}=0.04$) contains more strongly misleading tests.
Figure~\ref{fig:overview} summarizes their pass matrices, inferred test weights, and resulting rankings; Tables~\ref{tab:example-easy} and~\ref{tab:example-hard} report the complete numerical data.

\begin{figure}[H]
  \centering
  \includegraphics[width=\textwidth]{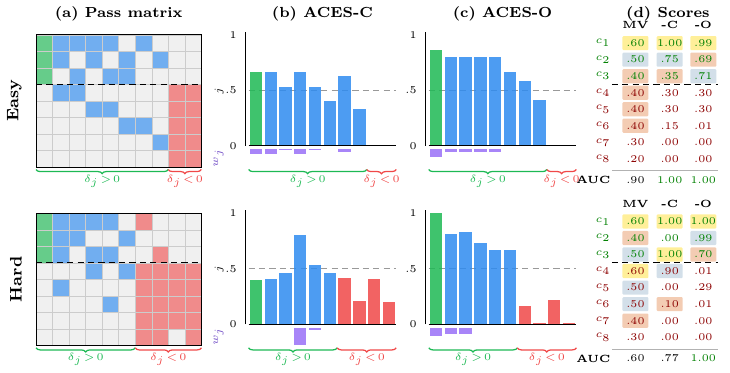}
  \caption{ACES on two constructed $8 \times 10$ instances~(Easy: top; Hard: bottom).
  Test colors: green = perfect, blue = constructive/permissive, red = misleading.
  (a)~Pass matrix. (b),(c)~$\looauc_j$~(upper) and weights $w_j$~(lower, purple). (d)~Scores; gold/silver/bronze mark the top-3 codes.}
  \label{fig:overview}
\end{figure}

Together, the two cases isolate the transition from a favorable regime, where ACES-C's closed-form correction is sufficient, to a difficult regime, where ACES-O's iterative refinement becomes important.
We now trace each case in the same sequence: MV $\to$ ACES-C $\to$ ACES-O.

\textbf{Easy case}~(Table~\ref{tab:example-easy}).
Eight of ten tests are informative~($\delta_j > 0$) and only two are misleading, so the signal-to-noise ratio is favorable.
\emph{MV} assigns uniform weight to all tests and achieves AUC $= 0.90$; however, the lowest correct code $c_3$ ties with three incorrect ones at vote $= 4$, making the top-1 selection unreliable.
\emph{ACES-C} computes LOO-AUC under uniform weights and finds that 6 of the 8 informative tests have $\looauc_j > 1/2$; it assigns non-zero weight to these 6 via Eq.~\eqref{eq:aces-weight}, concentrating weight on the most discriminative tests~($t_1$, $t_2$, $t_4$ each receive $w = 0.25$).
This breaks the MV tie and achieves AUC $= 1.00$: all three correct codes now rank above all five incorrect ones.
\emph{ACES-O} also reaches AUC $= 1.00$, further concentrating weight on the single strongest test~($t_1$, $w = 0.40$).
When test quality is high, the closed-form ACES-C already suffices for perfect separation.

\begin{table}[!t]
  \centering
  \caption{Easy case: $8 \times 10$ pass matrix with 8 informative and 2 misleading tests~($\bar{\delta} = 0.16$).
  Tests are sorted by $\delta_j$ descending. MV AUC: $0.90$; ACES-C AUC: $1.00$; ACES-O AUC: $1.00$.}
  \label{tab:example-easy}
  \vspace{0.3em}
  \small
  \setlength{\tabcolsep}{2.5pt}
  \begin{tabular}{@{}l c ccccccc cc rrr@{}}
    \toprule
    & \multicolumn{1}{c}{\textbf{correct} ($\alpha_j\!=\!1$)} & \multicolumn{9}{c}{\textbf{incorrect} ($\alpha_j\!<\!1$)} & & & \\
    \cmidrule(lr){2-2}\cmidrule(lr){3-11}
    & \multicolumn{1}{c}{\textbf{perf.}} & \multicolumn{7}{c}{\textbf{constructive} ($0 < \delta_j < 1$)} & \multicolumn{2}{c}{\textbf{misleading} ($\delta_j < 0$)} & & & \\
    \cmidrule(lr){2-2}\cmidrule(lr){3-9}\cmidrule(lr){10-11}
    & \cellcolor{cperfect!20}$t_1$ & \cellcolor{cinform!20}$t_2$ & \cellcolor{cinform!20}$t_3$ & \cellcolor{cinform!20}$t_4$ & \cellcolor{cinform!20}$t_5$ & \cellcolor{cinform!20}$t_7$ & \cellcolor{cinform!20}$t_6$ & \cellcolor{cinform!20}$t_8$ & \cellcolor{cmislead!20}$t_9$ & \cellcolor{cmislead!20}$t_{10}$ & \textbf{MV} & \textbf{-C} & \textbf{-O} \\
    \midrule
    \cellcolor{cperfect!15}$\checkmark\, c_1$ & 1 & 1 & 1 & 1 & 1 & 0 & 1 & 0 & 0 & 0 & \cellcolor{gold!40}.60 & \cellcolor{gold!40}1.00 & \cellcolor{gold!40}1.00 \\
    \cellcolor{cperfect!15}$\checkmark\, c_2$ & 1 & 1 & 0 & 1 & 0 & 1 & 0 & 1 & 0 & 0 & \cellcolor{silver!40}.50 & \cellcolor{silver!40}.75 & \cellcolor{bronze!40}.69 \\
    \cellcolor{cperfect!15}$\checkmark\, c_3$ & 1 & 0 & 1 & 0 & 1 & 1 & 0 & 0 & 0 & 0 & \cellcolor{bronze!40}.40 & \cellcolor{bronze!40}.35 & \cellcolor{silver!40}.71 \\
    \cdashline{1-11}
    \cellcolor{cmislead!15}$\times\, c_4$ & 0 & 1 & 1 & 0 & 0 & 0 & 0 & 0 & 1 & 1 & \cellcolor{bronze!40}.40 & .30 & .30 \\
    \cellcolor{cmislead!15}$\times\, c_5$ & 0 & 0 & 0 & 1 & 1 & 0 & 0 & 0 & 1 & 1 & \cellcolor{bronze!40}.40 & .30 & .30 \\
    \cellcolor{cmislead!15}$\times\, c_6$ & 0 & 0 & 0 & 0 & 0 & 1 & 1 & 0 & 1 & 1 & \cellcolor{bronze!40}.40 & .15 & .01 \\
    \cellcolor{cmislead!15}$\times\, c_7$ & 0 & 0 & 0 & 0 & 0 & 0 & 0 & 1 & 1 & 1 & .30 & .00 & .00 \\
    \cellcolor{cmislead!15}$\times\, c_8$ & 0 & 0 & 0 & 0 & 0 & 0 & 0 & 0 & 1 & 1 & .20 & .00 & .00 \\
    \midrule
    $\alpha_j$ & 1.00 & .67 & .67 & .67 & .67 & .67 & .33 & .33 & .00 & .00 & & & \\
    $\beta_j$ & .00 & .20 & .20 & .20 & .20 & .20 & .20 & .20 & 1.00 & 1.00 & & & \\
    $\delta_j$ & $+1.0$ & $+.47$ & $+.47$ & $+.47$ & $+.47$ & $+.47$ & $+.13$ & $+.13$ & $-1.0$ & $-1.0$ & & & \\
    \addlinespace[2pt]
    LOO$_{\text{C}}$ & .67 & .67 & .53 & .67 & .53 & .40 & .63 & .33 & .00 & .00 & & & \\
    $w_j^{\text{C}}$ & .25 & .25 & .05 & .25 & .05 & .00 & .15 & .00 & .00 & .00 & & & \\
    LOO$_{\text{O}}$ & .87 & .80 & .80 & .80 & .80 & .67 & .58 & .42 & .00 & .00 & & & \\
    $w_j^{\text{O}}$ & .40 & .14 & .15 & .14 & .15 & .00 & .00 & .00 & .00 & .00 & & & \\
    \bottomrule
  \end{tabular}
\end{table}
 
\textbf{Hard case}~(Table~\ref{tab:example-hard}).
By contrast, six tests are informative and four are misleading, with all four misleading tests having strong discriminative power in the wrong direction~($|\delta_j| \geq 0.67$).
\emph{MV} achieves only AUC $= 0.60$: the incorrect code $c_4$ receives vote $= 6$, tying the best correct code $c_1$.
\emph{ACES-C} faces a challenge: the strong misleading tests corrupt the uniform-weight ranking, causing only 2 of 6 informative tests~($t_4$ and $t_5$) to have $\looauc_j > 1/2$.
Since ACES-C computes weights in closed form from these initial estimates, it can only assign weight to these two tests.
This improves AUC to $0.77$, but the incorrect $c_4$ remains ranked third.
\emph{ACES-O} overcomes this limitation through iterative co-evolution: improved weights produce a better ranking, which raises the LOO-AUC of initially misidentified tests, enabling further weight refinement.
At convergence, all six informative tests have $\looauc_j > 1/2$, and ACES-O concentrates weight on the three strongest~($t_1$, $t_3$, $t_2$), achieving AUC $= 1.00$.
This demonstrates the value of iterative optimization when misleading tests are prevalent.

\begin{table}[!t]
  \centering
  \caption{Hard case: $8 \times 10$ pass matrix with 6 informative and 4 misleading tests~($\bar{\delta} = 0.04$).
  Under uniform weights, only 2/6 informative tests have $\looauc_j > 1/2$; ACES-O's co-evolution recovers all 6.
  MV AUC: $0.60$; ACES-C AUC: $0.77$; ACES-O AUC: $1.00$.}
  \label{tab:example-hard}
  \vspace{0.3em}
  \small
  \setlength{\tabcolsep}{2.5pt}
  \begin{tabular}{@{}l c c cccc cccc rrr@{}}
    \toprule
    & \multicolumn{2}{c}{\textbf{correct} ($\alpha_j\!=\!1$)} & \multicolumn{8}{c}{\textbf{incorrect} ($\alpha_j\!<\!1$)} & & & \\
    \cmidrule(lr){2-3}\cmidrule(lr){4-11}
    & \multicolumn{1}{c}{\textbf{perf.}} & \multicolumn{1}{c}{\textbf{perm.}} & \multicolumn{4}{c}{\textbf{constructive}} & \multicolumn{4}{c}{\textbf{misleading} ($\delta_j < 0$)} & & & \\
    \cmidrule(lr){2-2}\cmidrule(lr){3-3}\cmidrule(lr){4-7}\cmidrule(lr){8-11}
    & \cellcolor{cperfect!20}$t_1$ & \cellcolor{cinform!20}$t_2$ & \cellcolor{cinform!20}$t_3$ & \cellcolor{cinform!20}$t_4$ & \cellcolor{cinform!20}$t_5$ & \cellcolor{cinform!20}$t_6$ & \cellcolor{cmislead!20}$t_8$ & \cellcolor{cmislead!20}$t_9$ & \cellcolor{cmislead!20}$t_7$ & \cellcolor{cmislead!20}$t_{10}$ & \textbf{MV} & \textbf{-C} & \textbf{-O} \\
    \midrule
    \cellcolor{cperfect!15}$\checkmark\, c_1$ & 1 & 1 & 1 & 1 & 1 & 0 & 1 & 0 & 0 & 0 & \cellcolor{gold!40}.60 & \cellcolor{gold!40}1.00 & \cellcolor{gold!40}1.00 \\
    \cellcolor{cperfect!15}$\checkmark\, c_2$ & 1 & 1 & 1 & 0 & 0 & 1 & 0 & 0 & 0 & 0 & \cellcolor{bronze!40}.40 & .00 & \cellcolor{silver!40}.99 \\
    \cellcolor{cperfect!15}$\checkmark\, c_3$ & 1 & 1 & 0 & 1 & 1 & 0 & 0 & 1 & 0 & 0 & \cellcolor{silver!40}.50 & \cellcolor{gold!40}1.00 & \cellcolor{bronze!40}.70 \\
    \cdashline{1-11}
    \cellcolor{cmislead!15}$\times\, c_4$ & 0 & 0 & 0 & 1 & 0 & 1 & 1 & 1 & 1 & 1 & \cellcolor{gold!40}.60 & \cellcolor{silver!40}.90 & .01 \\
    \cellcolor{cmislead!15}$\times\, c_5$ & 0 & 1 & 0 & 0 & 0 & 0 & 1 & 1 & 1 & 1 & \cellcolor{silver!40}.50 & .00 & .29 \\
    \cellcolor{cmislead!15}$\times\, c_6$ & 0 & 0 & 0 & 0 & 1 & 0 & 1 & 1 & 1 & 1 & \cellcolor{silver!40}.50 & \cellcolor{bronze!40}.10 & .01 \\
    \cellcolor{cmislead!15}$\times\, c_7$ & 0 & 0 & 0 & 0 & 0 & 0 & 1 & 1 & 1 & 1 & \cellcolor{bronze!40}.40 & .00 & .00 \\
    \cellcolor{cmislead!15}$\times\, c_8$ & 0 & 0 & 0 & 0 & 0 & 0 & 1 & 1 & 0 & 1 & .30 & .00 & .00 \\
    \midrule
    $\alpha_j$ & 1.00 & 1.00 & .67 & .67 & .67 & .33 & .33 & .33 & .00 & .00 & & & \\
    $\beta_j$ & .00 & .20 & .00 & .20 & .20 & .20 & 1.00 & 1.00 & .80 & 1.00 & & & \\
    $\delta_j$ & $+1.0$ & $+.80$ & $+.67$ & $+.47$ & $+.47$ & $+.13$ & $-.67$ & $-.67$ & $-.80$ & $-1.0$ & & & \\
    \addlinespace[2pt]
    LOO$_{\text{C}}$ & .40 & .41 & .46 & .80 & .53 & .46 & .42 & .21 & .41 & .20 & & & \\
    $w_j^{\text{C}}$ & .00 & .00 & .00 & .90 & .10 & .00 & .00 & .00 & .00 & .00 & & & \\
    LOO$_{\text{O}}$ & 1.00 & .81 & .83 & .73 & .67 & .67 & .17 & .00 & .22 & .00 & & & \\
    $w_j^{\text{O}}$ & .40 & .29 & .30 & .01 & .00 & .00 & .00 & .00 & .00 & .00 & & & \\
    \bottomrule
  \end{tabular}
\end{table}
 
Taken together, these controlled cases make the complementarity between ACES-C and ACES-O explicit.
We next move from mechanism-focused examples to broader evaluations across generation models and benchmarks.

\subsection{Additional Generation Models and Benchmarks}
\label{app:additional-models}

We evaluate ACES on three additional code generation models beyond GPT-3.5-Turbo: Qwen2.5-Coder-7B and 14B~\citep{hui2024qwen2}, and DeepSeek-Coder-V2-16B~\citep{guo2024deepseek}.
In addition to HumanEval and HumanEval$^+$, we include LeetCodeDataset~\citep{xia2025leetcodedataset} as a harder benchmark.
For each model, we generate candidates and test cases following the same protocol as Section~\ref{sec:exp-setup}.
Table~\ref{tab:additional-models} reports results on all tasks; Table~\ref{tab:additional-models-nontrivial} reports results on non-trivial tasks~($0 < n^+ < n$), excluding tasks where all or no candidates are correct.

\begin{table}[H]
  \centering
  \caption{Pass@$k$ (\%) with additional generation models on all tasks.
  Each model generates its own candidates and test cases following the protocol of Section~\ref{sec:exp-setup}.
  Subscripts denote the change from each model's \gc{zero-shot} baseline.
  \textbf{Bold}/\underline{underline}: best/second-best among reranking methods within each model.
  \colorbox{ourbg}{Shaded}: ours.}
  \label{tab:additional-models}
  \vspace{0.5em}
  \small
  \setlength{\tabcolsep}{2.2pt}
  \begin{tabular}{@{}l lll lll lll@{}}
    \toprule
    & \multicolumn{3}{c}{\textbf{HumanEval}} & \multicolumn{3}{c}{\textbf{HumanEval$^+$}} & \multicolumn{3}{c}{\textbf{LeetCodeDataset}} \\
    \cmidrule(lr){2-4} \cmidrule(lr){5-7} \cmidrule(lr){8-10}
    \textbf{Method} & Pass@1 & Pass@2 & Pass@5 & Pass@1 & Pass@2 & Pass@5 & Pass@1 & Pass@2 & Pass@5 \\

    \midrule
    \rowcolor{black!5}
    \multicolumn{10}{@{}l}{\textit{Qwen2.5-Coder-7B}} \\
    \gc{Zero-shot} & \gc{81.69} & \gc{87.21} & \gc{91.27} & \gc{75.63} & \gc{81.38} & \gc{85.62} & \gc{14.64} & \gc{18.82} & \gc{24.50} \\
    Majority Voting & 89.02\up{7.3} & \underline{90.24\up{3.0}} & \underline{90.24\dn{1.0}} & 80.49\up{4.9} & \underline{84.15\up{2.8}} & \underline{84.15\dn{1.5}} & 14.04\dn{0.6} & 15.79\dn{3.0} & 19.74\dn{4.8} \\
    \rowcolor{ourbg}
    \textbf{ACES-C} & \underline{89.63\up{7.9}} & \underline{90.24\up{3.0}} & \underline{90.24\dn{1.0}} & \underline{81.71\up{6.1}} & \underline{84.15\up{2.8}} & \textbf{84.76\dn{0.9}} & \textbf{17.54\up{2.9}} & \textbf{18.86\up{0.0}} & \underline{20.18\dn{4.3}} \\
    \rowcolor{ourbg}
    \textbf{ACES-O} & \textbf{90.24\up{8.6}} & \textbf{90.85\up{3.6}} & \textbf{90.85\dn{0.4}} & \textbf{82.93\up{7.3}} & \textbf{84.76\up{3.4}} & \textbf{84.76\dn{0.9}} & \underline{15.35\up{0.7}} & \underline{16.67\dn{2.2}} & \textbf{20.61\dn{3.9}} \\

    \midrule
    \rowcolor{black!5}
    \multicolumn{10}{@{}l}{\textit{Qwen2.5-Coder-14B}} \\
    \gc{Zero-shot} & \gc{87.74} & \gc{91.45} & \gc{94.50} & \gc{79.40} & \gc{82.97} & \gc{85.96} & \gc{20.54} & \gc{24.94} & \gc{30.48} \\
    Majority Voting & 92.07\up{4.3} & \underline{92.68\up{1.2}} & \underline{92.68\dn{1.8}} & \underline{84.76\up{5.4}} & \textbf{85.37\up{2.4}} & \textbf{87.80\up{1.8}} & \underline{23.25\up{2.7}} & \underline{26.32\up{1.4}} & \underline{31.14\up{0.7}} \\
    \rowcolor{ourbg}
    \textbf{ACES-C} & \underline{92.68\up{4.9}} & \underline{92.68\up{1.2}} & \textbf{93.29\dn{1.2}} & \underline{84.76\up{5.4}} & \underline{84.76\up{1.8}} & 86.59\up{0.6} & \textbf{28.07\up{7.5}} & \textbf{29.82\up{4.9}} & \textbf{32.89\up{2.4}} \\
    \rowcolor{ourbg}
    \textbf{ACES-O} & \textbf{93.29\up{5.6}} & \textbf{93.29\up{1.8}} & \textbf{93.29\dn{1.2}} & \textbf{85.37\up{6.0}} & \textbf{85.37\up{2.4}} & \underline{87.20\up{1.2}} & 21.49\up{1.0} & 24.12\dn{0.8} & 28.95\dn{1.5} \\

    \midrule
    \rowcolor{black!5}
    \multicolumn{10}{@{}l}{\textit{DeepSeek-Coder-V2-16B}} \\
    \gc{Zero-shot} & \gc{77.74} & \gc{82.26} & \gc{86.50} & \gc{71.05} & \gc{75.30} & \gc{79.03} & \gc{20.68} & \gc{24.25} & \gc{28.14} \\
    Majority Voting & 84.76\up{7.0} & 85.98\up{3.7} & \underline{86.59\up{0.1}} & 76.83\up{5.8} & 77.44\up{2.1} & \underline{78.05\dn{1.0}} & 17.11\dn{3.6} & 20.18\dn{4.1} & 24.12\dn{4.0} \\
    \rowcolor{ourbg}
    \textbf{ACES-C} & \textbf{87.20\up{9.5}} & \textbf{87.20\up{4.9}} & \textbf{87.20\up{0.7}} & \textbf{78.66\up{7.6}} & \textbf{78.66\up{3.4}} & \textbf{78.66\dn{0.4}} & \textbf{25.00\up{4.3}} & \textbf{25.88\up{1.6}} & \textbf{26.75\dn{1.4}} \\
    \rowcolor{ourbg}
    \textbf{ACES-O} & \underline{85.98\up{8.2}} & \underline{86.59\up{4.3}} & \textbf{87.20\up{0.7}} & \underline{78.05\up{7.0}} & \underline{78.05\up{2.8}} & \textbf{78.66\dn{0.4}} & \underline{19.74\dn{0.9}} & \underline{21.05\dn{3.2}} & \underline{24.56\dn{3.6}} \\
    \bottomrule
  \end{tabular}
\end{table}
 
\begin{table}[H]
  \centering
  \caption{Pass@$k$ (\%) with additional generation models on non-trivial tasks~($0 < n^+ < n$), excluding tasks where all or no candidates are correct.
  Subscripts denote the change from each model's \gc{zero-shot} baseline on the same set of tasks.
  \textbf{Bold}/\underline{underline}: best/second-best among reranking methods within each model.
  \colorbox{ourbg}{Shaded}: ours.}
  \label{tab:additional-models-nontrivial}
  \vspace{0.5em}
  \small
  \setlength{\tabcolsep}{2.1pt}
  \begin{tabular}{@{}l lll lll lll@{}}
    \toprule
    & \multicolumn{3}{c}{\textbf{HumanEval}} & \multicolumn{3}{c}{\textbf{HumanEval$^+$}} & \multicolumn{3}{c}{\textbf{LeetCodeDataset}} \\
    \cmidrule(lr){2-4} \cmidrule(lr){5-7} \cmidrule(lr){8-10}
    \textbf{Method} & Pass@1 & Pass@2 & Pass@5 & Pass@1 & Pass@2 & Pass@5 & Pass@1 & Pass@2 & Pass@5 \\

    \midrule
    \rowcolor{black!5}
    \multicolumn{10}{@{}l}{\textit{Qwen2.5-Coder-7B}} \\
    \gc{Zero-shot} & \gc{76.98} & \gc{86.03} & \gc{92.68} & \gc{75.03} & \gc{84.46} & \gc{91.42} & \gc{31.02} & \gc{42.24} & \gc{57.49} \\
    Majority Voting & 89.00\up{12.0} & \underline{91.00\up{5.0}} & \underline{91.00\dn{1.7}} & 83.00\up{8.0} & \underline{89.00\up{4.5}} & \underline{89.00\dn{2.4}} & 29.41\dn{1.6} & 34.12\dn{8.1} & 44.71\dn{12.8} \\
    \rowcolor{ourbg}
    \textbf{ACES-C} & \underline{90.00\up{13.0}} & \underline{91.00\up{5.0}} & \underline{91.00\dn{1.7}} & \underline{85.00\up{10.0}} & \underline{89.00\up{4.5}} & \textbf{90.00\dn{1.4}} & \textbf{38.82\up{7.8}} & \textbf{42.35\up{0.1}} & \underline{45.88\dn{11.6}} \\
    \rowcolor{ourbg}
    \textbf{ACES-O} & \textbf{91.00\up{14.0}} & \textbf{92.00\up{6.0}} & \textbf{92.00\dn{0.7}} & \textbf{87.00\up{12.0}} & \textbf{90.00\up{5.5}} & \textbf{90.00\dn{1.4}} & \underline{32.94\up{1.9}} & \underline{36.47\dn{5.8}} & \textbf{47.06\dn{10.4}} \\

    \midrule
    \rowcolor{black!5}
    \multicolumn{10}{@{}l}{\textit{Qwen2.5-Coder-14B}} \\
    \gc{Zero-shot} & \gc{68.41} & \gc{80.36} & \gc{90.16} & \gc{67.05} & \gc{77.32} & \gc{85.91} & \gc{32.83} & \gc{42.87} & \gc{55.50} \\
    Majority Voting & 82.35\up{13.9} & \underline{84.31\up{4.0}} & \underline{84.31\dn{5.9}} & \underline{82.46\up{15.4}} & \textbf{84.21\up{6.9}} & \textbf{91.23\up{5.3}} & \underline{39.00\up{6.2}} & \underline{46.00\up{3.1}} & \underline{57.00\up{1.5}} \\
    \rowcolor{ourbg}
    \textbf{ACES-C} & \underline{84.31\up{15.9}} & \underline{84.31\up{4.0}} & \textbf{86.27\dn{3.9}} & \underline{82.46\up{15.4}} & \underline{82.46\up{5.1}} & 87.72\up{1.8} & \textbf{50.00\up{17.2}} & \textbf{54.00\up{11.1}} & \textbf{61.00\up{5.5}} \\
    \rowcolor{ourbg}
    \textbf{ACES-O} & \textbf{86.27\up{17.9}} & \textbf{86.27\up{5.9}} & \textbf{86.27\dn{3.9}} & \textbf{84.21\up{17.2}} & \textbf{84.21\up{6.9}} & \underline{89.47\up{3.6}} & 35.00\up{2.2} & 41.00\dn{1.9} & 52.00\dn{3.5} \\

    \midrule
    \rowcolor{black!5}
    \multicolumn{10}{@{}l}{\textit{DeepSeek-Coder-V2-16B}} \\
    \gc{Zero-shot} & \gc{64.38} & \gc{75.63} & \gc{86.15} & \gc{61.77} & \gc{72.63} & \gc{82.19} & \gc{39.14} & \gc{49.74} & \gc{61.25} \\
    Majority Voting & 81.82\up{17.4} & 84.85\up{9.2} & \underline{86.36\up{0.2}} & 76.56\up{14.8} & 78.12\up{5.5} & \underline{79.69\dn{2.5}} & 28.57\dn{10.6} & 37.66\dn{12.1} & 49.35\dn{11.9} \\
    \rowcolor{ourbg}
    \textbf{ACES-C} & \textbf{87.88\up{23.5}} & \textbf{87.88\up{12.3}} & \textbf{87.88\up{1.7}} & \textbf{81.25\up{19.5}} & \textbf{81.25\up{8.6}} & \textbf{81.25\dn{0.9}} & \textbf{51.95\up{12.8}} & \textbf{54.55\up{4.8}} & \textbf{57.14\dn{4.1}} \\
    \rowcolor{ourbg}
    \textbf{ACES-O} & \underline{84.85\up{20.5}} & \underline{86.36\up{10.7}} & \textbf{87.88\up{1.7}} & \underline{79.69\up{17.9}} & \underline{79.69\up{7.1}} & \textbf{81.25\dn{0.9}} & \underline{36.36\dn{2.8}} & \underline{40.26\dn{9.5}} & \underline{50.65\dn{10.6}} \\
    \bottomrule
  \end{tabular}
\end{table}
 
Three findings emerge from these results.

\textbf{At least one ACES variant improves Pass@1 over Majority Voting on every model--benchmark pair.}
The improvement is most pronounced at $k{=}1$, where ranking quality is most critical.
On HumanEval, the best ACES variant gains $+1.2$ to $+2.4$ points over MV across the three models; on HumanEval$^+$, the gain ranges from $+0.6$ to $+2.4$ points.
The two variants exhibit complementary strengths: ACES-O leads ACES-C on HumanEval and HumanEval$^+$ for both Qwen models, whereas ACES-C leads for DeepSeek-Coder-V2-16B and on LeetCodeDataset across all three models.

\textbf{On LeetCodeDataset, ACES-C is consistently strongest and corrects MV failures on the harder pools.}
Majority Voting falls below zero-shot for Qwen2.5-Coder-7B~(14.04\% vs.\ 14.64\%) and DeepSeek-Coder-V2-16B~(17.11\% vs.\ 20.68\%), while improving over zero-shot for Qwen2.5-Coder-14B~(23.25\% vs.\ 20.54\%).
ACES-C reaches 17.54\%, 28.07\%, and 25.00\% Pass@1, respectively, exceeding both MV and zero-shot for every model; its gain over MV grows from $+3.5$ points on Qwen2.5-Coder-7B to $+4.8$ on Qwen2.5-Coder-14B and $+7.9$ on DeepSeek-Coder-V2-16B.
ACES-O is less stable: it improves over MV for Qwen2.5-Coder-7B and DeepSeek-Coder-V2-16B, but trails MV for Qwen2.5-Coder-14B~(21.49\% vs.\ 23.25\%).
The consistent ACES-C lead, together with ACES-O's regression on Qwen2.5-Coder-14B, shows that the closed-form variant is more robust on these LeetCodeDataset candidate pools, whereas iterative optimization does not uniformly improve over MV in this regime.

\textbf{Non-trivial tasks amplify the improvements.}
When trivial tasks are excluded~(Table~\ref{tab:additional-models-nontrivial}), all methods operate on tasks where the ranking actually matters, and the ACES advantage becomes more visible.
On LeetCodeDataset non-trivial tasks, ACES-C improves over MV by $+9.4$ points~(Qwen2.5-Coder-7B), $+11.0$ points~(Qwen2.5-Coder-14B), and $+23.4$ points~(DeepSeek-Coder-V2-16B) at Pass@1.
On HumanEval non-trivial tasks, the corresponding gains are $+1.0$, $+2.0$, and $+6.1$ points, respectively.

\subsection{Pass@k vs.\ k}
\label{app:passk-vs-k}

Figure~\ref{fig:passk-vs-k} extends the main results~(combined with $\mathcal{DS}^3$ pre-filtering) to $k = 1, \ldots, 20$.
Both ACES variants substantially outperform Majority Voting across all benchmarks and $k$, with the largest gains at small~$k$ where ranking quality matters most.
On HumanEval and HumanEval$^+$, ACES-C + $\mathcal{DS}^3$ leads at small~$k$~(e.g., $+2.44\%$ Pass@1 on HumanEval$^+$), reflecting its closed-form weighting that concentrates on a few highly discriminative tests for sharp top-1 discrimination; the gap between the two variants narrows as~$k$ grows and they converge at larger~$k$.
On MBPP, ACES-O + $\mathcal{DS}^3$ slightly leads at small~$k$, consistent with MBPP's higher fraction of misleading tests~(Section~\ref{sec:analysis}) making iterative optimization more valuable; beyond $k{=}2$ the two variants are nearly identical.
The overall improvement scales with benchmark difficulty: ${\sim}6$--$8\%$ on MBPP~(most misleading tests) vs.\ ${\sim}3$--$5\%$ on HumanEval, confirming the greater benefit of principled test weighting when test quality is more heterogeneous.

\begin{figure}[H]
  \centering
  \includegraphics[width=\textwidth]{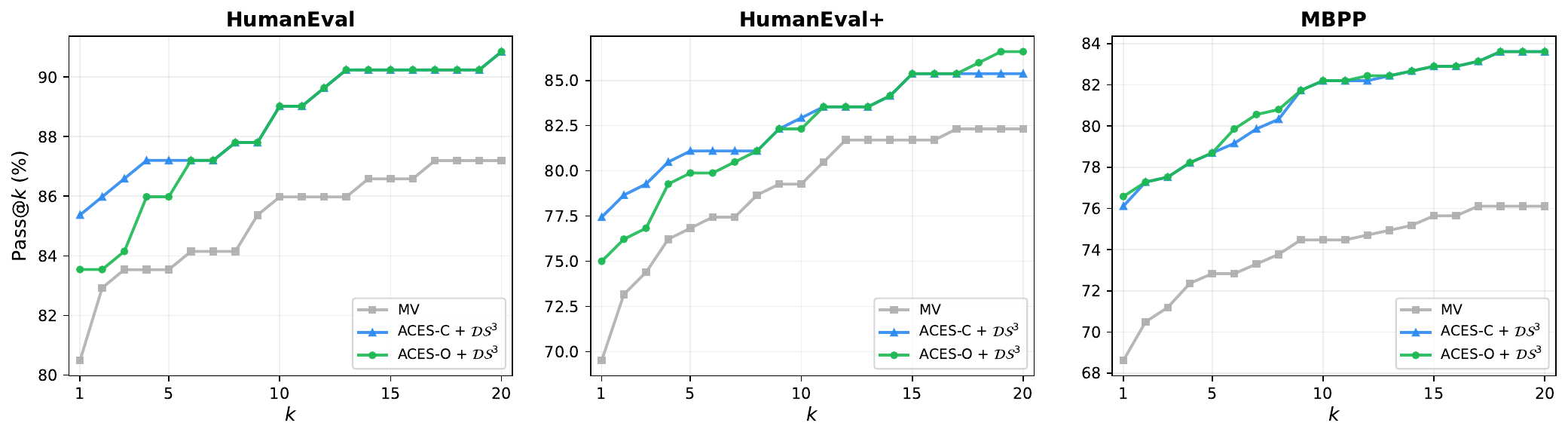}
  \caption{Pass@$k$ vs.\ $k$~($k = 1, \ldots, 20$) on all three benchmarks, combined with $\mathcal{DS}^3$ pre-filtering.
  ACES-C + $\mathcal{DS}^3$ leads at small $k$ on HumanEval/HumanEval$^+$; the two variants converge at larger $k$. On MBPP they perform comparably throughout.}
  \label{fig:passk-vs-k}
\end{figure}

\subsection{Assumption Satisfaction Analysis}
\label{app:assumption}

We verify Assumption~\ref{asm:main} empirically on all three benchmarks.
For each task, we remove trivial tasks~(all codes correct or all incorrect) and constant-column tests~(all codes pass or all fail), then compute $\bar{\delta} = (1/m)\sum_j \delta_j$ over the remaining $m$ non-constant tests.
Assumption~\ref{asm:main} formally requires $\bar{\delta} > 2\sqrt{\ln 2 / m}$, where the threshold is task-specific since each task has a different number of non-constant tests~$m$.
Table~\ref{tab:assumption-summary} reports the formal satisfaction rate.
The figures in this section bin tasks by $\bar{\delta}$ using the simplified threshold $\bar{\delta} > 0$ for visual clarity.
Tasks are binned with width $0.1$~(left-open right-closed) and further divided into three equal-width regions~(\emph{Hard} / \emph{Middle} / \emph{Easy}).

\begin{table}[H]
  \centering
  \caption{Assumption~\ref{asm:main} satisfaction across benchmarks. The threshold $2\sqrt{\ln 2/m}$ is evaluated per task using its number of non-constant tests~$m$.}
  \label{tab:assumption-summary}
  \begin{tabular}{lccc}
    \toprule
    Benchmark & Non-trivial tasks & $\bar{\delta} > 2\sqrt{\ln 2/m}$ (\%) & $\bar{\delta}$ range \\
    \midrule
    HumanEval      & 118 & 83.1 & $[-0.42,\; 1.00]$ \\
    HumanEval$^+$  & 123 & 82.1 & $[-0.42,\; 1.00]$ \\
    MBPP           & 239 & 71.1 & $[-0.90,\; 1.00]$ \\
    \bottomrule
  \end{tabular}
\end{table}

The assumption is satisfied for the majority of non-trivial tasks on all benchmarks.
MBPP has a higher proportion of hard tasks, consistent with its greater fraction of misleading tests.

Figures~\ref{fig:app-assumption-humaneval}--\ref{fig:app-assumption-mbpp} show the full per-benchmark results for Pass@$k$~($k \in \{1, 2, 5\}$), extending the MBPP Pass@1 analysis in Section~\ref{sec:analysis} to all benchmarks and $k$ values.
The key findings from the main text generalize consistently across all three benchmarks and $k$ values:

\textit{(i) The Middle region is where ACES provides the most value.}
In the Easy region~(high $\bar{\delta}$), all methods achieve near-perfect pass rates since most tests are informative and even uniform weighting works well.
The performance gains of ACES concentrate in the Middle region, where informative and misleading tests coexist and principled test weighting becomes critical.

\textit{(ii) ACES-O and ACES-C have complementary strengths across difficulty regimes.}
When the assumption is not well satisfied~(low $\bar{\delta}$), ACES-O's iterative optimization is more effective at identifying and up-weighting the few reliable tests, giving it a clear advantage over ACES-C.
When the assumption is well satisfied~(high $\bar{\delta}$), ACES-C's closed-form solution is already near-optimal, matching or approaching ACES-O without requiring iterative optimization.

\textit{(iii) As $k$ increases, the gap between methods narrows}, since selecting more candidates reduces the sensitivity to ranking quality at the top.
This pattern is consistent across all benchmarks.

\begin{figure}[H]
  \centering
  \begin{subfigure}[b]{0.33\textwidth}
    \centering
    \includegraphics[width=\textwidth]{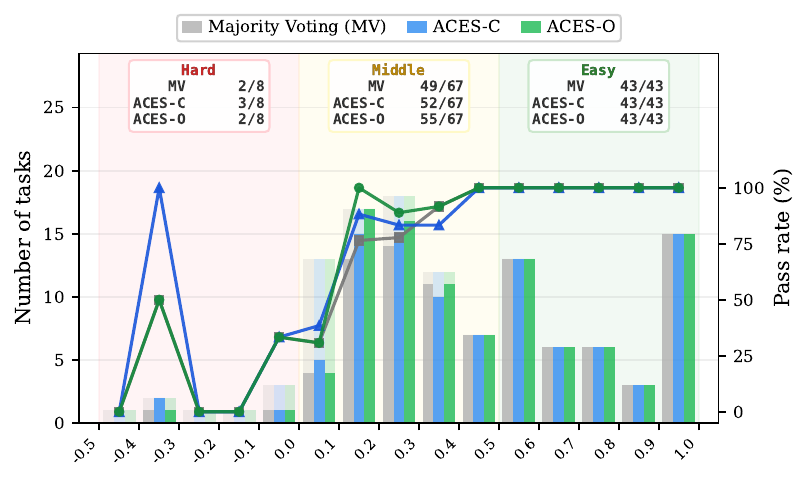}
    \caption{Pass@1}
  \end{subfigure}%
  \hfill
  \begin{subfigure}[b]{0.33\textwidth}
    \centering
    \includegraphics[width=\textwidth]{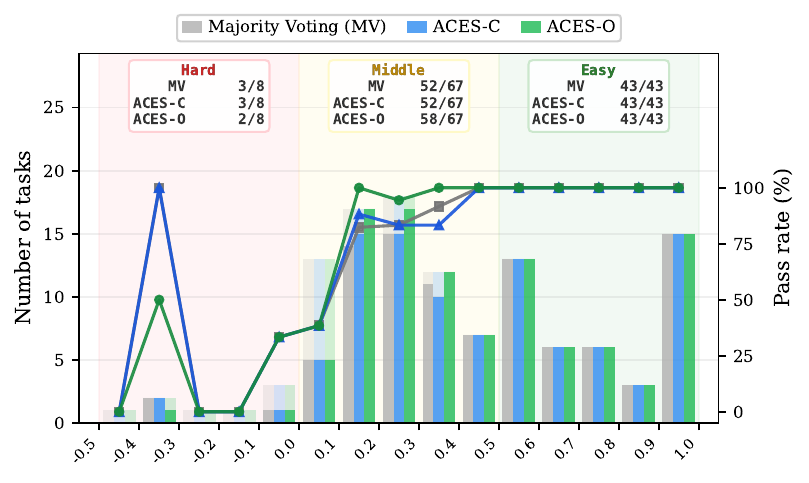}
    \caption{Pass@2}
  \end{subfigure}%
  \hfill
  \begin{subfigure}[b]{0.33\textwidth}
    \centering
    \includegraphics[width=\textwidth]{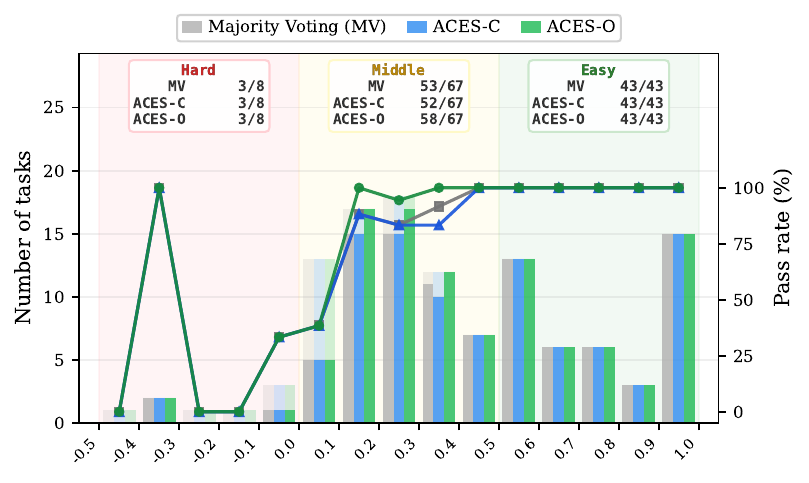}
    \caption{Pass@5}
  \end{subfigure}
  \caption{Assumption~\ref{asm:main} analysis on HumanEval~(tasks binned by $\bar{\delta}$).
  ACES-O leads in the Middle region across all $k$; the gap narrows as $k$ increases.}
  \label{fig:app-assumption-humaneval}
\end{figure}

\begin{figure}[H]
  \centering
  \begin{subfigure}[b]{0.33\textwidth}
    \centering
    \includegraphics[width=\textwidth]{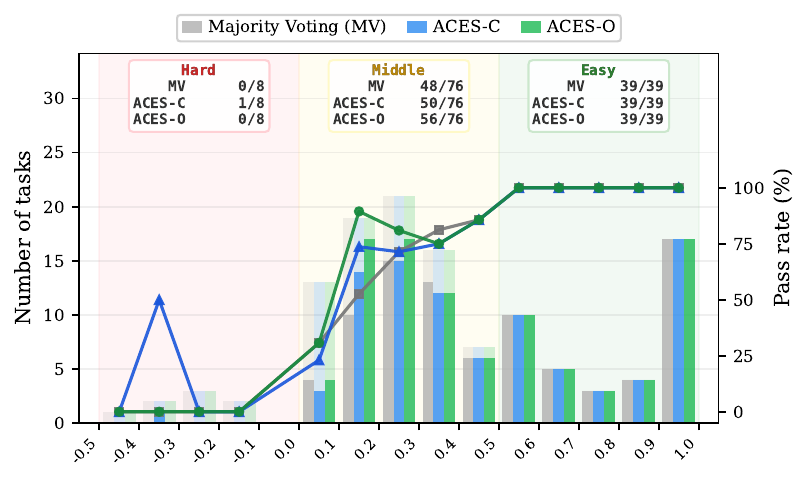}
    \caption{Pass@1}
  \end{subfigure}%
  \hfill
  \begin{subfigure}[b]{0.33\textwidth}
    \centering
    \includegraphics[width=\textwidth]{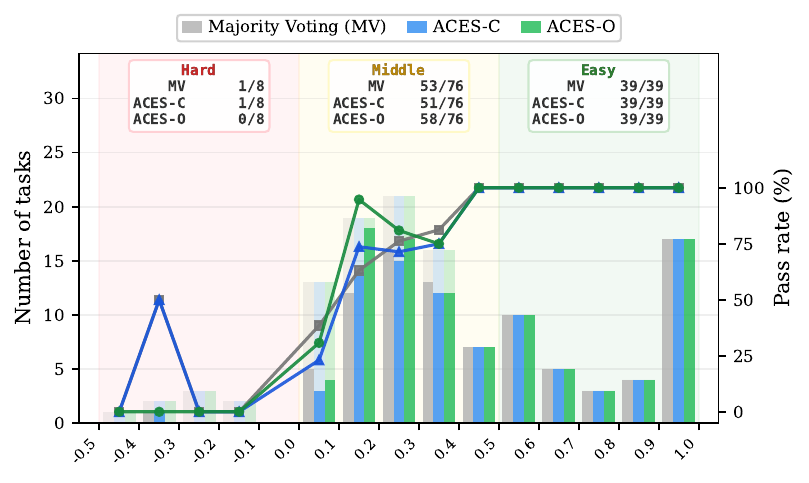}
    \caption{Pass@2}
  \end{subfigure}%
  \hfill
  \begin{subfigure}[b]{0.33\textwidth}
    \centering
    \includegraphics[width=\textwidth]{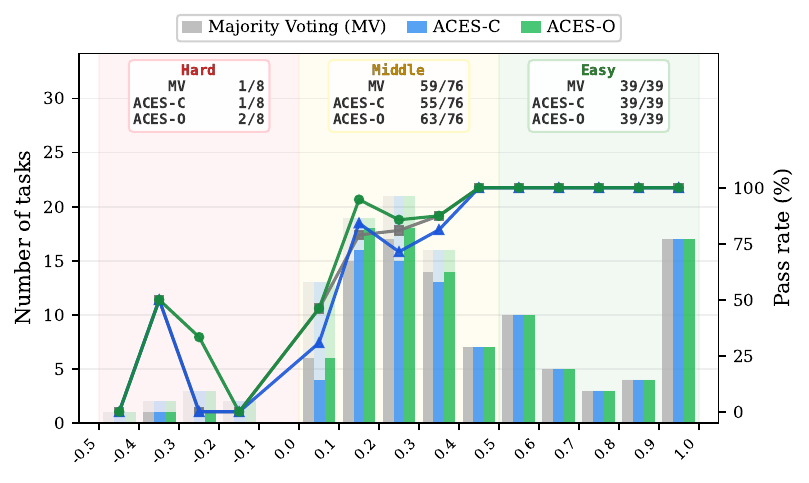}
    \caption{Pass@5}
  \end{subfigure}
  \caption{Assumption~\ref{asm:main} analysis on HumanEval$^+$~(tasks binned by $\bar{\delta}$).
  ACES-O leads in the Middle region across all $k$; the gap narrows as $k$ increases.}
  \label{fig:app-assumption-humanevalplus}
\end{figure}

\begin{figure}[H]
  \centering
  \begin{subfigure}[b]{0.33\textwidth}
    \centering
    \includegraphics[width=\textwidth]{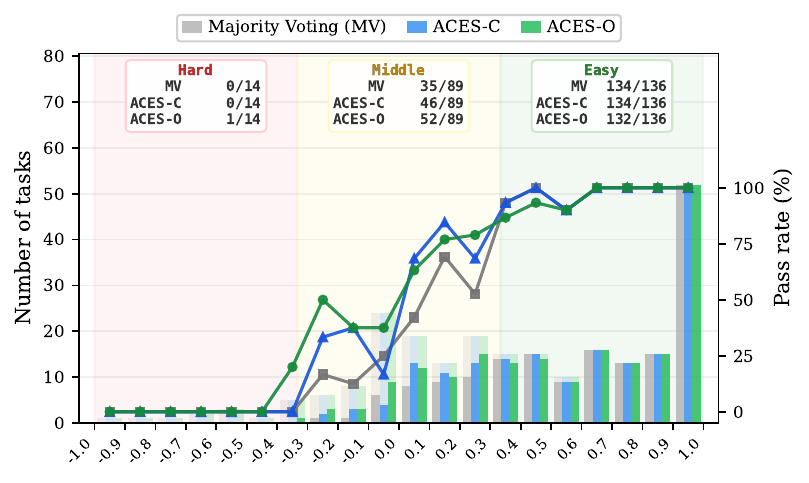}
    \caption{Pass@1}
  \end{subfigure}%
  \hfill
  \begin{subfigure}[b]{0.33\textwidth}
    \centering
    \includegraphics[width=\textwidth]{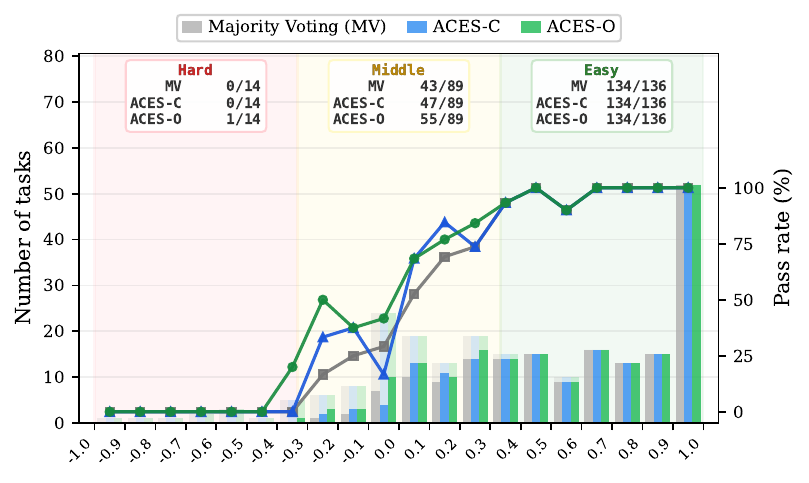}
    \caption{Pass@2}
  \end{subfigure}%
  \hfill
  \begin{subfigure}[b]{0.33\textwidth}
    \centering
    \includegraphics[width=\textwidth]{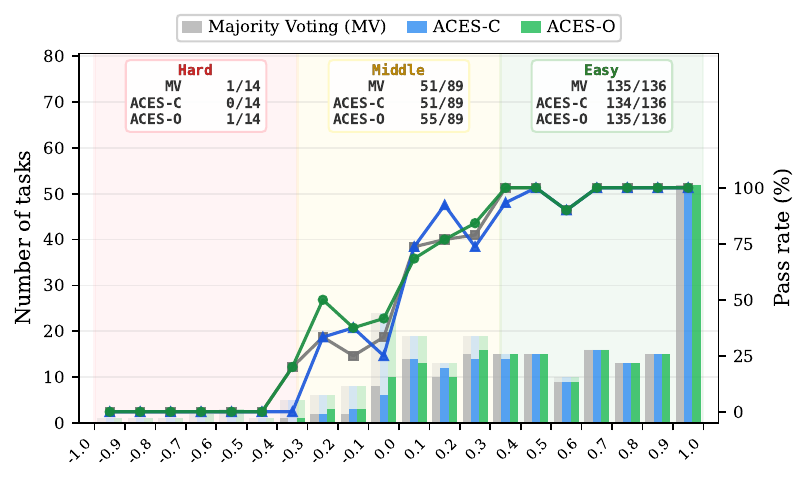}
    \caption{Pass@5}
  \end{subfigure}
  \caption{Assumption~\ref{asm:main} analysis on MBPP~(tasks binned by $\bar{\delta}$).
  ACES-O leads in the Middle region across all $k$; the gap narrows as $k$ increases.}
  \label{fig:app-assumption-mbpp}
\end{figure}

\subsection{Selection vs.\ Weighting}
\label{app:ablation-selection}

ACES-C combines two effects: \emph{test selection}~(filtering out misleading tests via the $\max(0, \cdot)$ threshold) and \emph{non-uniform weighting}~(assigning higher weight to more discriminative tests).
To disentangle their contributions, we first establish a theoretical guarantee for filtering alone, then empirically quantify the additional benefit of weighting.

\textbf{Theoretical motivation.}
By Theorem~\ref{lem:loo-identity}, $\E[\looauc_j(w)] - 1/2 = c_j(w) \cdot \delta_j$, where $c_j(w_{\mathrm{unif}}) > 0$ for all~$j$ under Assumption~\ref{asm:main}~(Proposition~\ref{thm:loo-quality}).
Since the proportionality constant is positive, the sign is preserved:
\begin{equation*}
  \mathrm{sign}\!\left(\E[\looauc_j(w_{\mathrm{unif}})] - \tfrac{1}{2}\right) \;=\; \mathrm{sign}(\delta_j).
\end{equation*}
That is, tests with $\looauc_j > 1/2$ are informative~($\delta_j > 0$) in expectation, and those with $\looauc_j < 1/2$ are misleading~($\delta_j < 0$).

If one could perfectly identify and remove all misleading tests~($\delta_j \leq 0$), assigning uniform weights to the remaining $m'$ informative tests would provably improve the signal-to-noise ratio.
Let $R(w_{\mathrm{unif}}) = m\bar{\delta}^2$ denote the SNR of Majority Voting~(Section~\ref{sec:framework}).
Uniform weighting over only the $m'$ tests with $\delta_j > 0$ gives:
\begin{equation*}
  \frac{R(w_{\mathrm{filter}})}{R(w_{\mathrm{unif}})}
  \;=\; \underbrace{\left(\frac{\textstyle\sum_{j:\,\delta_j>0} \delta_j}{\textstyle\sum_{j=1}^m \delta_j}\right)^{\!2}}_{\geq\,1} \cdot \underbrace{\frac{m}{m'}}_{\geq\,1}
  \;\geq\; 1,
\end{equation*}
since removing non-positive $\delta_j$ terms can only increase the sum, and fewer tests reduce $\sum_j w_j^2$.
By Theorem~\ref{thm:hoeffding}, a higher $R(w)$ yields a tighter Pass@$k$ bound.

The sign identity above provides exactly such a criterion from the pass matrix alone: threshold at $\looauc_j = 1/2$.
We define ACES-C Filter as:
\begin{equation*}
  w_j^{\mathrm{filter}} \;=\; \begin{cases} 1/m' & \text{if } \looauc_j(w_{\mathrm{unif}}) > 1/2, \\ 0 & \text{otherwise,} \end{cases}
  \qquad m' = |\{j : \looauc_j(w_{\mathrm{unif}}) > 1/2\}|.
\end{equation*}

\textbf{Experimental design.}
We compare three methods:
(1)~Majority Voting~(uniform weights on all tests),
(2)~ACES-C Filter~(discard tests with $\looauc_j \leq 1/2$, uniform weights on the rest), and
(3)~ACES-C~(LOO-AUC excess weighting, Eq.~\eqref{eq:aces-weight}).

\begin{table}[H]
  \centering
  \caption{Selection vs.\ weighting~(Pass@1). Subscripts denote the change from GPT-3.5-Turbo direct inference. ACES-C Filter isolates the contribution of test selection; ACES-C adds non-uniform weighting on top.}
  \label{tab:ablation-selection}
  \begin{tabular}{lccc}
    \toprule
    Method & HumanEval & HumanEval$^+$ & MBPP \\
    \midrule
    GPT-3.5-Turbo   & 68.38 & 58.75 & 66.80 \\
    \midrule
    Majority Voting  & 80.49\up{12.1} & 69.51\up{10.8} & 68.62\up{1.8} \\
    ACES-C Filter    & 81.10\up{12.7} & 69.51\up{10.8} & 69.32\up{2.5} \\
    ACES-C           & 82.93\up{14.6} & 71.34\up{12.6} & 71.19\up{4.4} \\
    \bottomrule
  \end{tabular}
\end{table}

\textbf{Results.}
Consistent with the theoretical guarantee, ACES-C Filter matches or improves over Majority Voting on all three benchmarks~(+0.61 on HumanEval, +0.00 on HumanEval$^+$, +0.70 on MBPP).
However, the gains are modest because most misleading tests have $|\delta_j|$ close to zero~(Figure~\ref{fig:loo-scatter}), so removing them changes $R(w)$ only slightly.
The additional gain from non-uniform weighting~(ACES-C vs.\ ACES-C Filter) is substantially larger~(+1.83 on HumanEval, +1.83 on HumanEval$^+$, +1.87 on MBPP), consistently accounting for over 70\% of the total improvement.
This demonstrates that the primary value of LOO-AUC lies not in binary test filtering, but in the continuous weight magnitudes that amplify informative tests proportionally to their discriminative power.

\subsection{Sensitivity to Number of Tests}
\label{app:sensitivity-m}

Figure~\ref{fig:sensitivity-m} shows Pass@1 as a function of the number of available tests~$m'$, where tests are randomly subsampled from the full test suite~(averaged over 5 trials).
All methods use the same configuration as in the main experiments~(Section~\ref{sec:exp-setup}).

All three methods improve consistently with~$m'$, but their scaling behaviors differ substantially.

MV plateaus around $m' = 50$--$100$.
Since uniform weighting treats informative and misleading tests equally, adding more tests of mixed quality provides diminishing marginal benefit once the pool is large enough.
Both ACES variants continue improving beyond this plateau: more tests provide better LOO-AUC estimates, enabling finer-grained weight differentiation between informative and misleading tests.
At $m' = 100$, ACES-C already outperforms MV at full~$m' = 500$ on all benchmarks, showing that intelligent weighting is more valuable than simply collecting more tests.

The gap between ACES-O and ACES-C generally widens with~$m'$: at small~$m'$ the two variants perform similarly, but with more tests ACES-O's optimization can exploit the richer signal to further refine the weights.
Practically, even a modest test budget~($m' \approx 100$) suffices for the LOO-AUC weighting to capture most of the available signal.

\begin{figure}[H]
  \centering
  \includegraphics[width=\textwidth]{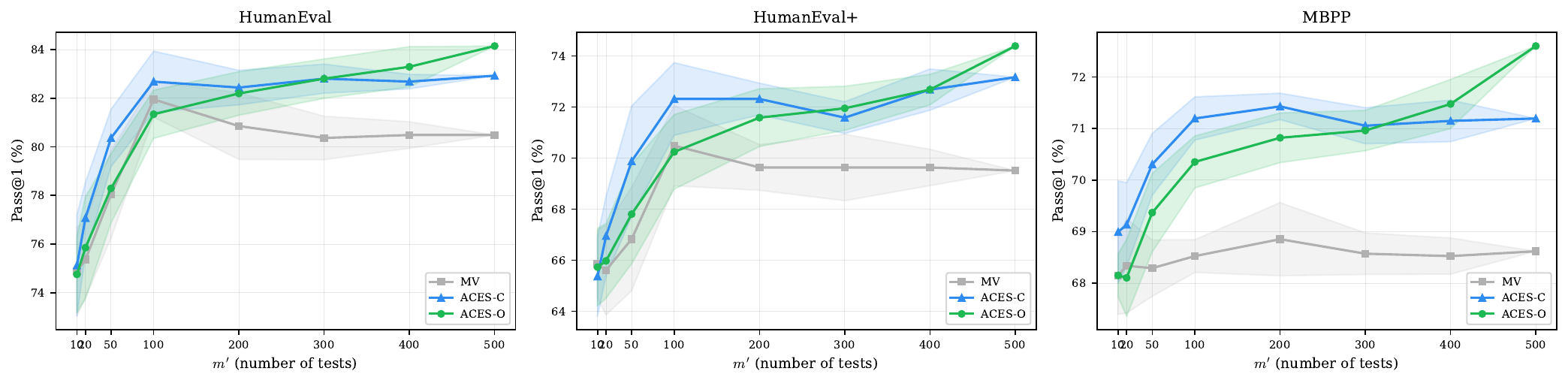}
  \caption{Sensitivity to the number of tests~$m'$. Tests are randomly subsampled; results averaged over 5 trials~(shaded: $\pm$ std). MV plateaus around $m'{=}50$--$100$; both ACES variants continue improving and surpass MV at $m'{=}500$ with only $m'{=}100$ tests.}
  \label{fig:sensitivity-m}
\end{figure}

\subsection{Sensitivity to Number of Candidate Codes}
\label{app:sensitivity-n}

Figure~\ref{fig:sensitivity-n} shows Pass@1 as a function of the number of candidate codes~$n'$, where codes are stratified-subsampled to preserve the correct/incorrect ratio~$\pi$~(averaged over 10 trials).
All $m$ tests are retained; all methods use the same configuration as in the main experiments.

All three methods are stable with respect to~$n'$~(standard deviations $\leq 1.6\%$ throughout), but their scaling behaviors differ.

MV is approximately flat across all~$n'$, and on MBPP it \emph{decreases} from 70.8\%~($n' = 20$) to 68.6\%~($n' = 200$): with more candidates~(most of which are incorrect), there are more incorrect codes that may outscore the best correct one under equal weighting.
ACES-C maintains performance close to its full-data result even at $n' = 20$.
This is because ACES-C computes weights in closed form from the pass matrix; even a small candidate pool suffices to produce meaningful LOO-AUC estimates.
ACES-O benefits the most from additional candidates.
At small~$n'$, the optimization has too few pairwise comparisons between passing and failing codes to learn reliable weights; as~$n'$ grows, the gradient signal becomes richer and ACES-O achieves the best performance at $n' = 200$ on all benchmarks.

\begin{figure}[H]
  \centering
  \includegraphics[width=\textwidth]{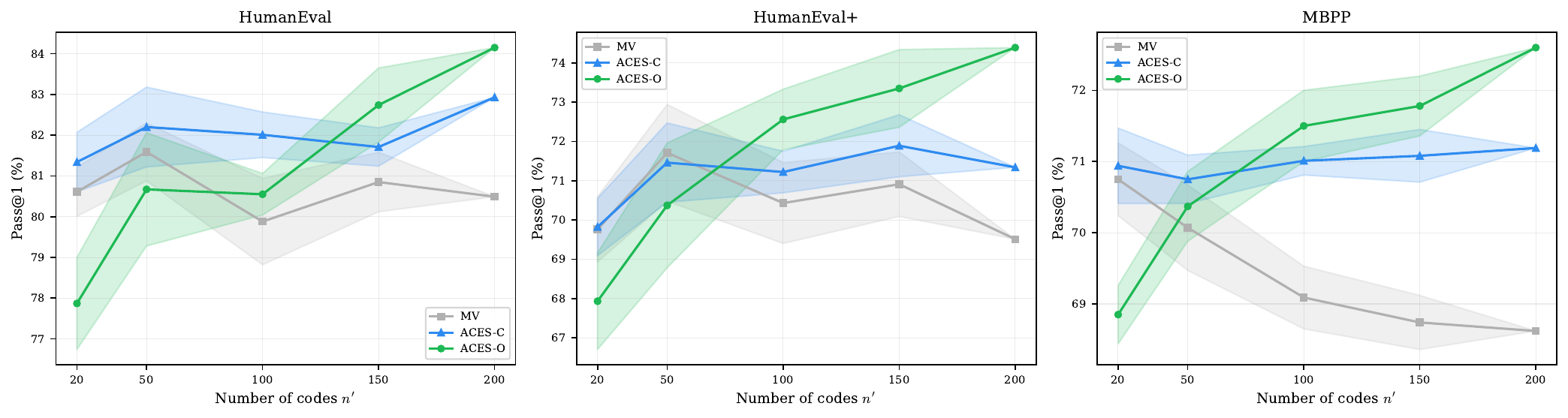}
  \caption{Sensitivity to the number of candidate codes~$n'$. Codes are stratified-subsampled to preserve the correct/incorrect ratio; results averaged over 10 trials~(shaded: $\pm$ std). MV is flat or declining; ACES-C is robust even at small~$n'$; ACES-O improves steadily and leads at $n'{=}200$.}
  \label{fig:sensitivity-n}
\end{figure}

\textbf{Summary of data sensitivity~(Appendices~\ref{app:sensitivity-m}--\ref{app:sensitivity-n}).}
The two sensitivity analyses reveal complementary strengths of the three methods.
MV plateaus early with~$m'$ and can even degrade with larger~$n'$, since uniform weighting cannot exploit additional tests or candidates.
ACES-C is robust to both~$m'$ and~$n'$, requiring neither many tests nor many candidates to achieve strong performance; it is the safest choice under limited data budgets.
ACES-O achieves the best performance when both~$m'$ and~$n'$ are sufficiently large, as the optimization benefits from richer pairwise signal; in the standard setting~($n {=} 200$, $m \approx 500$), it consistently leads on all benchmarks.

\subsection{Effect of Pre-Filtering Cutoff}
\label{app:sensitivity-K}

Figure~\ref{fig:sensitivity-K} shows the effect of the pre-filtering cutoff~$K$ on both ACES variants.
For each~$K$, we select the top-$K$ candidates by majority vote, run the reranking method on the resulting $K \times m$ submatrix, and score all $n = 200$ codes with the learned weights.

The two variants exhibit complementary trends, each excelling in a different regime.
ACES-C generally improves as~$K$ increases, reaching its best performance at $K = 200$~(no pre-filtering) on all benchmarks.
This is consistent with the findings in Appendix~\ref{app:sensitivity-n}: more codes provide better LOO-AUC estimates for the closed-form weighting.

ACES-O achieves its best results at small to moderate~$K$~(8--32), where the pre-filtered candidate pool concentrates on high-quality codes and provides a cleaner optimization landscape.
Performance is strong across a wide range of~$K$ values and declines gradually for larger~$K$, since including many low-quality candidates introduces noise into the pairwise comparisons.

These complementary profiles are consistent with the design of the two variants: ACES-C operates on all~$n$ candidates to maximize its closed-form estimates, while ACES-O benefits from a focused candidate pool for optimization.

\begin{figure}[H]
  \centering
  \includegraphics[width=\textwidth]{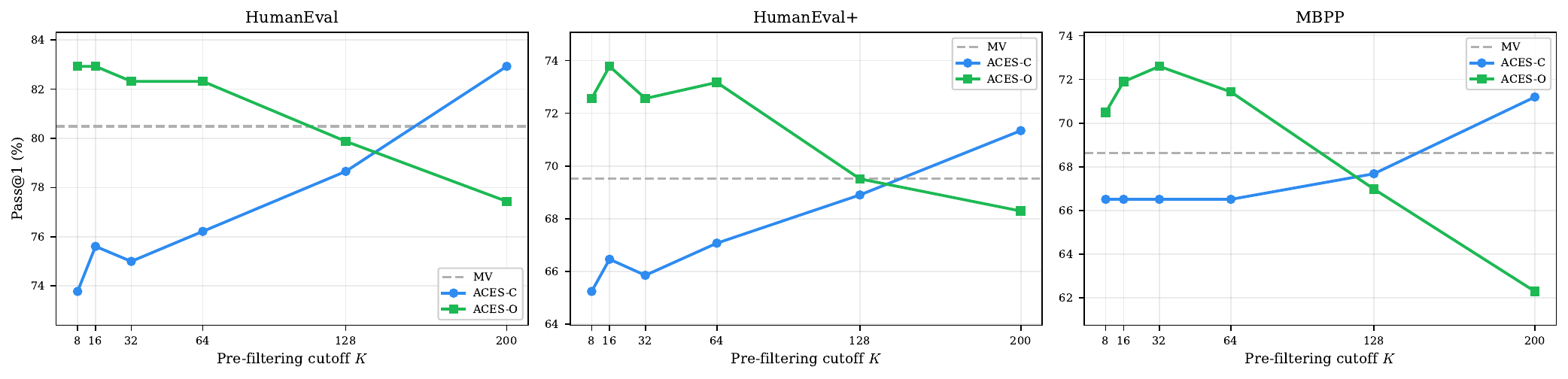}
  \caption{Sensitivity to the pre-filtering cutoff~$K$. ACES-C~(blue) benefits from larger candidate pools; ACES-O~(green) benefits from focused pre-filtering. Dashed line: MV baseline~(independent of~$K$).}
  \label{fig:sensitivity-K}
\end{figure}

\subsection{ACES-O Convergence}
\label{app:convergence}

Figure~\ref{fig:convergence} shows the convergence behavior of ACES-O over $T = 300$ iterations, averaged across all tasks per benchmark.
The top row plots the surrogate objective $J(w) = \sum_j w_j \bigl(\widehat{\looauc}_j(w) - \tfrac{1}{2}\bigr)$~(Eq.~\ref{eq:objective}), where $\widehat{\looauc}_j$ is the logistic surrogate~(Eq.~\ref{eq:logistic-auc}).
The bottom row plots the corresponding Pass@1 (smoothed with a 15-step moving average; raw values shown in light green).

The surrogate objective $J(w)$ increases smoothly and monotonically on all three benchmarks, with the steepest gains in the first 50 iterations and near-complete convergence by iteration 80--100.
Pass@1 reflects this stability: once the surrogate converges, the discrete Pass@1 metric remains essentially flat, fluctuating by at most 0.6 percentage points between $T = 50$ and $T = 300$ on all benchmarks.
ACES-O surpasses both MV and ACES-C early in the optimization and maintains this advantage throughout, confirming that the objective $J(w)$ is well-aligned with downstream Pass@$k$ performance.
The results are robust to the choice of~$T$: any value in the range $[50, 300]$ yields comparable performance.

\begin{figure}[H]
  \centering
  \includegraphics[width=\textwidth]{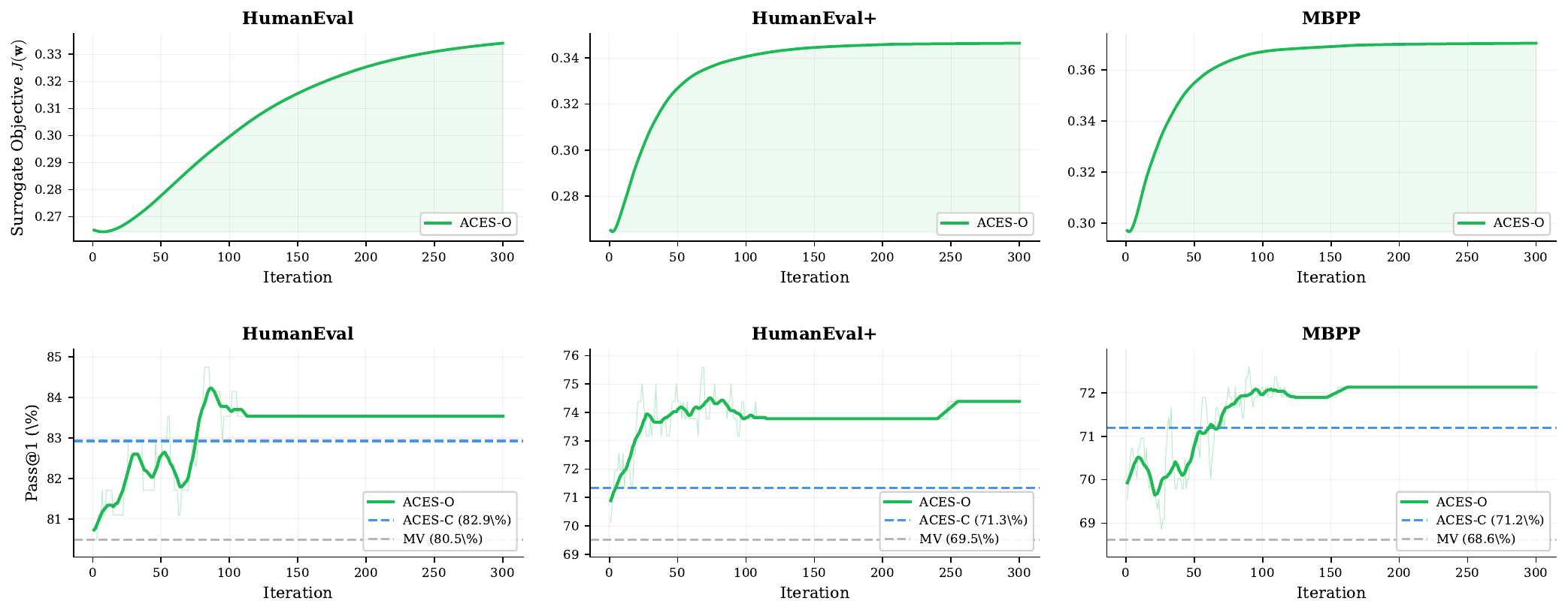}
  \caption{ACES-O convergence over 300 iterations, averaged across all tasks. \textbf{Top:} surrogate objective~$J(\mathbf{w})$ converges by iteration 80--100. \textbf{Bottom:} Pass@1~(smoothed; raw in light green) surpasses both MV~(dashed gray) and ACES-C~(dashed blue) and remains stable for $T \in [50, 300]$.}
  \label{fig:convergence}
\end{figure}

\subsection{ACES-O Hyperparameter Sensitivity}
\label{app:sensitivity-hp}

Figure~\ref{fig:sensitivity-hp} shows the effect of varying the two key ACES-O hyperparameters independently: the logistic surrogate sharpness~$\gamma$~(top row) and the learning rate~$\eta$~(bottom row).
When sweeping one parameter, the other is held at its default value.

Both hyperparameters exhibit broad stable regions.
For~$\gamma$, the only requirement is that the logistic surrogate be sharp enough to approximate the indicator function: $\gamma \leq 2$ is insufficient, but any $\gamma \geq 5$ yields strong performance, with the entire range $[5, 50]$ varying by fewer than 2 percentage points on all benchmarks.
For~$\eta$, the stable region is even wider: $\eta \in [0.005, 0.5]$ consistently outperforms MV, again varying by at most 2 percentage points.
The values used in the main experiments fall well within these stable regions.

\begin{figure}[H]
  \centering
  \includegraphics[width=\textwidth]{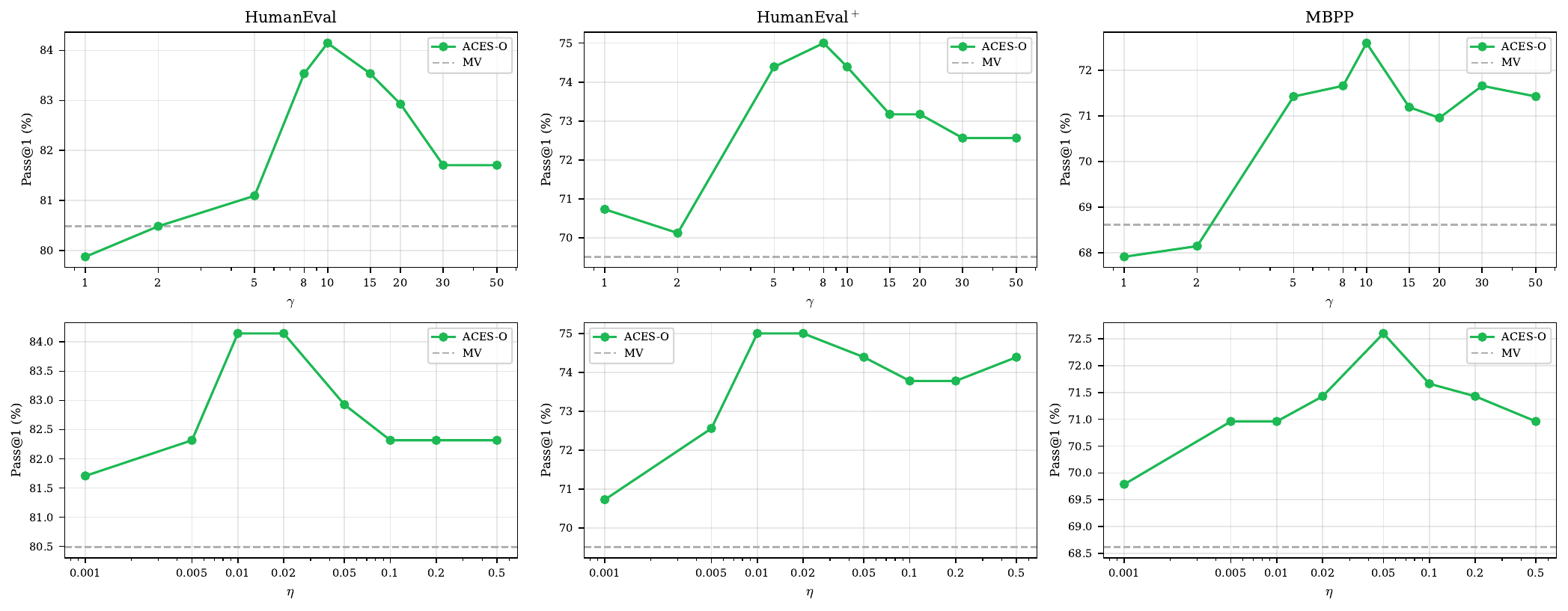}
  \caption{ACES-O hyperparameter sensitivity. \textbf{Top:} Pass@1 vs.\ surrogate sharpness~$\gamma$~(other parameters at defaults). \textbf{Bottom:} Pass@1 vs.\ learning rate~$\eta$. Dashed gray line: MV baseline. For $\gamma \geq 5$ and $\eta \in [0.005, 0.5]$, ACES-O consistently outperforms MV on all benchmarks.}
  \label{fig:sensitivity-hp}
\end{figure}

\subsection{Computational Cost}
\label{app:runtime}

Figure~\ref{fig:runtime} compares the reranking time per task for five methods~($n = 200$, $m = 500$), measured on a single CPU core of an Apple M3 MacBook Air~(16\,GB RAM) and averaged over all tasks.
We time only the reranking step; shared upstream costs~(code generation, test generation, code execution) are excluded.
To ensure a fair comparison, all execution outputs are pre-hashed before timing, so that string comparisons in output-level methods~(MBR-exec) reduce to integer comparisons.

All methods finish within one second per task.
MV, CodeT, ACES-C, and ACES-O operate solely on the binary pass matrix; among these, ACES-C~(9\,ms) adds negligible overhead over MV~(4\,ms) thanks to its closed-form computation.
ACES-O requires ${\sim}0.85$\,s for iterative optimization, comparable to MBR-exec~(${\sim}0.38$\,s), which incurs $O(n^2 m)$ pairwise output comparisons.
In practice, the reranking cost for all methods is negligible relative to upstream code execution, which dominates wall-clock time.
Note that all methods compared above operate solely on execution results~(the binary pass matrix or output strings).
Methods that incorporate static analysis or invoke LLMs for repeated evaluation incur substantially higher costs, often orders of magnitude greater, due to the expense of additional model inference, and are therefore not included in this timing comparison.

\begin{figure}[H]
  \centering
  \includegraphics[width=0.55\textwidth]{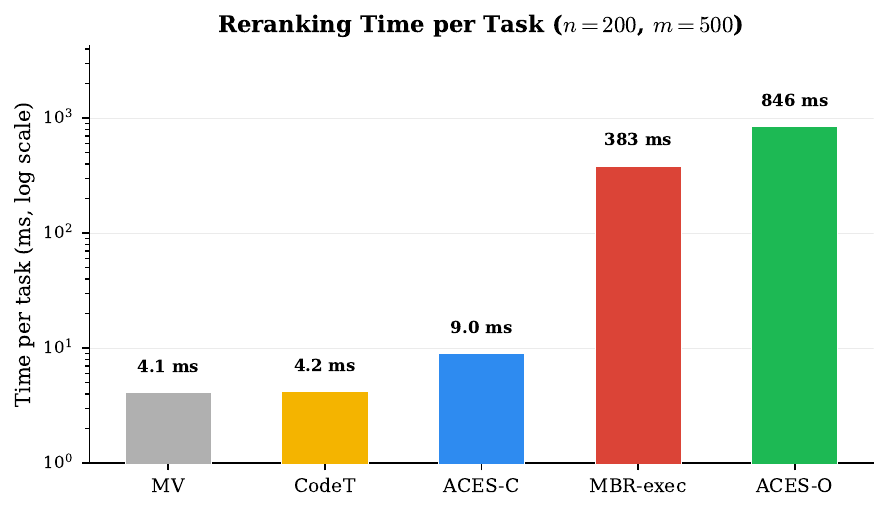}
  \caption{Reranking time per task~(log scale, $n{=}200$, $m{=}500$, single CPU core). Shared upstream costs are excluded; execution outputs are pre-hashed for fair comparison. All methods finish within one second.}
  \label{fig:runtime}
\end{figure}

\subsection{Pairwise Vote Statistics}
\label{app:noise-stats}

To complement the assumption analysis, we directly measure the empirical distribution of the three pairwise vote types defined in Table~\ref{tab:noise}.
For each non-trivial task, we enumerate all $(c^+, c^-, t_j)$ triples and classify each vote $h_j = B_{c^+,j} - B_{c^-,j}$ as informative~($+1$), uninformative~($0$), or misleading~($-1$).
Table~\ref{tab:vote-stats} reports the aggregate proportions.

\begin{table}[H]
  \centering
  \caption{Pairwise vote distribution across benchmarks. Informative votes consistently dominate misleading ones, confirming Assumption~\ref{asm:main}.}
  \label{tab:vote-stats}
  \begin{tabular}{lcccc}
    \toprule
    Benchmark & Informative (\%) & Uninformative (\%) & Misleading (\%) & Inf/Mis ratio \\
    \midrule
    HumanEval      & 22.6 & 72.2 & 5.3 & 4.3$\times$ \\
    HumanEval$^+$  & 20.7 & 74.9 & 4.4 & 4.7$\times$ \\
    MBPP           & 30.7 & 59.1 & 10.2 & 3.0$\times$ \\
    \bottomrule
  \end{tabular}
\end{table}

Misleading votes are rare across all benchmarks~($\leq 10.2\%$), confirming that the majority of pairwise test--code interactions provide either correct or neutral ranking signal.
Uninformative votes dominate~(59--75\%): even after removing constant-column tests, most non-constant tests agree on any given $(c^+, c^-)$ pair, since a test that distinguishes some pairs may still assign the same outcome to others.
This highlights the value of concentrating weight on the subset of tests that actively discriminate.
MBPP has roughly twice the misleading rate of HumanEval~(10.2\% vs.\ 5.3\%), consistent with its lower assumption satisfaction rate~(71.1\% vs.\ 83.1\%; Table~\ref{tab:assumption-summary}).
The informative-to-misleading ratio remains $\geq 3\times$ on all benchmarks, providing a comfortable margin for Assumption~\ref{asm:main}.

\subsection{Generation Prompts}
\label{app:prompts}

\textbf{HumanEval, HumanEval$^+$, and MBPP.}
We use the prompts from the MPSC protocol~\citep{huang2024enhancing}, reproduced below:

\begin{promptbox}[Prompt for generating candidate solutions (HumanEval / HumanEval\textsuperscript{+} / MBPP)]
\begin{lstlisting}[style=prompt]
I want you to act like a Python programmer. I will give you the declaration of a function and comments about its property. You need to implement the body of the function in the code block. Do not modify any code I provide. Do not provide any explanations.

Here is the question.
```Python
<@\textcolor{blue}{\{Docstring\}}@>
```
\end{lstlisting}
\end{promptbox}

\begin{promptbox}[Prompt for generating test cases (HumanEval / HumanEval\textsuperscript{+} / MBPP)]
\begin{lstlisting}[style=prompt]
```Python
# Given a docstring, continue to write the
# following code with 10 valid assertion
# statements to check the correctness of the
# function. Provide diverse test cases.
<@\textcolor{blue}{\{Docstring\}}@>
    pass

# check the correctness of with 10 different
# valid assertion statements in the form of
# "assert <@\textcolor{magenta}{\{entry point\}}@>(...) == ..."
assert
\end{lstlisting}
\end{promptbox}

\textbf{LeetCodeDataset~(Appendix~\ref{app:additional-models}).}

\begin{promptbox}[Prompt for generating candidate solutions (LeetCodeDataset)]
\begin{lstlisting}[style=prompt]
You are an expert Python programmer. You will be given a question (problem specification) and will generate a correct Python program that matches the specification and passes all tests.

### Question:
<@\textcolor{blue}{\{question\}}@>

### Format: You will use the following starter code to write the solution to the problem and enclose your code within delimiters.
```python
<@\textcolor{blue}{\{starter\_code\}}@>
```

### Answer: (use the provided format with backticks)
\end{lstlisting}
\end{promptbox}

\begin{promptbox}[Prompt for generating test cases (LeetCodeDataset)]
\begin{lstlisting}[style=prompt]
You are an expert Python programmer. You will be given a question (problem specification) and starter code, and you will generate diverse and semantically correct Python assertion statements to test the target function.

### Question:
<@\textcolor{blue}{\{question\}}@>

### Starter Code:
```python
<@\textcolor{blue}{\{starter\_code\}}@>
```

### Target Call:
<@\textcolor{blue}{\{entry\_point\}}@>

### Format: Output exactly 10 raw Python assertion statements. Each line must start with `assert `. Use the exact target call that is specified. Do not output markdown, code fences, comments, explanations, helper functions, or any text other than the assertions.

### Answer:
\end{lstlisting}
\end{promptbox}
 

\begin{thebibliography}{52}
\providecommand{\natexlab}[1]{#1}
\providecommand{\url}[1]{\texttt{#1}}
\expandafter\ifx\csname urlstyle\endcsname\relax
  \providecommand{\doi}[1]{doi: #1}\else
  \providecommand{\doi}{doi: \begingroup \urlstyle{rm}\Url}\fi

\bibitem[Achiam et~al.(2023)Achiam, Adler, Agarwal, Ahmad, Akkaya, Aleman,
  Almeida, Altenschmidt, Altman, Anadkat, et~al.]{achiam2023gpt}
Josh Achiam, Steven Adler, Sandhini Agarwal, Lama Ahmad, Ilge Akkaya,
  Florencia~Leoni Aleman, Diogo Almeida, Janko Altenschmidt, Sam Altman,
  Shyamal Anadkat, et~al.
\newblock {GPT-4} technical report.
\newblock \emph{Arxiv Preprint Arxiv:2303.08774}, 2023.

\bibitem[Agarwal et~al.(2005)Agarwal, Graepel, Herbrich, Har-Peled, and
  Roth]{agarwal2005generalization}
Shivani Agarwal, Thore Graepel, Ralf Herbrich, Sariel Har-Peled, and Dan Roth.
\newblock Generalization bounds for the area under the roc curve.
\newblock \emph{Journal of Machine Learning Research}, 6\penalty0
  (14):\penalty0 393--425, 2005.

\bibitem[Austin et~al.(2021)Austin, Odena, Nye, Bosma, Michalewski, Dohan,
  Jiang, Cai, Terry, Le, et~al.]{austin2021program}
Jacob Austin, Augustus Odena, Maxwell Nye, Maarten Bosma, Henryk Michalewski,
  David Dohan, Ellen Jiang, Carrie Cai, Michael Terry, Quoc Le, et~al.
\newblock Program synthesis with large language models.
\newblock \emph{Arxiv Preprint Arxiv:2108.07732}, 2021.

\bibitem[Brown et~al.(2024)Brown, Juravsky, Ehrlich, Clark, Le, R{\'e}, and
  Mirhoseini]{brown2024large}
Bradley Brown, Jordan Juravsky, Ryan Ehrlich, Ronald Clark, Quoc~V Le,
  Christopher R{\'e}, and Azalia Mirhoseini.
\newblock Large language monkeys: Scaling inference compute with repeated
  sampling.
\newblock \emph{Arxiv Preprint Arxiv:2407.21787}, 2024.

\bibitem[Chen et~al.(2023)Chen, Zhang, Nguyen, Zan, Lin, Lou, and
  Chen]{chen2022codet}
Bei Chen, Fengji Zhang, Anh Nguyen, Daoguang Zan, Zeqi Lin, Jian-Guang Lou, and
  Weizhu Chen.
\newblock {CodeT}: Code generation with generated tests.
\newblock In \emph{The Eleventh International Conference on Learning
  Representations}, 2023.

\bibitem[Chen et~al.(2021)Chen, Tworek, Jun, Yuan, Pinto, Kaplan, Edwards,
  Burda, Joseph, Brockman, et~al.]{chen2021evaluating}
Mark Chen, Jerry Tworek, Heewoo Jun, Qiming Yuan, Henrique Ponde De~Oliveira
  Pinto, Jared Kaplan, Harri Edwards, Yuri Burda, Nicholas Joseph, Greg
  Brockman, et~al.
\newblock Evaluating large language models trained on code.
\newblock \emph{Arxiv Preprint Arxiv:2107.03374}, 2021.

\bibitem[Chen et~al.(2024)Chen, Lin, Sch{\"a}rli, and Zhou]{chen2023teaching}
Xinyun Chen, Maxwell Lin, Nathanael Sch{\"a}rli, and Denny Zhou.
\newblock Teaching large language models to self-debug.
\newblock In \emph{The Twelfth International Conference on Learning
  Representations}, 2024.

\bibitem[Cl{\'e}men{\c{c}}on et~al.(2008)Cl{\'e}men{\c{c}}on, Lugosi, and
  Vayatis]{clemenson2008ranking}
St{\'e}phan Cl{\'e}men{\c{c}}on, G{\'a}bor Lugosi, and Nicolas Vayatis.
\newblock Ranking and empirical minimization of u-statistics.
\newblock \emph{The Annals of Statistics}, pp.\  844--874, 2008.

\bibitem[Cobbe et~al.(2021)Cobbe, Kosaraju, Bavarian, Chen, Jun, Kaiser,
  Plappert, Tworek, Hilton, Nakano, et~al.]{cobbe2021training}
Karl Cobbe, Vineet Kosaraju, Mohammad Bavarian, Mark Chen, Heewoo Jun, Lukasz
  Kaiser, Matthias Plappert, Jerry Tworek, Jacob Hilton, Reiichiro Nakano,
  et~al.
\newblock Training verifiers to solve math word problems, 2021.
\newblock \emph{Arxiv Preprint Arxiv:2110.14168}, 2021.

\bibitem[Dawid \& Skene(1979)Dawid and Skene]{dawid1979maximum}
Alexander~Philip Dawid and Allan~M Skene.
\newblock Maximum likelihood estimation of observer error-rates using the em
  algorithm.
\newblock \emph{Journal of the Royal Statistical Society: Series C (applied
  Statistics)}, 28\penalty0 (1):\penalty0 20--28, 1979.

\bibitem[Dong et~al.(2024)Dong, Jiang, Jin, and Li]{dong2024self}
Yihong Dong, Xue Jiang, Zhi Jin, and Ge~Li.
\newblock Self-collaboration code generation via {ChatGPT}.
\newblock \emph{Acm Transactions on Software Engineering and Methodology},
  33\penalty0 (7), 2024.

\bibitem[Du et~al.(2025)Du, Sun, and Li]{cast_tse}
Yali Du, Hui Sun, and Ming Li.
\newblock Post-incorporating code structural knowledge into pretrained models
  via icl for code translation.
\newblock \emph{IEEE Transactions on Software Engineering}, 51\penalty0
  (11):\penalty0 3038--3055, 2025.

\bibitem[Eikema \& Aziz(2022)Eikema and Aziz]{eikema2022sampling}
Bryan Eikema and Wilker Aziz.
\newblock Sampling-based approximations to minimum {B}ayes risk decoding for
  neural machine translation.
\newblock In \emph{Proceedings of the 2022 Conference on Empirical Methods in
  Natural Language Processing}, pp.\  10978--10993. Association for
  Computational Linguistics, 2022.

\bibitem[Freund et~al.(2003)Freund, Iyer, Schapire, and
  Singer]{freund2003efficient}
Yoav Freund, Raj Iyer, Robert~E Schapire, and Yoram Singer.
\newblock An efficient boosting algorithm for combining preferences.
\newblock \emph{Journal of Machine Learning Research}, 4\penalty0
  (Nov):\penalty0 933--969, 2003.

\bibitem[Gao \& Zhou(2015)Gao and Zhou]{gao2015auc}
Wei Gao and Zhi-Hua Zhou.
\newblock On the consistency of {AUC} pairwise optimization.
\newblock In \emph{International Joint Conference on Artificial Intelligence},
  pp.\  939--945, 2015.

\bibitem[Guo et~al.(2024)Guo, Zhu, Yang, Xie, Dong, Zhang, Chen, Bi, Wu, Li,
  et~al.]{guo2024deepseek}
Daya Guo, Qihao Zhu, Dejian Yang, Zhenda Xie, Kai Dong, Wentao Zhang, Guanting
  Chen, Xiao Bi, Yifan Wu, YK~Li, et~al.
\newblock {DeepSeek-Coder}: When the large language model meets
  programming--the rise of code intelligence.
\newblock \emph{Arxiv Preprint Arxiv:2401.14196}, 2024.

\bibitem[Haddad(2022)]{haddad2022noise}
Dany Haddad.
\newblock Noise tolerance of learning to rank under class-conditional label
  noise.
\newblock \emph{Arxiv Preprint Arxiv:2208.02126}, 2022.

\bibitem[Hoeffding(1963)]{hoeffding1963probability}
Wassily Hoeffding.
\newblock Probability inequalities for sums of bounded random variables.
\newblock \emph{Journal of the American Statistical Association}, 58\penalty0
  (301):\penalty0 13--30, 1963.

\bibitem[Huang et~al.(2024)Huang, Lu, Wan, and Duan]{huang2024enhancing}
Baizhou Huang, Shuai Lu, Xiaojun Wan, and Nan Duan.
\newblock Enhancing large language models in coding through multi-perspective
  self-consistency.
\newblock In \emph{Proceedings of the 62nd Annual Meeting of the Association
  for Computational Linguistics (volume 1: Long Papers)}, pp.\  1429--1450.
  Association for Computational Linguistics, 2024.

\bibitem[Hui et~al.(2024)Hui, Yang, Cui, Yang, Liu, Zhang, Liu, Zhang, Yu, Lu,
  et~al.]{hui2024qwen2}
Binyuan Hui, Jian Yang, Zeyu Cui, Jiaxi Yang, Dayiheng Liu, Lei Zhang, Tianyu
  Liu, Jiajun Zhang, Bowen Yu, Keming Lu, et~al.
\newblock {Qwen2. 5-Coder} technical report.
\newblock \emph{Arxiv Preprint Arxiv:2409.12186}, 2024.

\bibitem[Li et~al.(2025{\natexlab{a}})Li, Cao, Cao, Li, Tan, Keutzer, Xing,
  Gonzalez, and Stoica]{li-etal-2025-test}
Dacheng Li, Shiyi Cao, Chengkun Cao, Xiuyu Li, Shangyin Tan, Kurt Keutzer,
  Jiarong Xing, Joseph~E. Gonzalez, and Ion Stoica.
\newblock {S}*: Test time scaling for code generation.
\newblock In \emph{Findings of the Association for Computational Linguistics:
  Emnlp 2025}, pp.\  15964--15978. Association for Computational Linguistics,
  2025{\natexlab{a}}.

\bibitem[Li et~al.(2024)Li, Fernandes, Gurevych, and Martins]{li2024doce}
Haau-Sing Li, Patrick Fernandes, Iryna Gurevych, and Andr{\'e}~FT Martins.
\newblock {DOCE}: Finding the sweet spot for execution-based code generation.
\newblock \emph{Arxiv Preprint Arxiv:2408.13745}, 2024.

\bibitem[Li et~al.(2025{\natexlab{b}})Li, Yuan, Yu, Guo, and Cao]{11098743}
Kefan Li, Yuan Yuan, Hongyue Yu, Tingyu Guo, and Shijie Cao.
\newblock {CoCoEvo}: Co-evolution of programs and test cases to enhance code
  generation.
\newblock \emph{Ieee Transactions on Evolutionary Computation}, pp.\  1--1,
  2025{\natexlab{b}}.

\bibitem[Li et~al.(2022)Li, Choi, Chung, Kushman, Schrittwieser, Leblond,
  Eccles, Keeling, Gimeno, Dal~Lago, et~al.]{li2022alphacode}
Yujia Li, David Choi, Junyoung Chung, Nate Kushman, Julian Schrittwieser,
  R{\'e}mi Leblond, Tom Eccles, James Keeling, Felix Gimeno, Agustin Dal~Lago,
  et~al.
\newblock Competition-level code generation with alphacode.
\newblock \emph{Science}, 378\penalty0 (6624):\penalty0 1092--1097, 2022.

\bibitem[Lightman et~al.(2024)Lightman, Kosaraju, Burda, Edwards, Baker, Lee,
  Leike, Schulman, Sutskever, and Cobbe]{lightman2023let}
Hunter Lightman, Vineet Kosaraju, Yuri Burda, Harrison Edwards, Bowen Baker,
  Teddy Lee, Jan Leike, John Schulman, Ilya Sutskever, and Karl Cobbe.
\newblock Let's verify step by step.
\newblock In \emph{The Twelfth International Conference on Learning
  Representations}, 2024.

\bibitem[Liu et~al.(2023)Liu, Xia, Wang, and ZHANG]{liu2023your}
Jiawei Liu, Chunqiu~Steven Xia, Yuyao Wang, and LINGMING ZHANG.
\newblock Is your code generated by chat{GPT} really correct? rigorous
  evaluation of large language models for code generation.
\newblock In \emph{Thirty-Seventh Conference on Neural Information Processing
  Systems}, 2023.

\bibitem[Liu et~al.(2025)Liu, Li, Yang, Sun, and Li]{pmlr-v267-liu25ah}
Ren-Biao Liu, Anqi Li, Chaoding Yang, Hui Sun, and Ming Li.
\newblock Revisiting {Chain-of-Thought} in code generation: Do language models
  need to learn reasoning before coding?
\newblock In \emph{Proceedings of the 42nd International Conference on Machine
  Learning}, volume 267, pp.\  38809--38826. PMLR, 13--19 Jul 2025.

\bibitem[Liu et~al.(2026)Liu, Xue, Ma, Sun, Li, and
  Li]{Liu_Xue_Ma_Sun_Li_Li_2026}
Ren-Biao Liu, Jiang-Tian Xue, Chao-Zeng Ma, Hui Sun, Xin-Ye Li, and Ming Li.
\newblock {Dynamic-Static} synergistic selection method for candidate code
  solutions with generated test cases.
\newblock In \emph{Proceedings of the Aaai Conference on Artificial
  Intelligence}, pp.\  32096--32104, 2026.

\bibitem[Liu(2010)]{liu2009learning}
Tie-Yan Liu.
\newblock Learning to rank for information retrieval.
\newblock In \emph{Proceedings of the 33rd International Acm Sigir Conference
  on Research and Development in Information Retrieval}, pp.\  904. Association
  for Computing Machinery, 2010.

\bibitem[Lord \& Novick(2008)Lord and Novick]{lord2008statistical}
Frederic~M Lord and Melvin~R Novick.
\newblock \emph{Statistical Theories of Mental Test Scores}.
\newblock IAP, 2008.

\bibitem[Lozhkov et~al.(2024)Lozhkov, Li, Allal, Cassano, Lamy-Poirier, Tazi,
  Tang, Pykhtar, Liu, Wei, et~al.]{lozhkov2024starcoder}
Anton Lozhkov, Raymond Li, Loubna~Ben Allal, Federico Cassano, Joel
  Lamy-Poirier, Nouamane Tazi, Ao~Tang, Dmytro Pykhtar, Jiawei Liu, Yuxiang
  Wei, et~al.
\newblock Starcoder 2 and the stack v2: The next generation.
\newblock \emph{Arxiv Preprint Arxiv:2402.19173}, 2024.

\bibitem[Lukasik et~al.(2025)Lukasik, Chen, Narasimhan, Menon, Jitkrittum, Yu,
  J.~Reddi, Fu, Bateni, and Kumar]{pmlr-v267-lukasik25a}
Michal Lukasik, Lin Chen, Harikrishna Narasimhan, Aditya~Krishna Menon,
  Wittawat Jitkrittum, Felix~X. Yu, Sashank J.~Reddi, Gang Fu, Mohammadhossein
  Bateni, and Sanjiv Kumar.
\newblock Bipartite ranking from multiple labels: On loss versus label
  aggregation.
\newblock In \emph{Proceedings of the 42nd International Conference on Machine
  Learning}, volume 267, pp.\  41074--41102. PMLR, 13--19 Jul 2025.

\bibitem[Luo et~al.(2024)Luo, Xu, Zhao, Sun, Geng, Hu, Tao, Ma, Lin, and
  Jiang]{luo2024wizardcoder}
Ziyang Luo, Can Xu, Pu~Zhao, Qingfeng Sun, Xiubo Geng, Wenxiang Hu, Chongyang
  Tao, Jing Ma, Qingwei Lin, and Daxin Jiang.
\newblock {WizardCoder}: Empowering code large language models with
  evol-instruct.
\newblock In \emph{The Twelfth International Conference on Learning
  Representations}, 2024.

\bibitem[Nguyen et~al.(2024)Nguyen, Ibrahim, and Fu]{nguyen2024noisy}
Tri Nguyen, Shahana Ibrahim, and Xiao Fu.
\newblock Noisy label learning with instance-dependent outliers:
  Identifiability via crowd wisdom.
\newblock In \emph{Advances in Neural Information Processing Systems}, 2024.

\bibitem[Ni et~al.(2023)Ni, Iyer, Radev, Stoyanov, Yih, Wang, and
  Lin]{ni2023lever}
Ansong Ni, Srini Iyer, Dragomir Radev, Veselin Stoyanov, Wen-tau Yih, Sida
  Wang, and Xi~Victoria Lin.
\newblock {LEVER}: Learning to verify language-to-code generation with
  execution.
\newblock In \emph{Proceedings of the 40th International Conference on Machine
  Learning}, pp.\  26106--26128. PMLR, 2023.

\bibitem[Roziere et~al.(2023)Roziere, Gehring, Gloeckle, Sootla, Gat, Tan, Adi,
  Liu, Sauvestre, Remez, et~al.]{roziere2023code}
Baptiste Roziere, Jonas Gehring, Fabian Gloeckle, Sten Sootla, Itai Gat,
  Xiaoqing~Ellen Tan, Yossi Adi, Jingyu Liu, Romain Sauvestre, Tal Remez,
  et~al.
\newblock Code llama: Open foundation models for code.
\newblock \emph{Arxiv Preprint Arxiv:2308.12950}, 2023.

\bibitem[Shah \& Wainwright(2018)Shah and Wainwright]{shah2018simple}
Nihar~B. Shah and Martin~J. Wainwright.
\newblock Simple, robust and optimal ranking from pairwise comparisons.
\newblock \emph{Journal of Machine Learning Research}, 18\penalty0
  (199):\penalty0 1--38, 2018.

\bibitem[Shi et~al.(2022)Shi, Fried, Ghazvininejad, Zettlemoyer, and
  Wang]{shi2022natural}
Freda Shi, Daniel Fried, Marjan Ghazvininejad, Luke Zettlemoyer, and Sida~I.
  Wang.
\newblock Natural language to code translation with execution.
\newblock In \emph{Proceedings of the 2022 Conference on Empirical Methods in
  Natural Language Processing}, pp.\  3533--3546. Association for Computational
  Linguistics, 2022.

\bibitem[Snell et~al.(2025)Snell, Lee, Xu, and Kumar]{snell2025scaling}
Charlie~Victor Snell, Jaehoon Lee, Kelvin Xu, and Aviral Kumar.
\newblock Scaling {LLM} test-time compute optimally can be more effective than
  scaling parameters for reasoning.
\newblock In \emph{The Thirteenth International Conference on Learning
  Representations}, 2025.

\bibitem[Sun et~al.(2024)Sun, Wan, Li, Zhang, Jin, Li, and Lyu]{sun2024sifting}
Zhihong Sun, Yao Wan, Jia Li, Hongyu Zhang, Zhi Jin, Ge~Li, and Chen Lyu.
\newblock Sifting through the chaff: On utilizing execution feedback for
  ranking the generated code candidates.
\newblock pp.\  229--241. Association for Computing Machinery, 2024.

\bibitem[Taherkhani et~al.(2026)Taherkhani, DaghighFarsoodeh, Chowdhury, Pham,
  and Hemmati]{convertest2026}
Hamed Taherkhani, Alireza DaghighFarsoodeh, Mohammad Chowdhury, Hung~Viet Pham,
  and Hadi Hemmati.
\newblock Consistency meets verification: Enhancing test generation quality in
  large language models without ground-truth solutions.
\newblock \emph{Arxiv Preprint Arxiv:2602.10522}, 2026.

\bibitem[To et~al.(2024)To, Huynh~Nguyen, and Bui]{to2024functional}
Hung~Quoc To, Minh Huynh~Nguyen, and Nghi D.~Q. Bui.
\newblock Functional overlap reranking for neural code generation.
\newblock In \emph{Findings of the Association for Computational Linguistics:
  Acl 2024}, pp.\  3686--3704. Association for Computational Linguistics, 2024.

\bibitem[Wang et~al.(2023{\natexlab{a}})Wang, Wei, Schuurmans, Le, Chi, Narang,
  Chowdhery, and Zhou]{wang2022self}
Xuezhi Wang, Jason Wei, Dale Schuurmans, Quoc~V Le, Ed~H. Chi, Sharan Narang,
  Aakanksha Chowdhery, and Denny Zhou.
\newblock Self-consistency improves chain of thought reasoning in language
  models.
\newblock In \emph{The Eleventh International Conference on Learning
  Representations}, 2023{\natexlab{a}}.

\bibitem[Wang et~al.(2025)Wang, Yang, Tian, Shen, and Wang]{wang2025cure}
Yinjie Wang, Ling Yang, Ye~Tian, Ke~Shen, and Mengdi Wang.
\newblock {CURE}: Co-evolving coders and unit testers via reinforcement
  learning.
\newblock In \emph{The Thirty-Ninth Annual Conference on Neural Information
  Processing Systems}, 2025.

\bibitem[Wang et~al.(2023{\natexlab{b}})Wang, Xu, Yang, He, Cao, and
  Huang]{wang2022optimizing}
Zitai Wang, Qianqian Xu, Zhiyong Yang, Yuan He, Xiaochun Cao, and Qingming
  Huang.
\newblock Optimizing partial area under the top-k curve: Theory and practice.
\newblock \emph{Ieee Transactions on Pattern Analysis and Machine
  Intelligence}, 45\penalty0 (4):\penalty0 5053--5069, 2023{\natexlab{b}}.

\bibitem[Wu et~al.(2025)Wu, Sun, Li, Welleck, and Yang]{wu2024inference}
Yangzhen Wu, Zhiqing Sun, Shanda Li, Sean Welleck, and Yiming Yang.
\newblock Inference scaling laws: An empirical analysis of compute-optimal
  inference for {LLM} problem-solving.
\newblock In \emph{The Thirteenth International Conference on Learning
  Representations}, 2025.

\bibitem[Xia et~al.(2025)Xia, Shen, Wang, Liu, Sun, Wu, Hu, and
  Xu]{xia2025leetcodedataset}
Yunhui Xia, Wei Shen, Yan Wang, Jason~Klein Liu, Huifeng Sun, Siyue Wu, Jian
  Hu, and Xiaolong Xu.
\newblock {LeetCodeDataset}: A temporal dataset for robust evaluation and
  efficient training of code {LLMs}.
\newblock \emph{arXiv preprint arXiv:2504.14655}, 2025.

\bibitem[Xie et~al.(2024)Xie, Liu, He, Li, and Zhou]{xie2023wsauc}
Zheng Xie, Yu~Liu, Hao-Yuan He, Ming Li, and Zhi-Hua Zhou.
\newblock Weakly supervised auc optimization: A unified partial auc approach.
\newblock \emph{Ieee Transactions on Pattern Analysis and Machine
  Intelligence}, 46\penalty0 (7):\penalty0 4780--4795, 2024.

\bibitem[Yang et~al.(2025)Yang, Kuang, Xia, and Zhao]{yang-etal-2025-llms}
Zheyuan Yang, Zexi Kuang, Xue Xia, and Yilun Zhao.
\newblock Can {LLM}s generate high-quality test cases for algorithm problems?
  {T}est{C}ase-eval: A systematic evaluation of fault coverage and exposure.
\newblock In \emph{Proceedings of the 63rd Annual Meeting of the Association
  for Computational Linguistics (volume 2: Short Papers)}, pp.\  1050--1063.
  Association for Computational Linguistics, 2025.

\bibitem[Zhang et~al.(2023{\natexlab{a}})Zhang, Wang, Xia, Wang, and
  Li]{zhang2023algo}
Kexun Zhang, Danqing Wang, Jingtao Xia, William~Yang Wang, and Lei Li.
\newblock {ALGO}: Synthesizing algorithmic programs with generated oracle
  verifiers.
\newblock In \emph{Thirty-Seventh Conference on Neural Information Processing
  Systems}, 2023{\natexlab{a}}.

\bibitem[Zhang et~al.(2023{\natexlab{b}})Zhang, Yu, Hashimoto, Lewis, Yih,
  Fried, and Wang]{zhang2023coder}
Tianyi Zhang, Tao Yu, Tatsunori Hashimoto, Mike Lewis, Wen-tau Yih, Daniel
  Fried, and Sida Wang.
\newblock Coder reviewer reranking for code generation.
\newblock In \emph{Proceedings of the 40th International Conference on Machine
  Learning}, pp.\  41832--41846. PMLR, 2023{\natexlab{b}}.

\bibitem[Zheng et~al.(2023)Zheng, Chiang, Sheng, Zhuang, Wu, Zhuang, Lin, Li,
  Li, Xing, Zhang, Gonzalez, and Stoica]{zheng2023judging}
Lianmin Zheng, Wei-Lin Chiang, Ying Sheng, Siyuan Zhuang, Zhanghao Wu, Yonghao
  Zhuang, Zi~Lin, Zhuohan Li, Dacheng Li, Eric Xing, Hao Zhang, Joseph~E.
  Gonzalez, and Ion Stoica.
\newblock Judging {LLM}-as-a-judge with {MT}-bench and chatbot arena.
\newblock In \emph{Thirty-Seventh Conference on Neural Information Processing
  Systems Datasets and Benchmarks Track}, 2023.

\end{thebibliography}
\end{document}